\documentclass{article}
\PassOptionsToPackage{numbers}{natbib}
\usepackage{wrapfig}

\usepackage[preprint]{neurips_2026}

\usepackage[utf8]{inputenc} 
\usepackage[T1]{fontenc}    
\usepackage{hyperref}       
\usepackage{url}            
\usepackage{booktabs}       
\usepackage{amsfonts}       
\usepackage{nicefrac}       
\usepackage{microtype}      
\usepackage{xcolor}         

\usepackage{microtype}
\usepackage{graphicx}
\usepackage{subcaption}
\usepackage{enumitem}

\usepackage{amsmath}
\usepackage{amssymb}
\usepackage{mathtools}
\usepackage{amsthm}

\usepackage{algorithm}
\usepackage{algpseudocode}

\theoremstyle{plain}
\newtheorem{theorem}{Theorem}[section]
\newtheorem{proposition}[theorem]{Proposition}
\newtheorem{lemma}[theorem]{Lemma}
\newtheorem{corollary}[theorem]{Corollary}
\theoremstyle{definition}
\newtheorem{definition}[theorem]{Definition}

\theoremstyle{remark}

\newcommand{\bP}{\mathbb{P}}

\title{Differentially Private Model Merging}

\author{%
  Qichuan Yin\thanks{Corresponding author: \texttt{qichuan@uchicago.edu}} \\
  The University of Chicago \\
  \And
  Manzil Zaheer \\
  Google DeepMind \\
  \And
  Tian Li \\
  The University of Chicago \\
}

\begin{document}

\maketitle

\begin{abstract}
  In machine learning, privacy requirements at inference or deployment time often evolve due to changing policies, regulations, or user preferences. In this work, we aim to construct a magnitude of models to satisfy any target differential privacy (DP) requirement \textit{without additional training}, given a set of existing models trained on the same dataset with different privacy/utility tradeoffs. We propose two post-processing techniques, namely random selection and linear combination, to generate final private models satisfying any target privacy parameter. We provide privacy accounting of these approaches from the lens of R\'enyi DP and privacy loss distributions on general problems, as well as on private mean estimation, where we precisely characterize the privacy/utility tradeoffs and compare the two mechanisms. Empirically, we demonstrate the effectiveness of our approaches and validate our analyses on several models and both synthetic and real-world datasets. 
\end{abstract}

\vspace{-0.04in}
\section{Introduction}
\vspace{-0.04in}
Differential privacy (DP) ~\cite{dwork2006differential} offers a rigorous statistical framework to quantify privacy leakage of random algorithms and has been widely used in practice~\citep[e.g.,][]{abowd20222020}. During inference or deployment time of real-world applications, the privacy requirements can constantly change due to evolving regulatory and policy contexts, or dynamic user experience. It is therefore critical to update the deployed models quickly while satisfying the target privacy constraints.
In the presence of new target privacy constraints, re-training (or finetuning) a model with DP may need extra hyperparameter tuning (e.g.,
noise multiplier and gradient clipping threshold) and its implementation of per-gradient clipping can incur significant computation and systems costs~\cite{ponomareva2023dp}. 

On the other hand, it is not uncommon for service providers to have access to multiple DP models (e.g., trained with different hyperparameters) with varying privacy parameters. For instance, during model development stage, one may train a non-private model as a reference, and train a portfolio of private models (on the same dataset) to explore the privacy/utility tradeoff.
In this work, we therefore ask: \textit{can we merge the existing models with different privacy/utility tradeoffs to output a magnitude of models that satisfy certain target privacy requirement efficiently without touching raw data or performing any additional training?}


Though various variants of model merging have been extensively studied in prior literature~\citep[e.g.,][]{wortsman2022model}, it remains open how to merge private models given a target privacy, and how to perform tight accounting of privacy guarantees. 
For instance, existing privacy composition results (e.g., sequential or advanced composition~\cite{dwork2010boosting}) are not tailored to any model merging algorithms and may result in a suboptimal privacy/utility tradeoff. 
To this end, we propose two simple, data-independent algorithms that enable modular and  efficient combination of multiple private models to satisfy changing privacy requirements during inference time. We further provide 
tailored privacy accounting based on  R\'enyi differential privacy~\cite{mironov2017renyi} and privacy loss distributions~\cite{meiser2018tight,sommer2018privacy} for toy and general ML problems.

The first merging algorithm is  based on the intuitive idea of randomly outputting any model based on some probability distribution (named random selection, or \textbf{RS}). And the second algorithm is to  perform a linear combination of the models (named linear combination, or \textbf{LC}). That is to say, suppose the algorithm takes as input two models $\theta_1$ and $\theta_2$,\footnote{Our framework
handles $N$ models in the rest of the paper.} 
RS flips a coin independent of the data and outputs $\theta_1$ with
probability $\pi$, and $\theta_2$ otherwise.
LC outputs the deterministic result $\theta_\lambda=\lambda\theta_1+(1-\lambda)\theta_2$.
In both cases, we provide algorithms to choose mixing coefficients $\pi$ and $\lambda$, aiming to optimize utilities subject to the privacy constraint. Our contributions can be summarized as below.
\vspace{-0.5em}
\begin{itemize}[leftmargin=*]
  \item We study the problem of merging private models to satisfy dynamic privacy requirements during deployment. 
  We introduce two lightweight post-processing mechanisms, random selection (RS) and linear combination (LC), as well as algorithms for setting merging parameters.

  \item 
  We provide privacy accounting for both mechanisms based on R\'enyi differential privacy and privacy loss distributions on general problems. We also examine a case study on a mean estimation problem where linear combination dominates random selection. 

  \item 
  We validate our theoretical claims via empirical evaluation. Additionally, the experiments demonstrate that both RS and LC improve privacy/utility tradeoffs than those under naive privacy accounting in both synthetic and real-world datasets. 
\end{itemize}

\section{Related Work and Preliminaries}
\textbf{Model Merging.}
Model merging combines multiple trained models in parameter space and has emerged as a widely used technique for improving generalization, robustness, or multitask performance, with most existing work focusing on non-private settings. Existing approaches include simple parameter averaging such as model soups~\citep{wortsman2022model}, weighted averaging schemes such as Fisher-weighted merging~\citep{matena2022merging}, parameter-level merging rules such as TIES~\citep{yadav2023ties}, and checkpoint averaging methods such as stochastic weight averaging (SWA)~\citep{izmailov2018averaging}. Our LC procedure is closest in spirit to simple parameter averaging methods, but chooses the merging weights to satisfy a prescribed privacy target. We defer a more detailed discussion of model merging strategies to Appendix~\ref{app:model-merging}.


Closest to our setting, recent work on merging in private learning mainly studies checkpoint or model aggregation as post-processing tools to improve generalization under DP-SGD noise. This direction is particularly appealing because DP-SGD introduces additional noise and often yields less stable checkpoints.  \citet{indri2023can} adapt stochastic weight averaging \citep{izmailov2018averaging} to the DP setting, and find that averaging checkpoints can improve performance and reduce variance. \citet{shejwalkar2022recycling} study broader aggregations of intermediate checkpoints. 
However, most existing work primarily targets utility improvement rather than flexible privacy adaptation, and DP-SGD checkpoint merging generally cannot guarantee privacy beyond the least private checkpoint, as discussed in Appendix~\ref{subsec:rs-same-run}.
To the best of our knowledge, no prior work systematically studies merging private models under target DP constraints.


\textbf{Differential Privacy.} Differential privacy (DP)~\citep{dwork2006differential} (Definition~\ref{def:dp}) is a classic statistical framework of measuring privacy properties for randomized algorithms. In machine learning, the dominant approach to enforcing DP constraints is to use DP variants of stochastic gradient descent (DP-SGD~\citep{abadi2016deep}), which clip per-example gradients and add calibrated noise at each iteration.
\begin{definition}[Differential Privacy~\citep{dwork2006differential}] \label{def:dp}
A randomized mechanism $\mathcal{M}$ that operates on datasets consisting of individual datapoints is \emph{$(\varepsilon,\delta)$-DP} if for all (measurable) subsets $S$ of outputs and for all neighboring datasets $D,D^\prime$ that have the same size and differ in exactly one record, we have $
\Pr[\mathcal{M}(D)\in S] \le e^{\varepsilon}\Pr[\mathcal{M}(D^\prime)\in S] + \delta.$
When $\delta=0$, we simply say that $\mathcal{M}$ is \emph{$\varepsilon$-DP}.
\end{definition}
Popular privacy accounting techniques, such as those based on R\'enyi differential privacy (RDP) and privacy loss distributions (PLD)~\citep{mironov2017renyi,wang2019subsampled,dong2022gaussian,meiser2018tight,gopi2021numerical,koskela2021tight,doroshenko2022connect,sommer2018privacy}, yield tighter bounds on the cumulative privacy loss and have become standard in practical deployments of DP-SGD. 
In this work, we explore RDP and PLD accountants to audit privacy of our proposed approaches. For neighboring datasets $D$ and $D'$, RDP uses R\'enyi divergence of order $\alpha$ (denoted as $D_{\alpha}$)  to measure the difference between output distributions, forming a curve (\textit{RDP parameters} or \textit{profiles})
$\alpha\mapsto \varepsilon_\alpha$, where
\begin{equation}
\varepsilon_\alpha \;=\; \sup_{D\sim D'} \mathcal{D}_\alpha\!\big(\mathcal{M}(D)\,\|\,\mathcal{M}(D')\big). 
\end{equation}
The main benefit is that it composes additively across $T$ steps:
$\varepsilon_\alpha^{\mathrm{total}}=\sum_{t=1}^T \varepsilon_\alpha^{(t)}$. We can then convert $(\alpha, \varepsilon_{\alpha})$-RDP into standard $(\varepsilon,\delta)$-DP~\cite{mironov2017renyi}. 
PLD-based accounting instead works with the hockey-stick divergence
\begin{equation}
\mathcal{H}_{\varepsilon}(P,Q)\;\triangleq\;\sup_{S}\big\{P(S)-e^{\varepsilon}Q(S)\big\}. \label{eq:pld_divergence}
\end{equation}
Here $P$ and $Q$ denote the (pessimistic estimate of) output distributions of a
mechanism $\mathcal{M}$ on neighboring datasets $D$ and $D^\prime$, and $S$ ranges over all measurable events. We can track the \textit{PLD parameters} or \textit{profiles} via 
$\delta(\varepsilon)=\max\{\mathcal{H}_{\varepsilon}(P,Q),\mathcal{H}_{\varepsilon}(Q,P)\}$, and compose
privacy loss over steps via convolution~\cite{sommer2018privacy}. 
Although PLD often gives tighter numerical bounds, we also report RDP results because RDP yields simpler analytical expressions, is useful for theoretical comparisons (e.g., Section \ref{sec: mean estimation}), and is computationally lightweight and widely supported in DP pipelines.

\section{Problem Formulation} \label{sec:problem_formulation}

We consider $N$ differentially private algorithms $\mathcal{M}_1 ,\ldots ,\mathcal{M}_N$ applied on the same dataset $D$ and each outputting parameters $\theta_1(D), \ldots , \theta_N(D) \in \mathbb R^{p}$.
For $i\in\{1,\ldots,N\}$, the training mechanism $\mathcal{M}_i$ satisfies $(\varepsilon_i,\delta_i)$-DP (Definition~\ref{def:dp}).
Note that we allow one of the  models to be non-private, i.e., some $\varepsilon_i \to \infty$. 
Our goal is to construct a (possibly randomized) mechanism $h$ that takes as input these $N$ models along with the algorithms, and outputs
\[
\theta(D) \;=\; h\!\big(\{\theta_i(D)\}_{i=1}^N,\{\mathcal{M}_i\}_{i=1}^N; \varepsilon',\delta^\prime\big)
\]
that satisfies target $(\varepsilon',\delta^\prime)$-DP.
Obviously, for suitably large $(\varepsilon', \delta^\prime)$, there always exists a trivial solution set $h\!\big(\{\theta_i(D)\}_{i=1}^N,\{\mathcal{M}_i\}_{i=1}^N; \varepsilon',\delta^\prime\big)\in 
\{h: h \text{ depends only on the most private } \theta_i(D) \,\}$.
Therefore, rather than identifying a single construction, we aim to characterize a large class of feasible merged outputs with as tight privacy accounting as possible, which translates to better privacy/utility tradeoffs. 

\textbf{Necessity of Knowing $\{\mathcal{M}_i\}_{i=1}^N$.}
Ideally, we would like the merging algorithm $h$ to depend only on the $N$ model parameters $\{\theta_i(D)\}_{i=1}^N$ without having access to the hyperparameters of the training algorithms $\{\mathcal{M}_i\}_{i=1}^N$ that produce them. However, such mechanism-agnostic post-processing merging algorithms may be fundamentally limited. In particular, next we show that when $N=2$, under mild regularity conditions, any $h$ that outputs an $(\varepsilon',\delta')$-DP model  must lie very close to the class 
$
\{h: h \text{ depends only on the more private } \theta_i(D) \,\}.
$ 

\begin{proposition}[No-free-lunch without knowing $\{\mathcal{M}_i\}_{i=1}^N$]
\label{prop:impossibility}
Assume the mapping $h=h(\theta_1(D),\theta_2(D); \varepsilon',\delta^\prime)$ (either deterministic or random) is measurable and independent of ($\mathcal{M}_1,\mathcal{M}_2$). 
Assume $\varepsilon_1<\varepsilon^\prime<\varepsilon_2$ and $\delta_1=\delta_2=\delta$. 
If 
\begin{equation}\label{cond:impossible}
    \sup_{x,a,b\in \mathbb R^p
} \mathcal{D}_{TV}(h(x,a), h(x,b))>\frac{(\delta^\prime+e^{\varepsilon^\prime}-1)(e^{\varepsilon_2}+1)}{e^{\varepsilon_2+\varepsilon^\prime}+\delta e^{\varepsilon^\prime}+\delta-1},
\end{equation}
then the output $h\!\big(\theta_1(D),\theta_2(D)\big)$ cannot be $(\varepsilon',\delta')$-DP for all admissible choices of $(\mathcal{M}_1,\mathcal{M}_2)$.
\end{proposition}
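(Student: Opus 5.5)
The plan is to exploit the fact that $h$ does not know $(\mathcal{M}_1,\mathcal{M}_2)$. We build one admissible pair of mechanisms for which the output $h(\theta_1(D),\theta_2(D))$ violates $(\varepsilon',\delta')$-DP on a single pair of neighboring datasets. Write $\tau_0$ for the right-hand side of \eqref{cond:impossible}. Because the inequality in \eqref{cond:impossible} is strict, we can fix $x,a,b\in\mathbb R^p$ with $\tau:=\mathcal{D}_{TV}(h(x,a),h(x,b))>\tau_0$. We then fix a measurable event $S$ with $A-B=\tau$, where $A=\Pr[h(x,a)\in S]$ and $B=\Pr[h(x,b)\in S]$. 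Here the probabilities are over the internal randomness of $h$, which is independent of the data.

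Next we construct the mechanisms.
\begin{itemize}[leftmargin=*]
\item Let $\mathcal{M}_1$ be the constant mechanism that always outputs $x$. It is $0$-DP, and hence $(\varepsilon_1,\delta)$-DP.
\item Let $\mathcal{M}_2$ be a randomized response between $a$ and $b$ driven by one designated record. Fix a record value $z^\ast$. On datasets containing $z^\ast$ in the first slot, $\mathcal{M}_2$ outputs $a$ with probability $p$ and $b$ otherwise. On all other datasets it outputs $a$ with probability $q$ and $b$ otherwise.
\item Choose $p=(e^{\varepsilon_2}+\delta)/(e^{\varepsilon_2}+1)$ and $q=(1-\delta)/(e^{\varepsilon_2}+1)$. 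These make both constraints $p\le e^{\varepsilon_2}q+\delta$ and $1-q\le e^{\varepsilon_2}(1-p)+\delta$ tight, and the reversed constraints hold trivially since $p\ge q$.
\end{itemize}
Any two neighboring datasets induce one of these two output laws, so checking the two-point distributions suffices to show that $\mathcal{M}_2$ is $(\varepsilon_2,\delta)$-DP. This is the step I would verify most carefully, because the two-sided constraints on a two-point space must be checked on all four events.

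Take $D$ containing $z^\ast$ and $D'$ obtained by replacing it. By linearity in the mixing weights,
\[
P(S)=B+p\tau,\qquad Q(S)=B+q\tau,
\]
so
\[
P(S)-e^{\varepsilon'}Q(S)=-(e^{\varepsilon'}-1)B+\tau\,(p-e^{\varepsilon'}q).
\]
Since $A\le 1$, we have $B\le 1-\tau$. Because $e^{\varepsilon'}>1$, this gives the lower bound
\[
P(S)-e^{\varepsilon'}Q(S)\ge \tau\bigl[(e^{\varepsilon'}-1)+p-e^{\varepsilon'}q\bigr]-(e^{\varepsilon'}-1).
\]
Substituting $p$ and $q$, the bracket simplifies to
\[
\frac{e^{\varepsilon_2+\varepsilon'}+\delta e^{\varepsilon'}+\delta-1}{e^{\varepsilon_2}+1}.
\]
This is positive, since $\varepsilon_2>0$ gives $e^{\varepsilon_2+\varepsilon'}>1$. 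Hence $P(S)-e^{\varepsilon'}Q(S)>\delta'$ holds exactly when $\tau>\tau_0$, which is our assumption. So $h(\theta_1,\theta_2)$ is not $(\varepsilon',\delta')$-DP for this admissible pair.

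The main obstacle is not the construction itself but making the bound hold uniformly in the unknown $B$. The worst-case bound $B\le 1-\tau$ is what produces the $e^{\varepsilon'}-1$ terms in the threshold. The other delicate point is choosing the randomized-response parameters to maximize $p-e^{\varepsilon'}q$ under the $(\varepsilon_2,\delta)$ constraints. I would first confirm that the corner where both constraints are tight is optimal; this is a linear program in $(p,q)$, so the optimum sits at a vertex. The hypothesis $\varepsilon_1<\varepsilon'$ is used only to make the statement meaningful; the construction uses a $0$-DP $\mathcal{M}_1$.
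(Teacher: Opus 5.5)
Your proposal is correct and follows the paper's proof essentially step for step: a constant $\theta_1\equiv x$, a two-point randomized response for $\theta_2$ with exactly the paper's $p=(e^{\varepsilon_2}+\delta)/(e^{\varepsilon_2}+1)$ and $p'=(1-\delta)/(e^{\varepsilon_2}+1)$, and the worst-case bound $\Pr[h(x,b)\in S]\le 1-\mathcal{D}_{TV}$, which produces the threshold. Your write-up is slightly cleaner than the paper's, since it specifies the neighboring datasets explicitly and takes $x,a,b$ from the TV hypothesis up front. One small wording point: it is your lower bound, not $P(S)-e^{\varepsilon'}Q(S)$ itself, that exceeds $\delta'$ exactly when $\tau>\tau_0$.
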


The complete proof is provided in Appendix \ref{app:impossibility}.
Condition~\ref{cond:impossible} describes mergers $h(\cdot,\cdot)$ that react too much to changes in the less private model $\theta_2$. 
If varying $\theta_2$ while fixing $\theta_1$  can induce a large total-variation change in the distribution of $h(\theta_1,\theta_2)$, then no universal privacy guarantee is possible.
Proposition~\ref{prop:impossibility} formalizes that, without structural assumptions on $\mathcal{M}_1,\mathcal{M}_2$, one cannot design a merging algorithm that is substantially different from the trivial merger while satisfying $(\varepsilon',\delta')$-DP. 
In particular, when $\delta'=\delta$, any deterministic merger that aims to be valid uniformly over $(\mathcal{M}_1,\mathcal{M}_2)$ must lie in the trivial class.

Therefore, to obtain meaningful improvements beyond the trivial merger, we focus on practically relevant mechanisms rather than arbitrary ones. In particular, we study the setting where the inputs are associated with some known RDP or PLD parameters (which can be inferred from hyperparameters of the learning algorithms) (Section~\ref{sec: RS}), or where the inputs are directly produced by running DP-SGD (Section~\ref{sec:linear-combination}). 
In the next sections, we first formally introduce two model merging algorithms (two instances of $h$), random selection and linear combination (Section~\ref{sec:method}). 
We then discuss privacy accounting of RS (Section~\ref{sec: RS}) and LC (Section~\ref{sec:linear-combination}) mechanisms, and consider how to choose an appropriate merger $h$ that exploits the additional assumptions of $\mathcal{M}$. 

\section{Model Merging via Random Selection or Linear Combination} \label{sec:method}
In this section, we first formally propose two data-independent merging mechanisms, and illustrate the privacy/utility tradeoff introduced by these mechanisms on a Gaussian mean estimation problem (Section~\ref{sec: mean estimation}).

\textbf{Random Selection (RS).} We propose a mechanism  independent of $D$ that samples a model index $I$ from a categorical distribution and outputs $\theta_I(D)$ as the merged model. That is, given target privacy $(\varepsilon',\delta')$, we output  
\begin{equation}
\theta(D)
= \theta_I(D), \quad
\ I \sim \mathrm{Cat}(\pi_1,\ldots,\pi_N),  \label{eq:RS}
\end{equation}
where 
$\pi=(\pi_1,\ldots,\pi_N)\in\Delta^{N-1}$ (a probability simplex) and $\pi$ is dependent on $(\varepsilon', \delta')$ (Section~\ref{sec: RS}). 
Note that RS only randomly samples one model following some distribution. Another natural choice is to leverage all the models by linearly averaging them.

\textbf{Linear Combination (LC).} The LC method for $N$ models on dataset $D$ is defined as
\vspace{-0.5em}
\begin{equation}
\theta(D) \;=\; \sum_{i=1}^N\lambda_i\,\theta_i(D),
~ \lambda=[\lambda_1, \cdots, \lambda_N] \in \Delta^{N-1}.
\label{eq:def-linear-comb}
\end{equation}
In both cases, as  $h$ does not have access to $D$, 
the privacy of composing $N$ models is fully determined by how the privacy losses of $\{\mathcal{M}_i\}_{i=1}^N$ are aggregated under different instantiations of $h$, and the main task becomes privacy accounting for the RS or LC procedure. 

At a high level, RS is entirely characterized by privacy parameters of  RDP or PLD, as the output model is randomly sampled from the input $N$ models. Therefore, its privacy accounting depends on the individual selected model's privacy parameters, making RS broadly applicable even when we do not understand the detailed procedure of the training algorithm (as long as we have access to privacy parameters).
In contrast, LC requires to imposing stronger assumptions on the structures (e.g., hyperparameters and updating rules) of the training algorithms that produce private input models (formalized in Section~\ref{sec:linear-combination}). 
Next, we study a  toy problem on a Gaussian mean estimation to illustrate this intuition, before introducing general accounting.

\subsection{Case Study: Mean Estimation} \label{sec: mean estimation}
This section studies  Gaussian mean estimation, one of the simplest statistical tasks where we can precisely characterize the privacy and utilities of RS and LC, under RDP accounting.  
In this setting, we show that LC can match the privacy of RS while achieving no larger MSE. 
Let $\hat\mu$ be the empirical mean with $\ell_2$ sensitivity $\Delta$, and consider two independent Gaussian mechanisms
$
\theta_i(D)\sim \mathcal N(\hat\mu,\sigma_i^2),\quad i=1,2.
$
For RS with parameter $\pi$, the output follows
$
\pi\mathcal N(\hat\mu,\sigma_1^2)+(1-\pi)\mathcal N(\hat\mu,\sigma_2^2),
$
with variance $\sigma_{\mathrm{RS}}^2(\pi)=\pi\sigma_1^2+(1-\pi)\sigma_2^2$. 
For LC with parameter $\lambda$, the output follows
$
\mathcal N\!\left(\hat\mu,\lambda^2\sigma_1^2+(1-\lambda)^2\sigma_2^2\right),
$
with variance $\sigma_{\mathrm{LC}}^2(\lambda)=\lambda^2\sigma_1^2+(1-\lambda)^2\sigma_2^2$. 
Since both estimators are unbiased relative to $\hat\mu$, comparing their MSE reduces to comparing these variances.

\begin{theorem}
\label{thm:LC-dominates-alpha}
Fix any $\alpha>1$ and $\pi\in[0,1]$. There exists a constant $C_{\pi,\alpha}>0$ such that, whenever
$\Delta<C_{\pi,\alpha}$, the following holds: if
$\sigma_{\mathrm{LC}}^2(\lambda)=\sigma_{\mathrm{RS}}^2(\pi)$, then LC
with parameter $\lambda$ achieves privacy no worse than RS with parameter
$\pi$ under order-$\alpha$ RDP.
Consequently, for any RS parameter $\pi$, there exists an LC parameter
$\lambda$ that achieves no larger MSE under the same target order-$\alpha$
RDP budget.
\end{theorem}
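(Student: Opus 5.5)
We compare, at a fixed order $\alpha$, the RDP cost of LC and RS by writing each as an explicit function of $\Delta$ and expanding for small $\Delta$. Throughout, $\sigma_1 \ge \sigma_2$, i.e.\ $\theta_1$ is the more private model, and we treat $\pi\in(0,1)$ as the main case.

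\textbf{LC side.} Since $\theta_1,\theta_2$ are independent, the LC output is exactly Gaussian. Its variance is $v:=\sigma_{\mathrm{LC}}^2(\lambda)$ and its mean shifts by $\Delta$ between neighbors. Hence its order-$\alpha$ RDP is exactly
\[
\varepsilon^{\mathrm{LC}}_\alpha=\frac{\alpha\Delta^2}{2v}.
\]
Under the hypothesis $\sigma_{\mathrm{LC}}^2(\lambda)=\sigma_{\mathrm{RS}}^2(\pi)$, we have $v=\pi\sigma_1^2+(1-\pi)\sigma_2^2$. Such a $\lambda$ exists by the intermediate value theorem. At $\lambda=1$ the LC variance is $\sigma_1^2$. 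At the minimizer $\lambda^\ast=\sigma_2^2/(\sigma_1^2+\sigma_2^2)$ it is $\sigma_1^2\sigma_2^2/(\sigma_1^2+\sigma_2^2)\le\sigma_2^2$. So every value in $[\sigma_2^2,\sigma_1^2]$, and in particular $\sigma_{\mathrm{RS}}^2(\pi)$, is attained.

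\textbf{RS side.} The RS output under neighbors $D,D'$ is a pair of mixtures $P=\pi N_1+(1-\pi)N_2$ and $Q=\pi N_1'+(1-\pi)N_2'$, where $N_i=\mathcal N(\hat\mu,\sigma_i^2)$ and $N_i'$ is the same with the mean shifted by $\Delta$. By translation invariance we may set $\hat\mu=0$. Let $f=dP/dx$ and $g=dQ/dx$. The quantity $\exp((\alpha-1)D_\alpha(P\|Q))=\int f^\alpha g^{1-\alpha}$ is smooth in $\Delta$ (the tails are Gaussian, so differentiation under the integral is justified for fixed $\alpha$). It equals $1$ at $\Delta=0$, and its first derivative in $\Delta$ vanishes there.

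The standard second-order expansion gives
\[
D_\alpha(P\|Q)=\frac{\alpha}{2}\,\mathcal I(P)\,\Delta^2+O(\Delta^3),
\]
where $\mathcal I(P)=\int (f')^2/f$ is the Fisher information of the location family generated by $P$. The same holds with the roles of $P$ and $Q$ reversed, so the supremum over neighbor orderings has the same leading term.

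For Gaussian LC, $\mathcal I=1/v$ exactly. So it suffices to show the strict inequality
\[
\mathcal I(\pi N_1+(1-\pi)N_2)>\frac{1}{\pi\sigma_1^2+(1-\pi)\sigma_2^2}
\qquad\text{whenever }\sigma_1\ne\sigma_2 .
\]
This is the Cram\'er--Rao / Stam-type inequality $\mathcal I(X)\ge 1/\mathrm{Var}(X)$, with equality iff $X$ is Gaussian. A nondegenerate scale mixture of two distinct Gaussians is not Gaussian, so the inequality is strict. When $\sigma_1=\sigma_2$ the two mechanisms coincide and the claim is trivial.

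\textbf{Conclusion and main obstacle.} Write $\varepsilon_\alpha^{\mathrm{RS}}-\varepsilon_\alpha^{\mathrm{LC}}=\tfrac{\alpha}{2}(\mathcal I(P)-1/v)\Delta^2+R(\Delta)$ with $|R(\Delta)|\le K_{\pi,\alpha}\Delta^3$ for $\Delta\le 1$. The difference is nonnegative once $\Delta<C_{\pi,\alpha}:=\min\{1,\ \alpha(\mathcal I(P)-1/v)/(2K_{\pi,\alpha})\}$. The dependence on $\sigma_1,\sigma_2$ is absorbed into the constant, since these are fixed.

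The MSE consequence follows immediately. Given a budget met by RS with $\pi$, the LC parameter $\lambda$ matched in variance meets the same budget with equal MSE. Moving $\lambda$ toward $\lambda^\ast$ only lowers the MSE, though that step is not needed.

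The main obstacle is controlling the cubic remainder uniformly, i.e.\ producing an explicit $K_{\pi,\alpha}$. I would bound the third $\Delta$-derivative of $\int f^\alpha g^{1-\alpha}$ using the fact that the likelihood ratio $g/f$ and its derivatives are dominated by $\exp(c|x|\Delta)$ times polynomials in $x$. These terms are integrable against the Gaussian tails for $\Delta\le 1$.
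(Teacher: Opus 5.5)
Your proposal is correct and follows essentially the same route as the paper. Both use the exact Gaussian RDP $\alpha\Delta^2/(2\sigma_{\mathrm{LC}}^2)$ for LC, the second-order expansion $D_\alpha(P_\pi\|Q_\pi)=\frac{\alpha}{2}\mathcal{I}(P_\pi)\Delta^2+o(\Delta^2)$ for RS, and the Cram\'er--Rao-type bound $\mathcal{I}(P_\pi)\ge 1/\operatorname{Var}$, which is strict for a genuine two-component mixture, to get the gap for small $\Delta$. The explicit remainder constant $K_{\pi,\alpha}$ that you flag as the main obstacle is not needed: the statement only asserts that some $C_{\pi,\alpha}$ exists, and an $o(\Delta^2)$ remainder, which is what the paper uses, already gives that.
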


Thus, in this Gaussian setting, LC improves over RS as it averages independent noise rather than discarding one candidate. This clean comparison relies on the additive Gaussian structure and does not directly extend to general learning problems. We next develop accounting rules for RS under general RDP/PLD profiles, and for LC under the additional structure of independent DP-SGD runs.

\section{Privacy of Random Selection}\label{sec: RS}
Recall that we have trained $N$ private models on the same dataset $D$, obtaining output parameters
$\theta_1(D), \ldots, \theta_N(D)\in\mathbb{R}^p$.
As discussed in Section~\ref{sec:problem_formulation}, there is no general post-processing algorithm
that improves utilities beyond the degenerated solutions if we do not have access to any parameters of the training algorithms $\{\mathcal{M}_i\}_{i=1}^N$.
Hence, for random selection, we assume that we are given the hyperparameters of $\mathcal{M}$, from which we derive the RDP and PLD profiles of each input model. For each $\mathcal{M}_i$, we  denote the RDP parameters as $\{\varepsilon_{\alpha,i}\}_{\alpha>1}$  and the PLD privacy parameters as $\varepsilon\mapsto\delta_i(\varepsilon)$.
To this end, we develop two complementary accounting techniques: an RDP-based bound that is convenient for analysis and optimization, and a PLD-based method that is typically tighter in practice.

\textbf{RDP Accounting.}
We begin by establishing an explicit upper bound on the RDP parameters of RS.
\begin{theorem}
\label{prop:rdp-mixture}
For any $\alpha>1$, random selection (Eq.~\eqref{eq:RS}) over $N$ private models (each being $(\alpha, \varepsilon_{\alpha,i})$-RDP) satisfy $(\alpha, \varepsilon_{\alpha}^{\text{RS}}(\pi)$)-RDP where 
$\varepsilon_{\alpha}^{\text{RS}}(\pi)
\;\le\;
\frac{1}{\alpha-1}\,
\log\!\Big( \sum_{i=1}^{N}
\pi_i\,e^{(\alpha-1)\varepsilon_{\alpha,i}}
\Big).$
\end{theorem}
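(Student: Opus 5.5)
Fix neighboring datasets $D,D'$ and write $P_i$ and $Q_i$ for the output distributions of $\mathcal{M}_i$ on $D$ and $D'$, with densities $p_i,q_i$ relative to a common dominating measure. Because the index $I$ is drawn independently of the data, random selection produces the mixtures $P=\sum_i\pi_i P_i$ and $Q=\sum_i \pi_i Q_i$, with the \emph{same} weights $\pi$ for $D$ and for $D'$. The plan is to bound $\mathcal{D}_\alpha(P\|Q)$ through the quantity
\[
\exp\!\big((\alpha-1)\mathcal{D}_\alpha(P\|Q)\big)=\int \Big(\sum_i \pi_i p_i\Big)^{\alpha}\Big(\sum_i \pi_i q_i\Big)^{1-\alpha},
\]
and to show it is at most $\sum_i \pi_i \int p_i^\alpha q_i^{1-\alpha}=\sum_i\pi_i e^{(\alpha-1)\mathcal{D}_\alpha(P_i\|Q_i)}$.

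The key step is joint convexity. The map $(x,y)\mapsto x^\alpha y^{1-\alpha}$ on $\mathbb{R}_{>0}^2$ is the perspective $y\,f(x/y)$ of the convex function $f(t)=t^\alpha$ for $\alpha>1$. A perspective of a convex function is jointly convex, so pointwise
\[
\Big(\sum_i \pi_i p_i\Big)^\alpha\Big(\sum_i\pi_i q_i\Big)^{1-\alpha}\le \sum_i \pi_i\, p_i^\alpha q_i^{1-\alpha}.
\]
Integrating this inequality gives the claimed bound on the integral. Points where some $q_i=0$ need the usual convention $x^\alpha 0^{1-\alpha}=+\infty$ for $x>0$, and the inequality still holds (trivially if the right side is infinite).

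Next, use the RDP hypothesis $\mathcal{D}_\alpha(P_i\|Q_i)\le\varepsilon_{\alpha,i}$. This gives $\int p_i^\alpha q_i^{1-\alpha}\le e^{(\alpha-1)\varepsilon_{\alpha,i}}$ for each $i$. Since $\frac{1}{\alpha-1}\log(\cdot)$ is increasing, we get $\mathcal{D}_\alpha(P\|Q)\le \frac{1}{\alpha-1}\log\sum_i\pi_i e^{(\alpha-1)\varepsilon_{\alpha,i}}$. This bound is uniform over the pair $D,D'$ and symmetric in the roles of $D,D'$, so taking the supremum over neighboring pairs yields the theorem.

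The only real obstacle is justifying the joint convexity step with care. The way through is the perspective argument: for $y_j>0$ and weights $\pi_j$, write $\sum_j\pi_j y_j f(x_j/y_j)$ and apply Jensen's inequality to $f$ with weights $\pi_j y_j/\sum_k \pi_k y_k$. An alternative is to note that $\exp((\alpha-1)\mathcal{D}_\alpha)$ is an $f$-divergence, and $f$-divergences are jointly convex, which gives the same inequality directly.
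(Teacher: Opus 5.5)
Your proof is correct and follows the paper's argument: both establish the pointwise bound $(\sum_i\pi_i p_i)^\alpha(\sum_i\pi_i q_i)^{1-\alpha}\le\sum_i\pi_i p_i^\alpha q_i^{1-\alpha}$, integrate it, and plug in the per-model RDP bounds. The only difference is that the paper derives that bound with a weighted H\"older inequality (exponents $\alpha$ and $\alpha/(\alpha-1)$), whereas you use the perspective/Jensen argument, which is an equivalent route; your treatment of the zero-density conventions is extra care the paper omits.
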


Theorem~\ref{prop:rdp-mixture} (proved in Appendix~\ref{app:proof:rs_rdp}) provides an explicit upper bound on the RDP parameter of random selection in terms of the individual model's parameters $\{\varepsilon_{\alpha,i}\}_{i=1}^N$ and the mixing
weights $\pi\in\Delta^{N-1}$. This bound immediately enables us to certify
$(\varepsilon(\pi;\delta^{\prime}), \delta^{\prime})$-DP for candidate mixtures via the standard RDP-to-DP conversion.
For a target privacy budget $\varepsilon'$, we accept a mixing weight
$\pi$ if $\varepsilon(\pi;\delta^\prime)\le \varepsilon'$. This provides us the criterion for whether a candidate 
$\pi$ satisfies the target privacy guarantee.
We summarize the method in Algorithm \ref{algo:RS-combined} (left).




\setlength{\textfloatsep}{5pt}
\setlength{\abovecaptionskip}{2pt}
\setlength{\belowcaptionskip}{2pt}
\begin{algorithm}[t]
\caption{Setting Selection Probabilities for RS (See Algorithm~\ref{algo:LC-combined} in the appendix for LC)}
\label{algo:RS-combined}
\begin{algorithmic}[1]

\State {\bfseries Input:} target privacy $(\varepsilon', \delta')$, orders grid $\mathcal{A}$ for RDP, 
       hyperparameter sets $\{\mathcal{S}_i\}_{i=1}^N$ of $\{\mathcal{M}_i\}_{i=1}^N$.
\State {\bfseries Output:} set of selection probabilities $\pi\in\Delta^{N-1}$ satisfying the target privacy.
\vspace{-0.07in}
\Statex \rule{\linewidth}{0.4pt}  
\Statex\noindent  
\begin{minipage}[t]{0.51\linewidth}
  \textit{Case 1: RDP accounting}
  \begin{algorithmic}[1]   
    \State  Obtain
           $\{\varepsilon_{\alpha,i}\}_{\alpha\in\mathcal{A}}$ by $\mathcal{S}_i$, for each $i \in [N]$
    \Function{DP\_Eps}{$\pi,\delta$}
      \For{$\alpha \in \mathcal{A}$}
        \State $a_i$ $\gets$ $(\alpha-1)\,\varepsilon_{\alpha,i}$, $i\in[N]$
        \State $\varepsilon_{\alpha}^{\text{RS}}(\pi) $ $\gets$
               $\tfrac{\log\!\bigl(\sum_{i=1}^N \pi_i e^{a_i}\bigr)}{\alpha-1}$
        \State $\varepsilon_{\alpha}(\pi;\delta)$ $\gets$
               $\varepsilon_{\alpha}^{\text{RS}}(\pi)
               + \tfrac{\log(1/\delta)}{\alpha-1}$
      \EndFor
      \State \textbf{return}
             $\min_{\alpha\in\mathcal{A}}\varepsilon_{\alpha}(\pi;\delta)$
    \EndFunction
    \State {\bfseries return} $\{\pi \mid \text{DP\_Eps($\pi,\delta'$)}\leq \varepsilon^\prime\}$
  \end{algorithmic}
\end{minipage}%
\hfill%
\begin{minipage}[t]{0.49\linewidth}
  \textit{Case 2: PLD accounting}
  \begin{algorithmic}[1]
    \State Obtain
           $\varepsilon\mapsto\{\mathcal{H}_\varepsilon(P_i,Q_i),\,
           \mathcal{H}_\varepsilon(Q_i,P_i)\}$ by $\mathcal{S}_i$, for each $i \in [N]$. 
    \Function{DP\_Delta}{$\pi,\varepsilon$}
      \State $\delta_{+} \gets
             \sum_{i=1}^N \pi_i\,\mathcal{H}_\varepsilon(P_i,Q_i)$
      \State $\delta_{-} \gets
             \sum_{i=1}^N \pi_i\,\mathcal{H}_\varepsilon(Q_i,P_i)$
      \State $\delta^{\mathrm{RS}}(\pi;\varepsilon) \gets
             \max\{\delta_{+},\delta_{-}\}$
      \State \textbf{return} $\delta^{\mathrm{RS}}(\pi;\varepsilon)$
    \EndFunction
    \State {\bfseries return} $\{\pi \mid \text{DP\_Delta($\pi,\varepsilon^\prime$)}\leq \delta^\prime\}$
  \end{algorithmic}
\end{minipage}
\end{algorithmic}
\end{algorithm}

\textbf{PLD Accounting.}
In the PLD accounting, we note that random selection induces the mixture distributions over the $N$ models. That is, considering any neighboring datasets $D$ and $D^\prime$, the output probabilities of RS are
\vspace{-0.06in}
\begin{align*}
    P_{RS}^{D} =\sum_{i=1}^N\pi_i P_i^{D}, \quad P_i^{D}=\text{Law}(\mathcal{M}_i(D)), \quad
Q_{RS}^{D^\prime} =\sum_{i=1}^N\pi_iQ_i^{D^\prime}, \quad Q_i^{D^\prime}=\text{Law}(\mathcal{M}_i(D^\prime))
\end{align*}

where $\text{Law}(X)$ refers to the distribution induced by random variable $X$. Further, we define $P_i$ and $Q_i$ to be pessimistic estimates of $P_i^{D}$ and $Q_i^{D^\prime}$ respectively, which means $\mathcal{H}_{\varepsilon}(P_i^{D},Q_i^{D^\prime}) \leq \mathcal{H}_{\varepsilon}(P_i,Q_i)$ and $\mathcal{H}_{\varepsilon}(Q_i^{D^\prime},P_i^{D}) \leq \mathcal{H}_{\varepsilon}(Q_i,P_i)$. 
Note that the hockey-stick divergence used by PLD accounting $\mathcal{H}_{\varepsilon}(\cdot,\cdot)$ (Eq.~\eqref{eq:pld_divergence}) is jointly convex over $(P,Q)$. Then we have the following result about the PLD parameter  $\delta^{\text{RS}}(\varepsilon)$ of the random selection output.

\begin{theorem}
\label{prop:pld-convex-mixture}
For any $\varepsilon$$\ge$ 0 and $\pi$$=$$(\pi_1,\ldots,\pi_N)\in\Delta^{N-1}$, it holds that
\vspace{-0.06in}
\[
\sup_{D\sim D ^\prime}\mathcal{H}_{\varepsilon}(P_{RS}^{D},Q_{RS}^{D^\prime})
\leq  \sum_{i=1}^N\pi_i\,\mathcal{H}_{\varepsilon}(P_i,Q_i), \quad \sup_{D\sim D ^\prime}\mathcal{H}_{\varepsilon}(Q_{RS}^{D^\prime},P_{RS}^{D})
\;\le\;
\sum_{i=1}^N\pi_i\,\mathcal{H}_{\varepsilon}(Q_i,P_i).
\]
\vspace{-0.07in}
Consequently, we have $\delta^{\text{RS}}(\varepsilon)$ is upper bounded by 
$
 \max \!\Big\{
\sum_{i=1}^N\pi_i \mathcal{H}_{\varepsilon}(P_i,Q_i),\sum_{i=1}^N\pi_i  \mathcal{H}_{\varepsilon}(Q_i,P_i)
\Big\}$.
\end{theorem}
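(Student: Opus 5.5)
The plan is to use joint convexity of the hockey-stick divergence together with the pessimistic-estimate property of each pair $(P_i,Q_i)$. Fix neighboring datasets $D\sim D'$ and $\varepsilon\ge 0$. The key observation is that the RS mixture uses the \emph{same} weights $\pi_i$ under both $D$ and $D'$, because the index $I$ is drawn independently of the data. Hence $P_{RS}^{D}$ and $Q_{RS}^{D'}$ are matched convex combinations with a common coupling index. This matching is exactly what joint convexity needs.

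The first step is to verify joint convexity directly rather than cite it. For any measurable $S$,
\[
P_{RS}^{D}(S)-e^{\varepsilon}Q_{RS}^{D'}(S)=\sum_{i=1}^N \pi_i\big(P_i^{D}(S)-e^{\varepsilon}Q_i^{D'}(S)\big)\le \sum_{i=1}^N\pi_i\,\mathcal{H}_\varepsilon(P_i^{D},Q_i^{D'}).
\]
The inequality holds because each bracket is at most its supremum over $S$ and $\pi_i\ge 0$. Taking the supremum over $S$ on the left gives $\mathcal{H}_\varepsilon(P_{RS}^{D},Q_{RS}^{D'})\le\sum_i\pi_i\mathcal{H}_\varepsilon(P_i^{D},Q_i^{D'})$. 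The sup of a sum is at most the sum of sups, and the inequality points in the useful direction, so no single worst-case set is needed.

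The second step applies the pessimistic-estimate assumption termwise, $\mathcal{H}_\varepsilon(P_i^{D},Q_i^{D'})\le\mathcal{H}_\varepsilon(P_i,Q_i)$. The resulting bound $\sum_i\pi_i\mathcal{H}_\varepsilon(P_i,Q_i)$ does not depend on $(D,D')$, so taking the supremum over neighboring pairs yields the first inequality. The reversed inequality follows identically by swapping roles: use $Q_{RS}^{D'}(S)-e^{\varepsilon}P_{RS}^{D}(S)$ and the assumption $\mathcal{H}_\varepsilon(Q_i^{D'},P_i^{D})\le\mathcal{H}_\varepsilon(Q_i,P_i)$.

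Finally, $\delta^{\mathrm{RS}}(\varepsilon)$ is defined as the max of the two directional hockey-stick divergences, each supremized over neighbors. The max of two quantities is bounded by the max of their bounds, which gives the stated consequence.

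I do not expect a serious obstacle. The only delicate point is interpretive: the pessimistic pairs $(P_i,Q_i)$ must dominate $\mathcal{H}_\varepsilon(P_i^{D},Q_i^{D'})$ for \emph{every} neighboring pair, with the same orientation (which dataset plays $P$) across all $i$. If different $i$ had worst cases in opposite orientations, the per-direction sums could mix orientations. I would therefore state explicitly that the pessimistic estimates are taken uniformly over neighbors with a fixed orientation, as the definition preceding the theorem implicitly asserts. Under that assumption the bound is simply convexity followed by termwise domination.
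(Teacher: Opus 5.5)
Your proposal is correct and follows essentially the same route as the paper: write $P_{RS}^{D}(S)-e^{\varepsilon}Q_{RS}^{D'}(S)$ as a combination with the common weights $\pi_i$, bound the supremum of the sum by the sum of the suprema, and repeat with the roles swapped. Your version is slightly more careful, because you explicitly invoke the termwise pessimistic-estimate domination $\mathcal{H}_\varepsilon(P_i^{D},Q_i^{D'})\le\mathcal{H}_\varepsilon(P_i,Q_i)$, with a fixed orientation, before taking the supremum over neighboring pairs; the paper's proof leaves this step implicit.
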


Theorem~\ref{prop:pld-convex-mixture} (proved in Appendix~\ref{app:proof:rs_pld}) also yields a simple feasibility test for RS based on the privacy loss distribution.
Given a target privacy parameter $\varepsilon^\prime$, we evaluate $\delta^{\text{RS}}(\varepsilon^\prime)$ for a candidate probability $\pi$ and $\pi$ is feasible if and only if $\delta^{\text{RS}}(\varepsilon^\prime)\leq \delta^\prime$.
We summarize this method in Algorithm \ref{algo:RS-combined} (right).

\textbf{Privacy/Utility Tradeoff under Random Selection. } 
For a target $(\varepsilon',\delta')$, the accounting methods above provide a feasible set of $\pi\in\Delta^{N-1}$. In practice, one would naturally like to choose $\pi$ that yields a strong utility among all feasible $\pi$'s. This calls for selection rules over $\pi$. Under random selection, the expected utility is
linear in $\pi$, making the tradeoff easier to reason about. 
In typical DP scenarios, more private models tend to have lower utilities, so a natural strategy is to choose $\pi$ that nearly saturates the privacy budget,
e.g., $\varepsilon(\pi;\delta')\approx \varepsilon'$ in RDP, thereby biasing selection towards higher-utility models while
maintaining the privacy guarantee.
Moreover, many selection rules are possible in practice. For example, one may evaluate candidate mixtures on a public dataset and choose the best-performing option. 





\vspace{-0.04in}
\section{Privacy of Linear Combination}
\label{sec:linear-combination}
\vspace{-0.04in}


In Section~\ref{sec: mean estimation}, we leverage the additivity of independent Gaussian noise and show that, for mean estimation with a Gaussian mechanism, LC strictly dominates RS. However, such favorable structure does not hold for general problems. For random selection, the only assumption we make is access to the RDP/PLD parameters of each input model. In
contrast, we show in this section that this is not sufficient for linear combination. That is, if we only know the privacy parameters and impose no further assumptions, then the privacy under
LC cannot improve upon that of simply releasing all the models $\{\theta_1,\ldots,\theta_N\}$ jointly. This observation (formalized in Section~\ref{subsec:linmix-limitation} below) motivates the need
for additional structural assumptions to obtain meaningful privacy accounting for LC (Section~\ref{sec:method:lc}).

\subsection{Limitation Result without Knowing Original Training Algorithms} 
\label{subsec:linmix-limitation}
Suppose we \textit{only} know the final RDP or PLD privacy parameters of each private learning mechanism $\{\mathcal{M}_i\}_{i=1}^N$ without any further algorithmic structure information (i.e., the exact updating rules) about $\{\mathcal{M}_i\}_{i=1}^N$. 
We show below, when the mechanisms are instantiated with fresh independent randomness conditioned on the dataset $D$, the LC mechanism admits a universal worst-case upper bound, and this bound can be achieved.

\begin{theorem}[Worst-case bounds and tightness]
\label{thm:linmix-profile-bound}
Consider the linear combination mechanism defined by
$
\theta(D)=\sum_{i=1}^N \lambda_i\,\theta_i(D).
$
Assume $\lambda_i>0$ and the inputs $\{\theta_1(D), \cdots, \theta_N(D)\}$ are generated with fresh independent randomness conditioned on the dataset $D$.
We have the following results.

\textbf{(1) RDP upper bound.}
The RDP parameter of $\theta$ is no worse than that of the joint release (i.e., releasing $\{\theta_1,\cdots, \theta_N\}$). Thus, for every $\alpha>1$, the LC mechanism satisfies $(\alpha, \varepsilon^{\text{LC}}_\alpha(\lambda))$-RDP where
\begin{equation}
\varepsilon^{\text{LC}}_\alpha(\lambda)
\;\le\;
\sum_{i=1}^N \varepsilon_{\alpha,i}.
\label{eq:linmix-rdp-sum}
\end{equation}
\textbf{(2) PLD upper bound.}
Let $\omega_i$ be the PLD of $\theta_i$, and let $\omega_{\mathrm{joint}}=\omega_1\star\cdots\star\omega_N$
be the PLD of the joint release. Then the PLD parameter of $\theta$ is no worse than that of the joint release.
That is, for any $\varepsilon\ge 0$,
\begin{equation}
\delta^{\text{LC}}(\varepsilon)
\;\le\;
\delta_{\mathrm{joint}}(\varepsilon),
\label{eq:linmix-pld-upper}
\end{equation}
where $\delta_{\mathrm{joint}}(\varepsilon)$ is computed from $\omega_{\mathrm{joint}}$.

\textbf{(3) Tightness.}
The bounds in Eq. \eqref{eq:linmix-rdp-sum}-\eqref{eq:linmix-pld-upper} are worst-case tight: there exist mechanisms
$\{\mathcal{M}_i\}_{i=1}^N$ under which the equality holds. 
\end{theorem}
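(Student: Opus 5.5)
Parts (1) and (2) follow from post-processing applied to the joint release, and part (3) needs an explicit construction.

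\textbf{Parts (1) and (2).} Define the joint mechanism $\mathcal{M}_{\mathrm{joint}}(D)=(\theta_1(D),\ldots,\theta_N(D))$. Because the inputs use fresh independent randomness conditioned on $D$, its output law on $D$ is the product $P_1^D\otimes\cdots\otimes P_N^D$.
\begin{itemize}
\item R\'enyi divergence tensorizes over product measures, so $\mathcal{D}_\alpha$ between the joint laws on neighbors $D\sim D'$ equals $\sum_i \mathcal{D}_\alpha(P_i^D\|P_i^{D'})\le\sum_i\varepsilon_{\alpha,i}$.
\item Likewise, the privacy loss random variable of a product is the sum of independent per-coordinate losses. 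Hence the joint PLD is $\omega_1\star\cdots\star\omega_N$, which is the standard PLD composition of \cite{sommer2018privacy}.
\item The LC output is $\theta(D)=g(\mathcal{M}_{\mathrm{joint}}(D))$ with $g(x_1,\ldots,x_N)=\sum_i\lambda_i x_i$. The map $g$ is deterministic, measurable, and does not depend on $D$.
\item Both the R\'enyi divergence and the hockey-stick divergence $\mathcal{H}_\varepsilon$ satisfy the data-processing inequality, so pushing forward through $g$ can only decrease them.
\end{itemize}
This yields Eq.~\eqref{eq:linmix-rdp-sum} and Eq.~\eqref{eq:linmix-pld-upper}, where the pessimistic estimates dominate the actual laws for every neighboring pair. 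The $\max$ over both orderings in $\delta(\varepsilon)$ is handled symmetrically.

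\textbf{Part (3), tightness.} We need mechanisms for which $g$ loses no information, i.e.\ the joint release is recoverable from $\theta$. My first construction embeds the models in disjoint coordinate blocks. Take $p\ge N$ and let $\mathcal{M}_i$ output $\theta_i(D)=e_i\,(f(D)+Z_i)$, where $e_i$ is the $i$-th standard basis vector, $f$ is a scalar query, and $Z_i\sim\mathcal N(0,\sigma_i^2)$ independently.

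Then $\theta(D)=\sum_i\lambda_i e_i(f(D)+Z_i)$. Since every $\lambda_i>0$, the tuple $(\theta_1,\ldots,\theta_N)$ is recovered by reading coordinate $i$ and dividing by $\lambda_i$. So $g$ is injective on the support and measurably invertible there, and data processing holds with equality in both directions.

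It remains to check that the RDP profile of each $\mathcal{M}_i$ is exactly $\varepsilon_{\alpha,i}$, i.e.\ the supremum is attained. For the Gaussian mechanism it is attained at a worst-case neighboring pair, which also attains the joint supremum. The same pair is simultaneously worst for every $i$, so the sum is achieved.

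If $p=1$ is required, I would instead use discrete mechanisms whose supports are separated at different scales. For example, $\theta_i$ takes values in $\{0,b^{i}\}$ with distinct bases, so that the sum $\sum_i\lambda_i\theta_i$ still encodes each summand uniquely.

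\textbf{Main obstacle.} The hard step is the tightness statement's meaning. For arbitrary prescribed profiles, equality requires realizing each profile by some mechanism with a common worst-case neighboring pair. I would interpret the claim as existence of some mechanisms for which equality holds, which the Gaussian block construction gives cleanly.
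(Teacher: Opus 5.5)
Your proposal is correct and follows the paper's proof: post-processing of the joint release plus tensorization (resp.\ PLD convolution) for parts (1)--(2), and a disjoint-coordinate-block embedding, invertible because every $\lambda_i>0$, for the tightness claim. Using a common scalar query $f$ with Gaussian noise is a worthwhile refinement: it makes all per-mechanism suprema attained at the same neighboring pair, so equality holds with $\sum_i\varepsilon_{\alpha,i}$ itself (and likewise with $\delta_{\mathrm{joint}}$), a point the paper's construction with arbitrary block mechanisms $\mathcal{M}_i'$ leaves implicit.
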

Theorem~\ref{thm:linmix-profile-bound} (proved in Appendix~\ref{app:proof:lc_worst_case}) establishes that the privacy parameter of the deterministic linear combination is fully controlled by that of the joint model, without additional assumptions.
Moreover, the bound is worst-case tight; so LC satisfies no better privacy guarantee gain beyond the joint release $\{\theta_1,\cdots, \theta_N\}$. 
Therefore, without additional information beyond the input models' privacy parameters, one cannot derive a universally tighter privacy guarantee for LC. 
In particular, the resulting joint-release bounds are
substantially larger than the RS bounds in Theorems~\ref{prop:rdp-mixture} and~\ref{prop:pld-convex-mixture}, which are always
upper-bounded by the maximum RDP/PLD parameter among all input models. This highlights that, without further
structure, LC is not guaranteed to outperform RS.

\vspace{-0.04in}
\subsection{Linear Combination under DP-SGD} \label{sec:method:lc}
\vspace{-0.04in}

Here, we assume that the input models are trained independently via DP-SGD~\cite{abadi2016deep}, one of the most popular DP  algorithms in ML. For clarity and ease of analysis, we consider the full-batch version.  

At step \(t=1,\dots,T\), each mechanism \(i\in[N]\) computes a clipped gradient
$
\tilde g_i^{(t)}=\mathrm{clip}\big(g_i^{(t)},C_i^{(t)}\big)
$
such that $\|\tilde g_i^{(t)}\|_2\le C_i^{(t)}$, and add Gaussian noise
$
e_i^{(t)}\sim \mathcal N\!\Big(0,\big(\sigma_i^{(t)}C_i^{(t)}\big)^2 I\Big),
$
independently across \(i\). The local update is
\[
\theta_i^{(t)}=\theta_i^{(t-1)}-\eta_i^{(t)}\big(\tilde g_i^{(t)}+e_i^{(t)}\big).
\]
Define the model output via LC after step $t$ as $
\theta^{(t)}=\sum_{i=1}^N\lambda_i\,\theta_i^{(t)}.
$
The combined update can therefore be written as
{
\setlength\abovedisplayskip{0pt}
\setlength\belowdisplayskip{0pt}
\[
\theta^{(t)}-\theta^{(t-1)}
\;=\;
-\sum_{i=1}^N\lambda_i\,\eta_i^{(t)}\big(\tilde g_i^{(t)}+e_i^{(t)}\big)
\;:=\;
-\big(\overline g_\lambda^{(t)}+\overline e_\lambda^{(t)}\big),
\]
}
where $\overline g_\lambda^{(t)}$$:=$$\sum_{i=1}^N\lambda_i\eta_i^{(t)}\tilde g_i^{(t)}$ and
$\overline e_\lambda^{(t)}$$:=$$\sum_{i=1}^N\lambda_i\eta_i^{(t)}e_i^{(t)}$.
Note that the aggregated noise $\overline e_\lambda^{(t)}$ follows the distribution $N\!\big(0,(s_\lambda^{(t)})^2I\big)$, where
$(s_\lambda^{(t)})^2$$:=$$\sum_{i=1}^N\big(\lambda_i\,\eta_i^{(t)}\,\sigma_i^{(t)}\,C_i^{(t)}\big)^2.$
Define the mixed sensitivity 
$\Delta_\lambda^{(t)}=\sum_{i=1}^N \lambda_i \eta_i^{(t)} \Delta_i^{(t)}$, where $\Delta_i^{(t)}$ is the sensitivity of mechanism \(i\) for step $t$.

A subtle point is that standard RDP and PLD composition (shown in Appendix \ref{app:composition theory}) is applied by conditioning on the previously accounted output. 
In our LC analysis, this means conditioning on the previous merged parameter \(\theta^{(t-1)}\), rather than on the individual parameter  \(\{\theta_i^{(t-1)}\}_{i=1}^N\). 
Consequently, given dataset $D$, the individual gradients \(\widetilde g_i^{(t)}\) are not fixed after conditioning on \(\theta^{(t-1)}\); the conditional distribution of the combined update \(\theta^{(t)}-\theta^{(t-1)}\) is generally a mixture over all possible values of \(\{\theta_i^{(t-1)}\}_{i=1}^N\), rather than a single Gaussian. 

Let $\mathcal F_{t-1}$ denote the sigma field generated by $\theta^{(t-1)}$.
The following results show that, although the conditional distribution of \(\{\theta_i^{(t-1)}\}_{i=1}^N\) is unknown and the resulting combined update is generally non-Gaussian, valid RDP and PLD accounting bounds for LC can still be obtained.

\begin{theorem}
\label{thm:rdp-linmix}
For any order $\alpha>1$, conditional on $\mathcal{F}_{t-1}$, the RDP bound at step-$t$ for the LC mechanism satisfies:
\begin{equation}\label{ineq:perstep-LC}
\varepsilon_\alpha^{(t)}(\lambda)
\leq 
\alpha(\Delta_\lambda^{(t)})^2 / \left(2(s_\lambda^{(t)})^2\right).
\end{equation}
\vspace{-0.5em}
\end{theorem}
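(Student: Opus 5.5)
Conditional on $\mathcal F_{t-1}$ (i.e.\ on the merged iterate $\theta^{(t-1)}$), the step-$t$ output under dataset $D$ is a mixture. Let $Z=(\theta_1^{(t-1)},\ldots,\theta_N^{(t-1)})$ be the latent individual iterates. Given $Z=z$, the update is Gaussian, $\theta^{(t)}-\theta^{(t-1)}\mid z\sim\mathcal N\big(-\overline g_\lambda^{(t)}(z;D),(s_\lambda^{(t)})^2I\big)$. This holds because the fresh noises $e_i^{(t)}$ are independent across $i$ and of the past, so their weighted sum is exactly $\mathcal N(0,(s_\lambda^{(t)})^2I)$. Hence the conditional law under $D$ is $P=\int \mathcal N(-\overline g_\lambda(z;D),s^2I)\,d\mu_D(z\mid\theta^{(t-1)})$, and analogously $Q$ under $D'$ with mixing measure $\mu_{D'}$. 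The plan is to bound $\mathcal D_\alpha(P\|Q)$ by the R\'enyi divergence between two Gaussians whose means differ by at most $\Delta_\lambda^{(t)}$. That Gaussian divergence equals $\alpha\|\cdot\|^2/(2s^2)\le \alpha(\Delta_\lambda^{(t)})^2/(2(s_\lambda^{(t)})^2)$.

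\textbf{Step 1: sensitivity for a fixed latent state.} For any fixed $z$, $\|\overline g_\lambda(z;D)-\overline g_\lambda(z;D')\|\le\sum_i\lambda_i\eta_i^{(t)}\|\tilde g_i(z_i;D)-\tilde g_i(z_i;D')\|\le\Delta_\lambda^{(t)}$ by the triangle inequality and the definition of $\Delta_i^{(t)}$. So if both mixtures used the \emph{same} mixing measure, joint quasi-convexity of the R\'enyi divergence (more precisely, convexity of $e^{(\alpha-1)\mathcal D_\alpha}$ under common mixing) gives the Gaussian bound at once.

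\textbf{Step 2 (main obstacle): the mixing measures differ.} The conditional law of $Z$ given $\theta^{(t-1)}$ depends on the dataset through the earlier iterates, so $\mu_D\neq\mu_{D'}$ in general. The resolution I would pursue is the standard adaptive-composition convention, as in Appendix~\ref{app:composition theory}: the per-step RDP bound is a bound on the mechanism that maps the accounted history to the next output. Only the fresh step-$t$ randomness is charged at step $t$; the dependence of the history on $D$ is charged to earlier steps through the chain rule for R\'enyi divergence. Concretely, I would treat the full state $Z$ as an auxiliary (unreleased) part of the history. Composition over the extended history $(Z^{(1)},\dots)$ upper-bounds composition over the merged iterates by post-processing (data processing), since $\theta^{(t)}$ is a function of the $Z$'s. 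Conditioning on $Z^{(t-1)}=z$ puts us in the common-mixing case of Step 1, with identical $z$ under $D$ and $D'$. The per-step quantity is then exactly the Gaussian divergence bounded there.

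If one insists on conditioning only on $\mathcal F_{t-1}$, the fallback is to couple $\mu_D$ and $\mu_{D'}$. Use a coupling $\gamma$ of the latent states. Write $P$ and $Q$ as mixtures over $\gamma$ of Gaussians with means $-\overline g_\lambda(z;D)$ and $-\overline g_\lambda(z';D')$, and apply joint convexity over $\gamma$. This additionally requires $\|\overline g_\lambda(z;D)-\overline g_\lambda(z';D')\|\le\Delta_\lambda^{(t)}$ on the support of $\gamma$. That condition is immediate when $\Delta_i^{(t)}$ is interpreted as a worst-case bound over parameters, e.g.\ $\Delta_i^{(t)}=2C_i^{(t)}$, since $\|\tilde g_i\|\le C_i^{(t)}$ regardless of $z_i$. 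I would adopt this interpretation to make the argument robust.

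\textbf{Step 3.} Finally, I compute $\mathcal D_\alpha(\mathcal N(m_1,s^2I)\|\mathcal N(m_2,s^2I))=\alpha\|m_1-m_2\|^2/(2s^2)$. Combining this with the convexity step and the mean-gap bound $\Delta_\lambda^{(t)}$ yields Eq.~\eqref{ineq:perstep-LC}.
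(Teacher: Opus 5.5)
Your fallback argument is correct and is essentially the paper's proof. The paper writes both conditional laws as Gaussian mixtures over the regular conditional distributions $\kappa(\cdot\mid D,\mathcal F_{t-1})$ and $\kappa(\cdot\mid D',\mathcal F_{t-1})$ of the hidden state $H=(\theta_1^{(t-1)},\dots,\theta_N^{(t-1)})$. It bounds the mean gap uniformly over \emph{all} pairs $(H,H')$ by $2\sum_i\lambda_i\eta_i^{(t)}C_i^{(t)}=\Delta_\lambda^{(t)}$, using only $\|\tilde g_i^{(t)}\|_2\le C_i^{(t)}$. It then applies joint convexity, the continuous analogue of Lemma~\ref{lem:weighted-holder}. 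The paper's PLD proof mixes explicitly over the product measure $\kappa(\cdot\mid D,\mathcal F_{t-1})\otimes\kappa(\cdot\mid D',\mathcal F_{t-1})$, and that product measure is exactly your coupling $\gamma$. Because the Gaussian bound is uniform in $(z,z')$, any coupling works and no clever choice is needed. You should make this the proof and drop the route you presented as your main resolution, because that route fails.

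\textbf{Why the extended-history route fails.} Adaptive composition (Theorem~\ref{prop:rdp-adaptive-composition}) only lets you condition step $t$ on what has already been released and charged.
\begin{itemize}
\item If the accounted history is $(Z^{(1)},\dots,Z^{(T)})$, then step $t$ releases the whole state $Z^{(t)}$, i.e.\ all $N$ individual noisy updates. The worst case of its conditional divergence given $Z^{(t-1)}=z$ is $\sum_i\alpha(\Delta_i^{(t)})^2/(2(\sigma_i^{(t)}C_i^{(t)})^2)$, not the merged-update divergence from your Step~1. By Cauchy--Schwarz, $(\Delta_\lambda^{(t)})^2/(s_\lambda^{(t)})^2\le\sum_i(\Delta_i^{(t)}/(\sigma_i^{(t)}C_i^{(t)}))^2$, usually strictly. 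So this route only reproduces the joint-release bound of Theorem~\ref{thm:linmix-profile-bound}.
\item If instead only the merged update is released at step $t$, then $Z^{(t-1)}$ is not in the transcript, and you are not entitled to condition on it. No chain rule for R\'enyi divergence charges the data dependence of an unreleased latent state to earlier steps for free.
\end{itemize}
The mixture over hidden states that you tried to avoid is exactly what a statement conditioned on $\mathcal F_{t-1}=\sigma(\theta^{(t-1)})$ must handle. Conditioning on the finer $\sigma(Z^{(t-1)})$ proves a different statement.

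The price of handling the mixture via a coupling is that $\Delta_i^{(t)}$ must be a state-independent bound such as $2C_i^{(t)}$, not a sensitivity at a fixed state. That is also the bound the paper uses.
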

Full proof of Theorem~\ref{thm:rdp-linmix} is in Appendix~\ref{app:proof:rdp-linmix}. In practice, we apply Theorem~\ref{thm:rdp-linmix} by summing the per-step privacy costs over all $T$ iterations using the standard RDP composition rule in Appendix \ref{app:composition theory}, and
then converting the resulting RDP parameters into DP guarantees. 
We summarize the accounting procedures in Algorithm \ref{algo:LC-combined} (left) in the appendix. 
For PLD accounting, we have the following results.

\begin{theorem}
\label{thm:pld-linmix}
     Let $P^{(t)}$ and $Q^{(t)}$ denote the distribution of $\theta^{(t)}-\theta^{(t-1)}$ under $D$ and $D'$ conditional on $\mathcal{F}_{t-1}$. Consider the practical surrogate privacy loss random variable
{\setlength\abovedisplayskip{0pt}
\setlength\belowdisplayskip{0pt}
\begin{equation}
\label{eq:randomv-LC-surrogate}
\tilde L^{(t)}
=
\log
\frac{
\phi\!\left(
Z^{(t)}; \Delta_\lambda^{(t)}, (s_\lambda^{(t)})^2
\right)
}{
\phi\!\left(
Z^{(t)}; 0, (s_\lambda^{(t)})^2
\right)
}, \qquad Z^{(t)}\sim \mathcal N(0,(s_\lambda^{(t)})^2),
\end{equation}}
where $\phi(\cdot;\mu,\sigma^2)$ denotes Gaussian density with mean $\mu$ and variance $\sigma^2$.
Then, for every $\varepsilon \ge 0$, 
\begin{align*}
\mathcal H_\varepsilon(P^{(t)},Q^{(t)})
\le
\mathbb E[(e^{\tilde L^{(t)}}-e^\varepsilon)_+],
\quad
\mathcal H_\varepsilon(Q^{(t)},P^{(t)})
\le
\mathbb E[(1-e^{\varepsilon+\tilde L^{(t)}})_+].
\end{align*}
\end{theorem}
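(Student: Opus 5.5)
Conditional on $\mathcal F_{t-1}$ and on $D$, we view the combined update $\theta^{(t)}-\theta^{(t-1)}$ as a mixture of Gaussians. Let $W=\{\theta_i^{(t-1)}\}_{i=1}^N$ denote the unobserved individual parameters. Given $W=w$, the clipped gradients $\tilde g_i^{(t)}$ are fixed, so the update is distributed as $\mathcal N(-\overline g_\lambda^{(t)}(w;D),(s_\lambda^{(t)})^2 I)$. Under $D'$ the mean is $-\overline g_\lambda^{(t)}(w;D')$, with the same covariance. Since $\|\tilde g_i(w;D)-\tilde g_i(w;D')\|\le \Delta_i^{(t)}$ for each $i$, the triangle inequality gives
\[
\|\overline g_\lambda^{(t)}(w;D)-\overline g_\lambda^{(t)}(w;D')\|\le \Delta_\lambda^{(t)} .
\]
Thus $P^{(t)}=\int \mathcal N(m_w,s^2I)\,d\mu(w)$ and $Q^{(t)}=\int \mathcal N(m'_w,s^2I)\,d\mu'(w)$, where $\mu,\mu'$ are the conditional laws of $W$ given $\mathcal F_{t-1}$ under $D$ and $D'$. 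Here we write $s=s_\lambda^{(t)}$.

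The crux is that $\mu$ and $\mu'$ need not coincide. I would handle this as in the proof of Theorem~\ref{thm:rdp-linmix}, treating the accounting as composition where the previously released $\theta^{(t-1)}$ is the conditioning variable. Composition only requires a bound for the worst case over conditioning, so it suffices to bound the divergence between the step-$t$ kernels for each fixed value of the hidden state, coupling $\mu=\mu'$. Concretely, I would couple the hidden states under $D$ and $D'$ via the same $w$, which is the pessimistic-estimate convention used in Theorem~\ref{prop:pld-convex-mixture}. This step is the main obstacle, and I would justify it exactly the way the RDP proof of Theorem~\ref{thm:rdp-linmix} does.

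With the mixing measure shared, joint convexity of $\mathcal H_\varepsilon$ (already invoked before Theorem~\ref{prop:pld-convex-mixture}) gives
\[
\mathcal H_\varepsilon(P^{(t)},Q^{(t)})\le \sup_w \mathcal H_\varepsilon\big(\mathcal N(m_w,s^2I),\mathcal N(m'_w,s^2I)\big).
\]
For Gaussians with equal isotropic covariance, the hockey-stick divergence depends only on $\|m_w-m'_w\|$ via a rotation to one dimension. It is nondecreasing in that distance, since the Gaussian $\delta(\varepsilon)$ formula $\Phi(-\varepsilon s/d+d/2s)-e^\varepsilon\Phi(-\varepsilon s/d-d/2s)$ is increasing in $d$. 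Equivalently, a smaller shift is a post-processing of the larger one by adding independent noise along the direction. Hence the supremum is at most the one-dimensional value at shift $\Delta_\lambda^{(t)}$, namely $\mathcal H_\varepsilon(\mathcal N(\Delta_\lambda^{(t)},s^2),\mathcal N(0,s^2))$.

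Finally, we express this value through the surrogate loss $\tilde L^{(t)}$. By the standard identity $\mathcal H_\varepsilon(P,Q)=\mathbb E_{Q}[(dP/dQ-e^\varepsilon)_+]$ with $Q=\mathcal N(0,s^2)$ and $P=\mathcal N(\Delta_\lambda^{(t)},s^2)$, we get $dP/dQ(Z)=e^{\tilde L^{(t)}}$ with $Z\sim\mathcal N(0,s^2)$. This yields the first inequality. For the reverse direction, apply the same monotonicity argument with the roles swapped, which bounds by $\mathcal H_\varepsilon(\mathcal N(0,s^2),\mathcal N(\Delta,s^2))$. Then write this as an expectation under $P$, namely $\mathbb E_P[(1-e^{\varepsilon}\,dQ/dP)_+]$. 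Changing variables, via the symmetry $Z\mapsto \Delta-Z$ of the Gaussian pair, gives the stated form $\mathbb E[(1-e^{\varepsilon+\tilde L^{(t)}})_+]$. This last step is routine bookkeeping of likelihood ratios.
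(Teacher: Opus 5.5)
Your proposal has a genuine gap at the step you yourself flag as the crux. The laws $P^{(t)}=\int \mathcal N(m_w,s^2I)\,d\mu(w)$ and $Q^{(t)}=\int\mathcal N(m'_w,s^2I)\,d\mu'(w)$ are fixed by conditioning on the \emph{released} $\theta^{(t-1)}$, so you are not free to ``couple $\mu=\mu'$''.

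Joint convexity of $\mathcal H_\varepsilon$ does give $\mathcal H_\varepsilon(P^{(t)},Q^{(t)})\le\int \mathcal H_\varepsilon(\mathcal N(m_w,s^2I),\mathcal N(m'_{w'},s^2I))\,d\gamma(w,w')$ for any coupling $\gamma$ of $(\mu,\mu')$. However, a coupling supported on the diagonal $\{w=w'\}$ exists only when $\mu=\mu'$. That fails in general, because the histories under $D$ and $D'$ induce different conditional laws on the hidden individual parameters.

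Your appeal to composition does not rescue this. Adaptive composition conditions on what has been released, and the hidden state $\{\theta_i^{(t-1)}\}$ is not a function of $\theta^{(t-1)}$. Conditioning on it amounts to accounting for its release, which is the joint-release bound of Theorem~\ref{thm:linmix-profile-bound}. Nor is this the pessimistic-estimate convention of Theorem~\ref{prop:pld-convex-mixture}, which concerns dominating pairs for each output law, not matching of mixing measures. So your matched-state sensitivity $\|\overline g_\lambda^{(t)}(w;D)-\overline g_\lambda^{(t)}(w;D')\|\le\Delta_\lambda^{(t)}$ is not the quantity the argument needs.

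The missing idea is a sensitivity bound that is uniform over \emph{mismatched} hidden states; this is how the paper proceeds, in both this proof and the RDP proof of Theorem~\ref{thm:rdp-linmix}. Every clipped gradient satisfies $\|\tilde g_i^{(t)}\|_2\le C_i^{(t)}$. Hence for all pairs $(H,H')$,
$\|\sum_i\lambda_i\eta_i^{(t)}(\tilde g_i^{(t)}(D;H)-\tilde g_i^{(t)}(D';H'))\|_2\le 2\sum_i\lambda_i\eta_i^{(t)}C_i^{(t)}=\Delta_\lambda^{(t)}$.
This is why the paper instantiates $\Delta_i^{(t)}=2C_i^{(t)}$. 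With this bound you can take the product coupling $\kappa(\cdot\mid D,\mathcal F_{t-1})\otimes\kappa(\cdot\mid D',\mathcal F_{t-1})$. You can then bound each integrand by the one-dimensional Gaussian value at shift $\Delta_\lambda^{(t)}$, using your rotation-and-monotonicity argument. The paper verifies monotonicity by differentiating in $r$ to get $\phi(a_r;r,s^2)\ge0$; your closed-form argument is an equivalent alternative.

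One smaller point concerns the reverse direction, where no change of variables is needed. $\mathcal H_\varepsilon(Q,P)=\mathbb E_Q[(1-e^{\varepsilon}\,dP/dQ)_+]$ with $Q=\mathcal N(0,s^2)$ is literally $\mathbb E[(1-e^{\varepsilon+\tilde L^{(t)}})_+]$. The expression $\mathbb E_P[(1-e^\varepsilon\,dQ/dP)_+]$ that you wrote is actually $\mathcal H_\varepsilon(P,Q)$. It reaches the right value only because the equal-variance Gaussian pair is symmetric.
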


\begin{corollary}
\label{cor:pld-linmix-compose}
Under the notation of Theorem~\ref{thm:pld-linmix}, assume $L^{(t)}$ is the real privacy loss random variable. 
Define
$
L := \sum_{t=1}^T L^{(t)},
\tilde L := \sum_{t=1}^T \tilde L^{(t)}.
$
Then, for every $\varepsilon \ge 0$, it holds that 
$$
\mathbb E\!\left[\bigl(e^{L}-e^\varepsilon\bigr)_+\right]
\;\le\;
\mathbb E\!\left[\bigl(e^{\tilde L}-e^\varepsilon\bigr)_+\right], \quad
\mathbb E\!\left[\bigl(1-e^{\varepsilon+L}\bigr)_+\right]
\;\le\;
\mathbb E\!\left[\bigl(1-e^{\varepsilon+\tilde L}\bigr)_+\right]
$$
\end{corollary}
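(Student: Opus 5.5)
The plan is to compose Theorem~\ref{thm:pld-linmix} over $t=1,\dots,T$ by a dominance (tradeoff-function) argument, via induction on $T$. The key reformulation is that for a pair $(P,Q)$ with privacy loss $L=\log(dP/dQ)$ under $P$, the functions $\varepsilon\mapsto\mathbb E[(e^{L}-e^\varepsilon)_+]$ and $\varepsilon\mapsto\mathbb E[(1-e^{\varepsilon+L})_+]$ are the hockey-stick divergences $\mathcal H_\varepsilon(P,Q)$ and $\mathcal H_\varepsilon(Q,P)$ (with the usual conventions for the direction of the loss). Both inequalities in the corollary therefore say that the full $T$-step pair $(P_{1:T},Q_{1:T})$ is dominated by the product Gaussian pair $\big(\prod_t \mathcal N(\Delta_\lambda^{(t)},(s_\lambda^{(t)})^2),\ \prod_t\mathcal N(0,(s_\lambda^{(t)})^2)\big)$, whose loss is exactly $\tilde L=\sum_t\tilde L^{(t)}$.

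First, I would note that since hockey-stick divergences for all $\varepsilon\ge 0$ (in both directions) characterize the tradeoff function, Theorem~\ref{thm:pld-linmix} says that, conditional on $\mathcal F_{t-1}$, the step-$t$ pair $(P^{(t)},Q^{(t)})$ is dominated by the Gaussian pair $G_t:=(\mathcal N(\Delta_\lambda^{(t)},s_t^2),\mathcal N(0,s_t^2))$. This holds uniformly over the conditioning value $\theta^{(t-1)}$, because the surrogate does not depend on it. Two side points need a remark. Values $\varepsilon<0$ follow from $\varepsilon\ge0$ by the standard symmetry identity $\mathcal H_{\varepsilon}(P,Q)=1-e^{\varepsilon}+e^{\varepsilon}\mathcal H_{-\varepsilon}(Q,P)$. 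And $G_t$ is symmetric (it is a Gaussian pair), so either direction of domination can be used.

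Second, I would invoke the adaptive composition theorem for dominating pairs (Appendix~\ref{app:composition theory}; cf.\ Zhu--Dong--Wang). If each step is dominated by $G_t$ conditionally on the past output, then the joint law of $(\theta^{(1)}-\theta^{(0)},\dots,\theta^{(T)}-\theta^{(T-1)})$ is dominated by $G_1\otimes\cdots\otimes G_T$. Since $\theta^{(T)}$ is a deterministic function of these increments, the post-processing inequality for $\mathcal H_\varepsilon$ transfers the bound to the released model. The proof is by induction. Write $\mathcal H_\varepsilon(P_{1:T},Q_{1:T})$ as an integral over the first $T-1$ outputs $y$ of the conditional divergence at step $T$, with $\varepsilon$ shifted by the accumulated loss. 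Bound the inner term by the Gaussian $T$-step function $g_T$, which is convex and nonincreasing in $e^{\varepsilon}$, so the bound stays monotone in the shifted argument. Then apply the induction hypothesis to the outer integral, viewed as the expectation of a convex-type functional of the likelihood ratio. Concretely, this uses that $\mathcal H_\varepsilon$ dominance for all $\varepsilon$ is equivalent to $\mathbb E_Q[\psi(dP/dQ)]\le \mathbb E_{\tilde Q}[\psi(d\tilde P/d\tilde Q)]$ for all convex $\psi$.

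The main obstacle is this induction step. The conditioning is on $\theta^{(t-1)}$, not on the full history of merged increments, and the real $L^{(t)}$ is a mixture-loss, so the conditional step laws must be checked to depend on the past only through $\mathcal F_{t-1}$ for the chain rule to be valid. I would handle this by conditioning on the whole merged trajectory, which still leaves $\{\theta_i^{(t-1)}\}$ uncertain. Theorem~\ref{thm:pld-linmix}'s proof bounds the step for every conditional mixture of the individual iterates, so the same Gaussian dominance holds under this finer conditioning, and the induction closes. The second inequality then follows identically, or from the first via the symmetry of the Gaussian surrogate.
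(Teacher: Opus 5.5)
Your argument is correct, and its skeleton matches the paper's proof (Lemma~\ref{lem:pld-linmix-compose}). Both induct on $T$, condition on the past, and bound the last step's conditional hockey-stick divergence by the surrogate's at the shifted argument $\varepsilon-L^{(1)}-\cdots-L^{(T-1)}$.

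You part ways in how you close the outer expectation. The paper uses no convexity. Once the last step is replaced by $\tilde L^{(T)}$, which is independent of the real past, it swaps the roles of the two summands by Fubini. The quantity becomes an expectation over $\tilde L^{(T)}$ of the $(T-1)$-step curve at argument $\varepsilon-\tilde L^{(T)}$, and the induction hypothesis is applied pointwise.

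You instead apply the induction hypothesis in convex-order form to $\psi(x)=x\,g_T(e^{\varepsilon}/x)$, the perspective of the convex Gaussian curve $g_T$. This is valid because perspectives of convex functions are convex, and here both likelihood ratios have mean one (all laws are Gaussian mixtures with a common nondegenerate covariance). The two routes are the same computation in disguise. Indeed, $\psi(x)=\mathbb E[(x\,e^{\tilde L^{(T)}}-e^{\varepsilon})_+]$ is a nonnegative mixture of hinges $(x-\gamma)_+$, which is exactly why the paper's pointwise argument works without the Blackwell/convex-order equivalence. Your framing costs that extra tool, but it places the result in the dominating-pair composition framework, where $f$-divergence bounds and the post-processing transfer to $\theta^{(T)}$ are immediate.

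Two of your side remarks repair points the paper glosses over:
\begin{itemize}
\item The Lemma assumes per-step dominance for all $\varepsilon\in\mathbb R$, while Theorem~\ref{thm:pld-linmix} is stated only for $\varepsilon\ge 0$. Your identity $\mathcal H_{\varepsilon}(P,Q)=1-e^{\varepsilon}+e^{\varepsilon}\mathcal H_{-\varepsilon}(Q,P)$ closes this gap.
\item The chain rule for $L=\sum_t L^{(t)}$ needs conditioning on the whole merged trajectory, not just on $\sigma(\theta^{(t-1)})$.
\end{itemize}

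One small correction: the identities $\mathbb E[(e^{L}-e^{\varepsilon})_+]=\mathcal H_\varepsilon(P,Q)$ and $\mathbb E[(1-e^{\varepsilon+L})_+]=\mathcal H_\varepsilon(Q,P)$ hold with $L=\log(dP/dQ)$ evaluated under $Q$, not under $P$. This matches the paper's convention $Z\sim\mathcal N(0,s^2)$.
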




As discussed above, the conditional distribution of 
$\theta^{(t)}-\theta^{(t-1)}$ is generally non-Gaussian, so the actual privacy-loss random variable $L^{(t)}$ does not admit a simple form. 
The surrogate variable $\widetilde L^{(t)}$ in Theorem~\ref{thm:pld-linmix} gives a tractable Gaussian upper bound for each step. 
Together with Corollary~\ref{cor:pld-linmix-compose} and the standard PLD composition rule in Appendix~\ref{app:composition theory}, this yields a practical LC accountant by convolving the per-step surrogate PLDs from Eq.~\eqref{eq:randomv-LC-surrogate}. 
Algorithm~\ref{algo:LC-combined} (right) summarizes the procedure; complete proofs are in Appendix~\ref{app:proof:pld-linmix}.

While LC does not enjoy the same clean utility/privacy tradeoff structure as RS in general, especially
for nonconvex models, many practical selection rules remain applicable. In particular, LC can still be
tuned effectively by evaluating performance on a public benchmark or a non-sensitive
validation set.

\textbf{Different Training Steps Across Input Models.}
In some applications, the  input candidate models may be trained with differential privacy for different numbers of iterations 
$\{T_i\}_{i=1}^N$. To make the accounting go through, we can align all runs to a common iteration
$T \triangleq \max_i T_i$. For any model $i$ with $T_i<T$, we could use an equivalent $T$-step representation by appending $T-T_i$ virtual steps.

\section{Comparison with Related DP Merging and Composition Methods}
\textbf{Recycling Checkpoints.}
Recent works \citep{shejwalkar2022recycling,indri2023can} improve private learning by aggregating  checkpoints from a single DP-SGD run. This algorithm differs fundamentally from ours because the candidate models share randomness and early optimization steps, thus affecting accounting. Our RS accounting only requires the individual RDP/PLD profiles of each candidate model, and therefore remains applicable to recycled checkpoints; see Appendix~\ref{subsec:rs-same-run} for empirical results. In contrast, our LC accounting relies on additional structure, in particular independent DP-SGD noise across input models. These assumptions fail for correlated checkpoints, so our LC guarantees do not apply in this setting. More specifically, LC over recycled checkpoints cannot guarantee privacy better than the least private checkpoint; see Appendix~\ref{subsec:rs-same-run} for details.

\textbf{Comparing with Other DP Composition.}
Classical DP composition bounds \citep[e.g.,][]{dwork2010boosting,kairouz2015composition} analyze the joint release of multiple private outputs using only their $(\varepsilon_i,\delta_i)$ guarantees. In contrast, our accounting uses richer privacy profiles, such as RDP parameters across orders or PLD curves, which typically yield tighter guarantees. Moreover, our bounds exploit the structure of the merging rule itself: RS uses the mixture structure of random selection through log-sum-exp and convexity, while LC can obtain tighter bounds when the input models are produced by independent DP-SGD runs. Thus, our accounting can be sharper than generic joint-release accounting, while also characterizing how the merging weights affect privacy. A formal comparison is provided in Proposition~\ref{prop:rdp-vs-drv10-gaussian}, where we show that RDP joint accounting improves over advanced composition.
\vspace{-.5em}

\vspace{-0.04in}
\section{Empirical Evaluation} \label{sec:exps}
\vspace{-0.04in}

In this section, we first present results on a toy mean estimation problem for RS and LC algorithms under both RDP and PLD accounting, corresponding to analysis in Section~\ref{sec: mean estimation}. Then on real-world datasets with both convex and non-convex models, we present the privacy/utility tradeoffs of a set of models output by RS and LC, and demonstrate tighter privacy bounds than baseline composition methods. Additional experiment details are discussed in Appendix \ref{app:experiment details}.

\vspace{-0.04in}
\subsection{Synthetic Data: Mean Estimation} \label{sec:exp-mean-est}
\vspace{-0.04in}

\begin{wrapfigure}[]{r}{0.45\textwidth} 
  \centering
  \vspace{-1em}
  \includegraphics[width=0.45\textwidth]{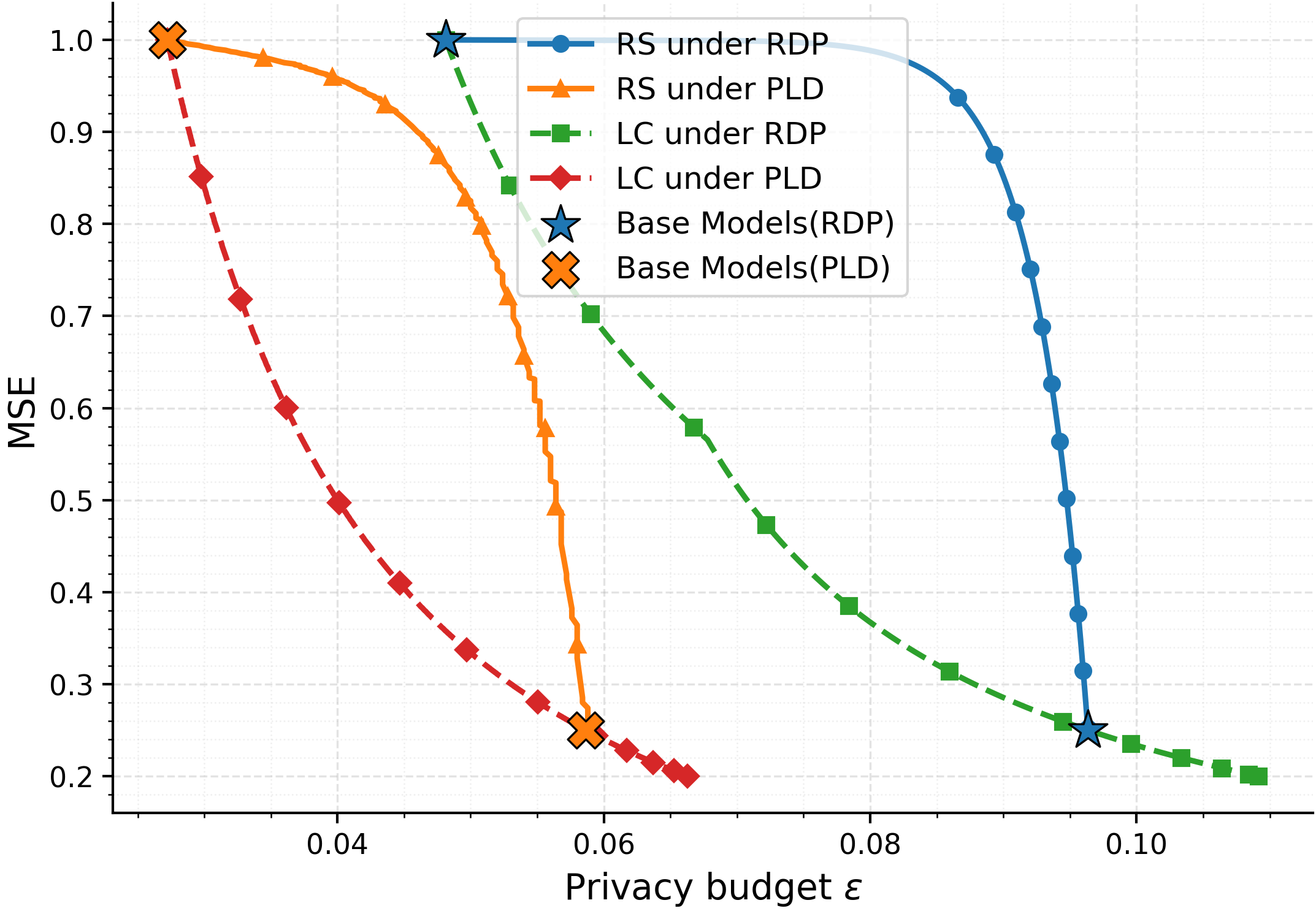}
  \caption{\small Privacy/utility tradeoffs of mean estimation ($\delta=10^{-5}$). Input models are also marked in the figure.}
  \label{fig:Mean-est}
  \vspace{-1em}
\end{wrapfigure}

We generate 
$X_1,\dots,X_n \overset{\mathrm{i.i.d.}}{\sim}\mathcal{N}(0,1)$ with $n=100$, and then clip the samples to enforce a
bounded domain, i.e., $X_j\leftarrow \mathrm{clip}(X_j,-1,1)$. The goal is to estimate the population mean $\mu$ for dataset
$D=\{X_1,\dots,X_n\}$. We generate two private estimates as input.
Figure~\ref{fig:Mean-est} reports the Pareto frontier between losses (mean squared error) and privacy budgets of the RS and LC algorithms under RDP and PLD accounting. PLD gives tighter privacy upper bounds than RDP.
Also, both methods achieve flexible privacy by tracing out a continuous MSE/privacy tradeoff as the target privacy level changes.
Moreover, LC consistently outperforms RS, validating our theoretical arguments in Section \ref{sec: mean estimation}.

\vspace{-0.04in}
\subsection{Results on Real Datasets}
\vspace{-0.04in}

We next evaluate our method on two standard benchmarks: MNIST~\citep{lecun2002gradient} and CIFAR-10 \citep{krizhevsky2009learning}, and train both convex and non-convex models.
We use the standard split of MNIST for training and testing, and train a logistic regression model with DP-SGD. 
For CIFAR-10, we use the standard training/test  split with a ResNet18 model. For both datasets, we train three input private models to cover two common sources of variation in practice. 
One model is obtained by changing a hyperparameter that does not affect the privacy accounting, and therefore has the same privacy guarantee but may exhibit different utility. 
Another model is obtained by changing a privacy-relevant hyperparameter, resulting in a different privacy/utility tradeoff. See Appendix~\ref{app:experiment details} for hyperparameters and implementation details.

\begin{figure}[h!]
\addtocounter{figure}{-1}
\centering
\begin{minipage}[h]{0.48\textwidth}
\vspace{-5em}
\centering
\begin{subfigure}[h]{0.43\linewidth}
    \centering
    \includegraphics[width=0.9\linewidth]{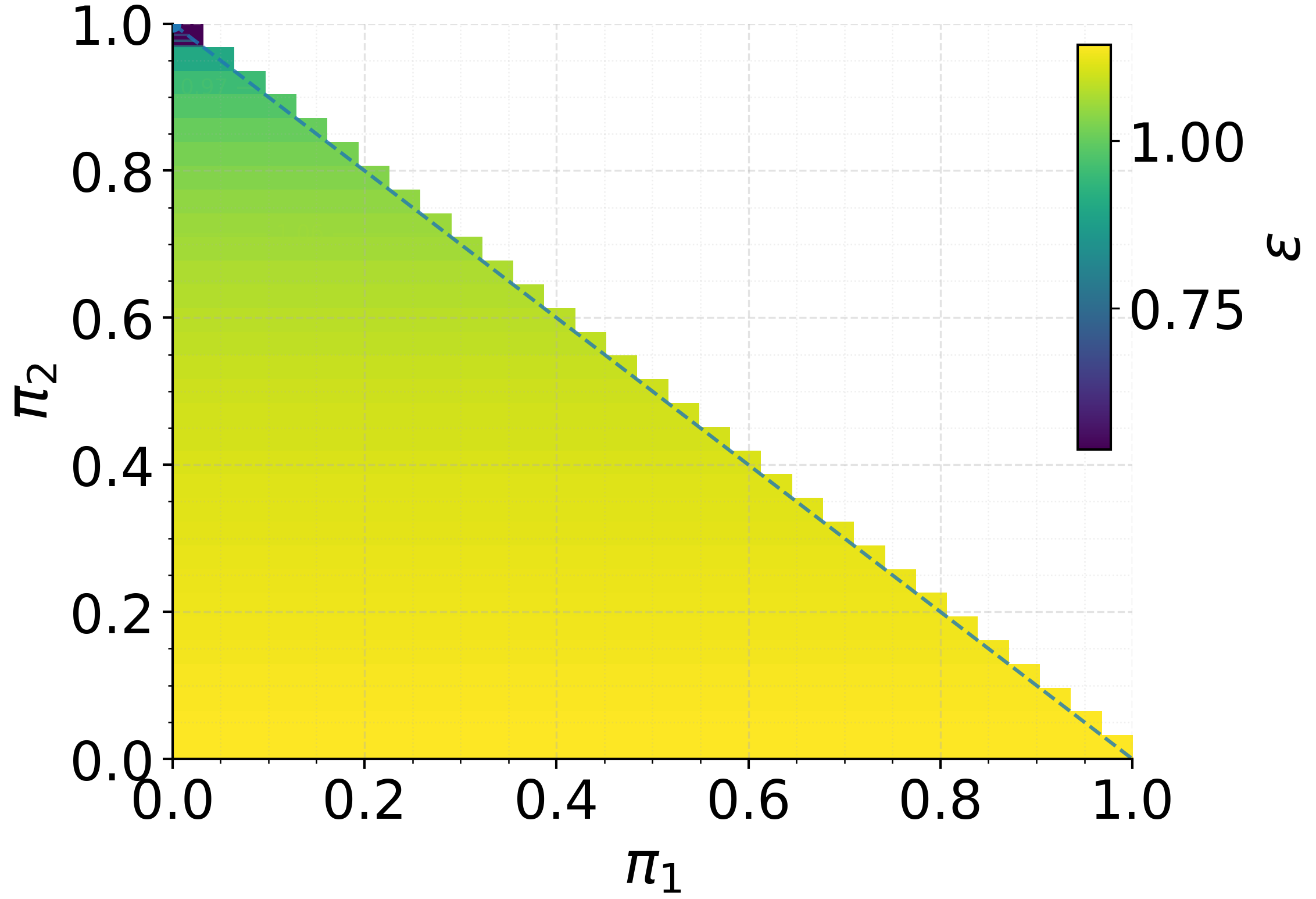}
    \caption{\small RS with RDP}
\end{subfigure}
\hfill
\begin{subfigure}[h]{0.43\linewidth}
    \centering
    \includegraphics[width=0.9\linewidth]{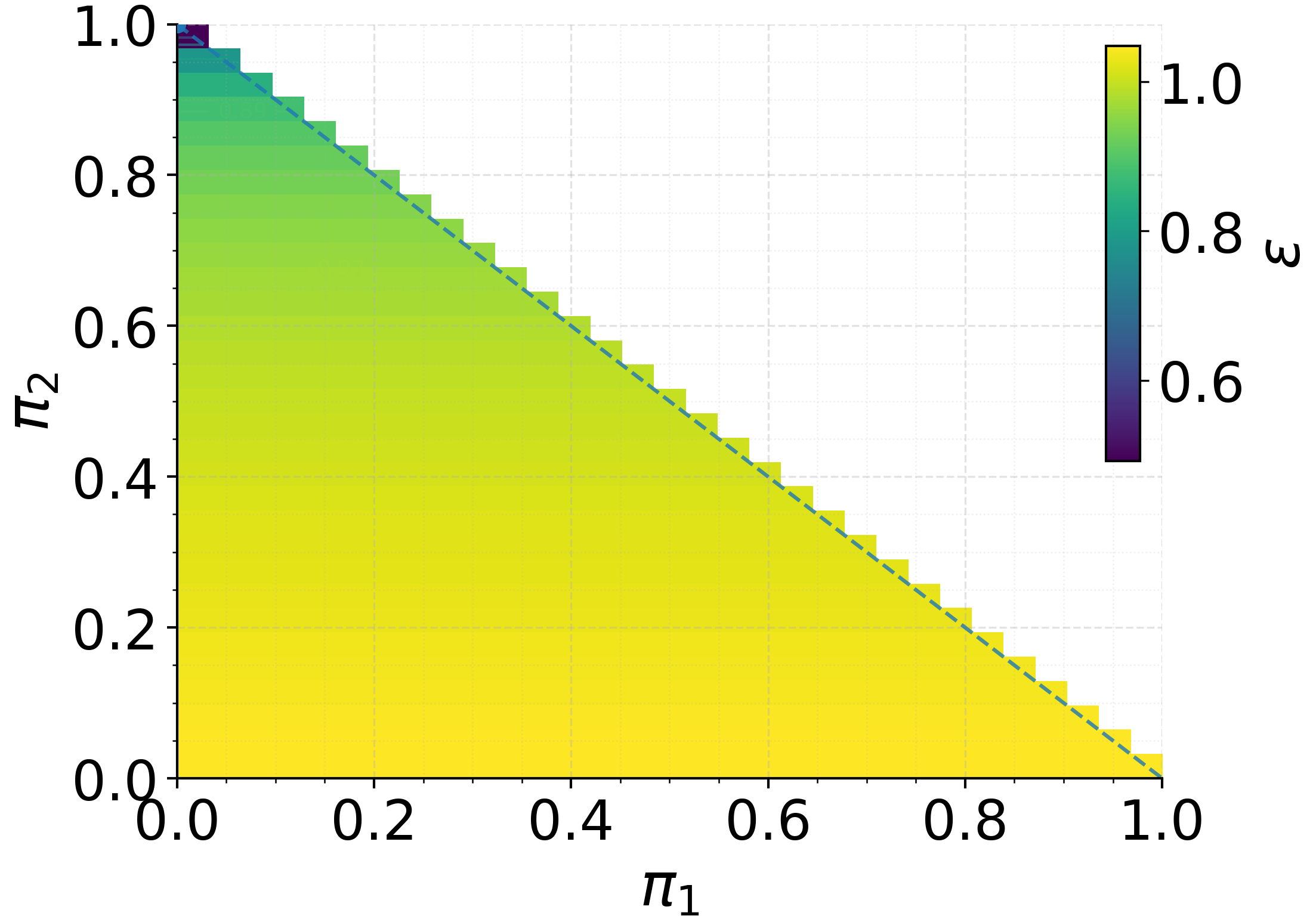}
    \caption{\small RS with PLD}
\end{subfigure}

\begin{subfigure}[h]{0.43\linewidth}
    \centering
    \includegraphics[width=0.9\linewidth]{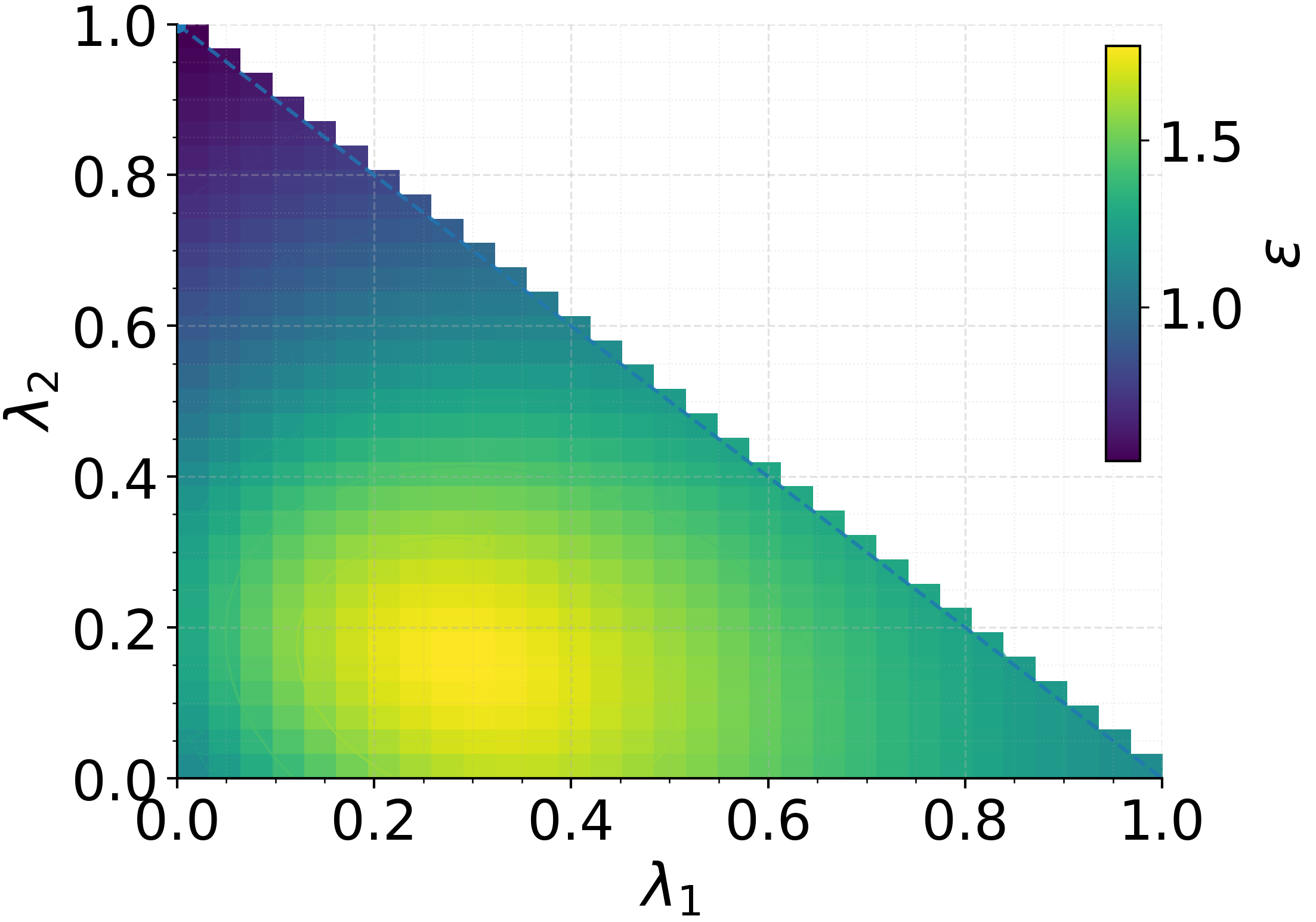}
    \caption{\small LC with RDP}
\end{subfigure}
\hfill
\begin{subfigure}[h]{0.43\linewidth}
    \centering
    \includegraphics[width=0.9\linewidth]{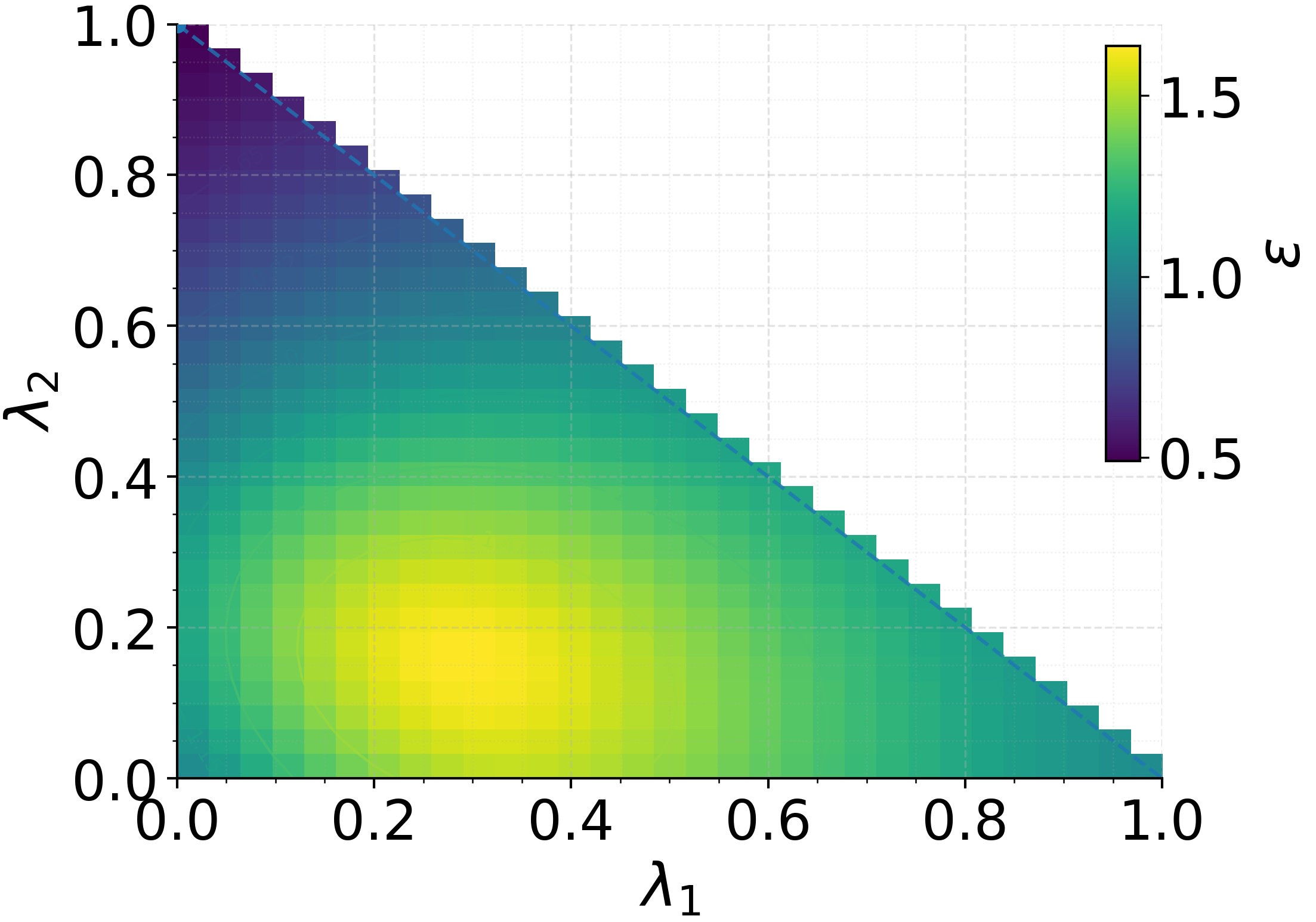}
    \caption{\small LC with PLD}
\end{subfigure}
\vspace{-0.03in}
\captionof{figure}{\small Heatmap of the privacy parameter $\varepsilon$ on MNIST ($\delta=10^{-5}$) as the merging weights vary when combining three models.}
\label{fig:Heatmap-MNIST}
\end{minipage}
\hfill
\begin{minipage}[t]{0.45\textwidth}
\centering
\includegraphics[width=\linewidth]{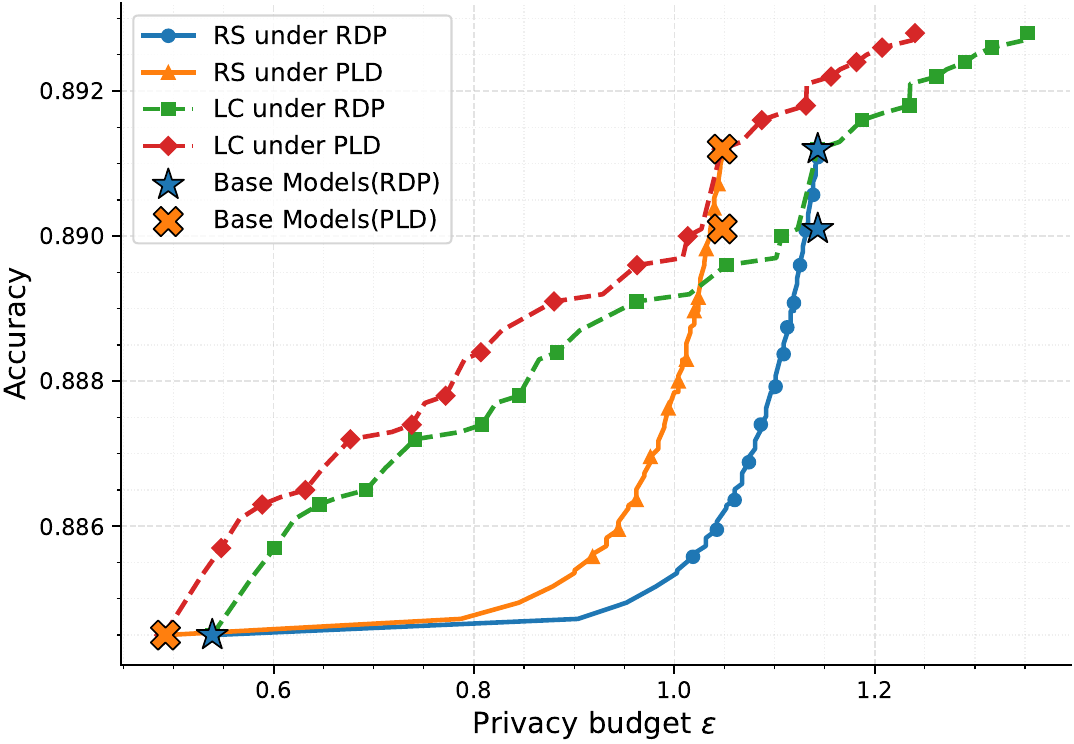}
\captionof{figure}{\small Privacy/utility tradeoffs of MNIST ($\delta=10^{-5}$) on a convex model. We see that PLD is tighter than RDP in practice, and LC generally achieves a better privacy/utility tradeoff than RS.}
\label{fig:pare-MNIST}
\end{minipage}
\vspace{-0.5em}
\end{figure}

Figure~\ref{fig:Heatmap-MNIST} reports the privacy changes as different mixing weights vary under both RS and LC with RDP and PLD.
Both methods successfully adapt to flexible privacy requirements. 
Figure~\ref{fig:pare-MNIST} shows the Pareto frontier of RS and LC with RDP and PLD on MNIST when combining three models. 
All methods exhibit a clear accuracy/privacy tradeoff. Across all settings, the $\varepsilon$ values guaranteed by PLD are consistently smaller than those obtained from RDP.

We next evaluate a nonconvex setting on CIFAR-10 using a pretraining-based pipeline. 
We randomly split the training set into two disjoint halves, $D_{\mathrm{pre}}$ and $D_{\mathrm{priv}}$. 
We first train a non-private ResNet18 model on $D_{\mathrm{pre}}$ using standard SGD, and then use this model as initialization for DP-SGD on $D_{\mathrm{priv}}$. 
Starting from this initialization, we train multiple private models with different DP-SGD hyperparameters, yielding different privacy/utility tradeoffs. 
We then apply RS and LC to these private models as in the MNIST experiment. 
The privacy guarantee in this experiment is with respect to $D_{\mathrm{priv}}$ only, since $D_{\mathrm{pre}}$ is used for non-private pretraining.

\begin{wrapfigure}[]{r}{0.45\textwidth} 
  \centering
  \vspace{-1em}
  \includegraphics[width=0.45\textwidth]{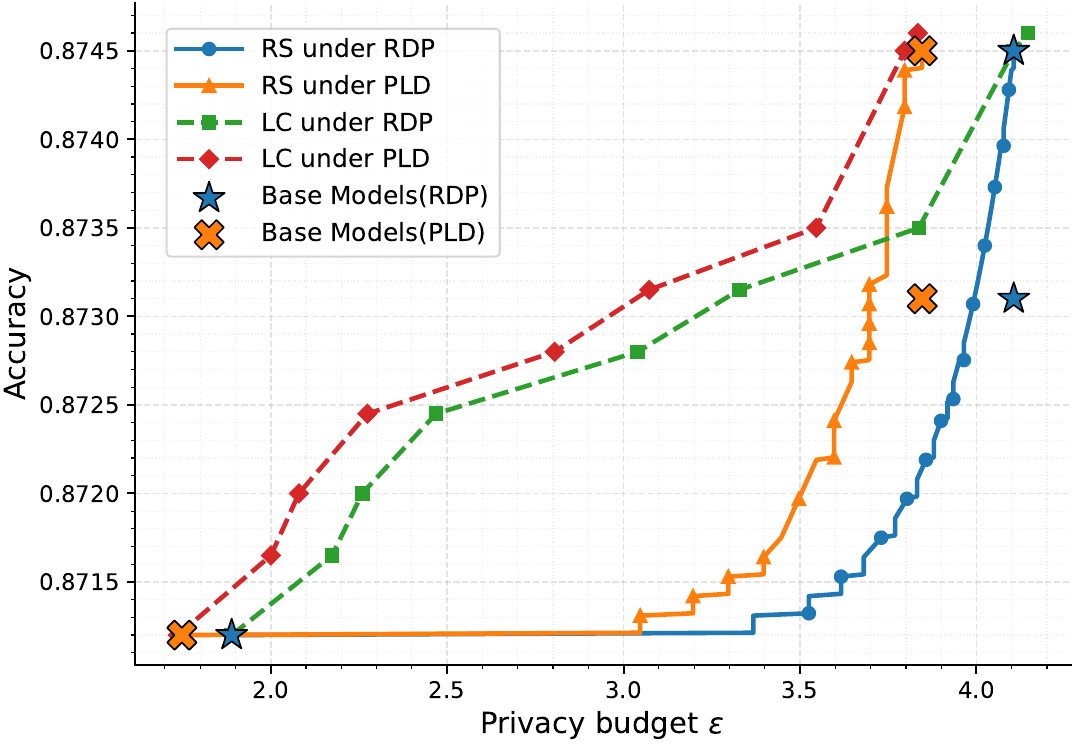}
  \captionof{figure}{\small Privacy/utility tradeoffs of model merging on CIFAR-10 (target $\delta=10^{-5}$). Input private models are finetuned from the same pretrained model.}
    \label{fig:pare-CIFAR-pretrain}
\vspace{-1em}
\end{wrapfigure}

Figure~\ref{fig:pare-CIFAR-pretrain} shows the Pareto frontiers of RS and LC under both RDP- and PLD-based accounting. 
Both methods enable flexible adaptation to different target privacy budgets without additional training, and PLD-based accounting remains consistently tighter than RDP-based accounting. 

Interestingly, in this pretraining-based setting, we also observe a phenomenon that was already present in some of our earlier experiments: the merged model may outperform all individual input models. This may be because a stronger common initialization places the input models in a more well-behaved region of the parameter space, making merging more effective at preserving shared structure while averaging out part of the DP-induced noise. Similar observations have also appeared in the non-private model merging literature \cite{wortsman2022model,matena2022merging}. This further highlights the value of our framework: model merging is not only a tool for adapting to different target privacy budgets after training, but may also provide utility gains relative to the input models.

\vspace{-0.15in}
\section{Conclusion, Limitation, and Future Directions} \label{sec: conclusion}
\vspace{-0.09in}
To the best of our knowledge, this is the first work that studies model merging to meet flexible privacy requirements during deployment time. We have proposed two merging strategies, based on random selection and linear combination, without any additional training steps. We provide principled privacy accounting based on both RDP and PLD, and our experiments demonstrate the practical accuracy/privacy
tradeoffs of the proposed methods across synthetic and real datasets and models.
Several directions remain open. One limitation of our current LC analysis is that it is developed for full-batch DP-SGD; extending the accounting to subsampled DP-SGD is an interesting direction for future work.
It would also be interesting to design and analyze
more sophisticated data-dependent merging algorithms that better handle challenging regimes such as nonconvex models. On the practical side, an important next step is to develop
effective and private procedures for tuning merging parameters (e.g., selecting $\lambda$ and $\pi$) under privacy constraints.

\bibliography{ref}
\bibliographystyle{plainnat}

\newpage
\appendix
\onecolumn

\section{Composition Theory and Algorithm for LC} \label{app: composition}
\subsection{Composition Theory} \label{app:composition theory}

We first recall the adaptive composition property of RDP, which justifies summing the conditional per-step RDP costs.

\begin{theorem}[Adaptive composition of RDP; Proposition 1 in \citep{mironov2017renyi}]
\label{prop:rdp-adaptive-composition}
Let $f:\mathcal D \to \mathcal Y_1$ be an $(\alpha,\varepsilon_1)$-RDP mechanism.
Let $g:\mathcal Y_1 \times \mathcal D \to \mathcal Y_2$ be such that, for every
$y_1\in\mathcal Y_1$, the mechanism $g(y_1,\cdot)$ satisfies
$(\alpha,\varepsilon_2)$-RDP. 
Then the adaptive composition mechanism
\[
h(D) = (Y_1,Y_2), 
\qquad Y_1\sim f(D), \quad Y_2\sim g(Y_1,D),
\]
satisfies $(\alpha,\varepsilon_1+\varepsilon_2)$-RDP.
\end{theorem}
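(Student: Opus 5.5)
The plan is to prove the statement directly from the definition of Rényi divergence, writing the joint law as a product of a marginal and a conditional. Fix neighboring datasets $D,D'$ and a fixed $\alpha>1$. Let $P_1=\mathrm{Law}(f(D))$, $Q_1=\mathrm{Law}(f(D'))$ and, for each $y_1$, $P_2(\cdot\mid y_1)=\mathrm{Law}(g(y_1,D))$, $Q_2(\cdot\mid y_1)=\mathrm{Law}(g(y_1,D'))$. The joint laws of $h(D)$ and $h(D')$ then have densities $p_1(y_1)p_2(y_2\mid y_1)$ and $q_1(y_1)q_2(y_2\mid y_1)$ with respect to a common dominating measure. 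We may assume $P_1\ll Q_1$ and $P_2(\cdot\mid y_1)\ll Q_2(\cdot\mid y_1)$, since otherwise the finite RDP hypotheses would be violated.

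The key step is to write
\[
\exp\!\big((\alpha-1)\mathcal D_\alpha(h(D)\|h(D'))\big)=\int p_1(y_1)^\alpha q_1(y_1)^{1-\alpha}\Big(\int p_2(y_2\mid y_1)^\alpha q_2(y_2\mid y_1)^{1-\alpha}\,dy_2\Big)dy_1 .
\]
For every fixed $y_1$, the inner integral equals $\exp\big((\alpha-1)\mathcal D_\alpha(P_2(\cdot\mid y_1)\|Q_2(\cdot\mid y_1))\big)$. Because $g(y_1,\cdot)$ is $(\alpha,\varepsilon_2)$-RDP for every $y_1$, this is at most $e^{(\alpha-1)\varepsilon_2}$ uniformly in $y_1$. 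Pulling this constant outside leaves $\int p_1^\alpha q_1^{1-\alpha}\le e^{(\alpha-1)\varepsilon_1}$. Multiplying the two bounds, taking logarithms, dividing by $\alpha-1>0$, and then taking the supremum over $D\sim D'$ gives $\varepsilon_1+\varepsilon_2$.

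The main obstacle is measure-theoretic rather than analytic. We need regular conditional distributions, that is, a Markov kernel $y_1\mapsto g(y_1,D)$, so that Tonelli's theorem applies to the nonnegative integrand. We also need measurability of the inner integral in $y_1$. I would simply assume $g$ is a Markov kernel on standard Borel spaces, which is the implicit setting of \citep{mironov2017renyi}. Since the statement is quoted from \citep{mironov2017renyi}, it is also legitimate to cite that proof and present the above computation only as a sketch.
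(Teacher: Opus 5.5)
Your proposal is correct and takes essentially the paper's route: the paper does not reprove this result but cites Proposition~1 of \citet{mironov2017renyi}, and that proof is your computation. It factors the joint density as $p_1(y_1)p_2(y_2\mid y_1)$, bounds the inner $y_2$-integral by $e^{(\alpha-1)\varepsilon_2}$ uniformly in $y_1$ using the hypothesis on every $g(y_1,\cdot)$, and then bounds the remaining integral by $e^{(\alpha-1)\varepsilon_1}$. Your remark about needing a Markov kernel on standard Borel spaces (for Tonelli and for measurability of the inner integral) is a reasonable tightening of the density-based argument in that reference.
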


Combining Theorem~\ref{thm:rdp-linmix} with Theorem~\ref{prop:rdp-adaptive-composition}, we obtain the composed RDP profile by summing the per-step costs and then convert it to an $(\varepsilon,\delta)$-DP guarantee using the standard RDP-to-DP conversion~\citep{mironov2017renyi}.

We also recall the standard PLD composition rule, which is used to compose the per-step PLDs.

\begin{theorem}[Composition of privacy loss distributions; Theorem 1 in \citep{sommer2018privacy}]
\label{thm:pld-composition}
For two independent mechanisms with privacy loss distributions $\omega$ and $\omega'$,
the privacy loss distribution of their joint release is given by convolution:
\[
\omega_c = \omega * \omega' .
\]
\end{theorem}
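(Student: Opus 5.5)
The plan is to prove Theorem~\ref{thm:pld-composition} straight from the definition of the privacy loss random variable. The key fact is that the log-likelihood ratio of a product measure is the sum of the individual log-likelihood ratios. Fix neighboring datasets $D,D'$. Let the first mechanism have output laws $P$ (under $D$) and $Q$ (under $D'$), and the second have $P'$ and $Q'$. Recall that the PLD $\omega$ is the law of
\[
L=\log\frac{dP}{dQ}(Y),\qquad Y\sim P,
\]
and similarly $\omega'$ is the law of $L'=\log\frac{dP'}{dQ'}(Y')$ with $Y'\sim P'$. Since the mechanisms use independent randomness, the joint release $(Y,Y')$ has law $P\otimes P'$ under $D$ and $Q\otimes Q'$ under $D'$.

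First, I would fix a common dominating measure, for instance $\mu=P+Q$ and $\mu'=P'+Q'$, so that all four laws have densities $p,q,p',q'$. The pair then has densities $p(y)p'(y')$ and $q(y)q'(y')$ with respect to $\mu\otimes\mu'$. Hence, for $(P\otimes P')$-almost every $(y,y')$,
\[
\log\frac{d(P\otimes P')}{d(Q\otimes Q')}(y,y')=\log\frac{p(y)}{q(y)}+\log\frac{p'(y')}{q'(y')}.
\]
The joint privacy loss is therefore $L_{\mathrm{joint}}=L+L'$, where $L$ depends only on $Y$ and $L'$ only on $Y'$. Under $P\otimes P'$ the variables $Y$ and $Y'$ are independent, so $L$ and $L'$ are independent with laws $\omega$ and $\omega'$. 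The law of their sum is thus $\omega*\omega'$, which is exactly $\omega_c$. The reverse direction (swapping the roles of $D$ and $D'$) is handled identically.

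The main obstacle I expect is the lack of absolute continuity. If $P\not\ll Q$, the privacy loss takes the value $+\infty$ on a set of positive $P$-mass, so $\omega$ has an atom at $+\infty$. I would work on the extended reals, where $+\infty+x=+\infty$ for every $x>-\infty$. I would check that the density identity above still holds: the set where $q(y)q'(y')=0$ but $p(y)p'(y')>0$ is exactly the set where $q(y)=0$ or $q'(y')=0$. Convolution must then be read as the law of the extended-real sum. In particular, the mass at $+\infty$ of $\omega_c$ equals $1-(1-\omega(\{\infty\}))(1-\omega'(\{\infty\}))$, which is consistent with $\delta$ at $\varepsilon=\infty$ composing correctly.

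Finally, I would remark on how the result is used in the paper. In the LC accountant the per-step mechanisms are adaptive, since step $t$ depends on $\theta^{(t-1)}$. The same argument still applies after conditioning on the first output $y$: the joint loss becomes $L+L'_y$, where $L'_y$ is the conditional privacy loss. If the conditional law of $L'_y$ is the same $\omega'$ for every $y$, independence is recovered. If it is only dominated by $\omega'$ in the hockey-stick sense, as the surrogates of Theorem~\ref{thm:pld-linmix} are, then the convolution gives an upper bound. From $\omega_c$ one then reads off
\[
\delta(\varepsilon)=\mathbb E\bigl[(1-e^{\varepsilon-L_{\mathrm{joint}}})_+\bigr],
\]
which equals $\mathcal H_\varepsilon(P\otimes P',Q\otimes Q')$.
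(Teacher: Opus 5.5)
Your proof is correct. The paper gives no proof of its own and imports the result from \citet{sommer2018privacy}; your argument is the standard one behind that cited theorem. For a fixed neighboring pair, the log-likelihood ratio of $P\otimes P'$ against $Q\otimes Q'$ splits as $L+L'$, with $L$ and $L'$ independent under $P\otimes P'$. Your extended-real treatment of the atom at $+\infty$ is the right way to make the convolution meaningful, since $+\infty$ never meets $-\infty$ ($L,L'>-\infty$ almost surely under $P,P'$).

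Your closing remark on the adaptive, surrogate-dominated case goes beyond this statement; it corresponds to the paper's Lemma~\ref{lem:pld-linmix-compose}. Note that this step needs the hockey-stick domination for every real $\varepsilon$, not only $\varepsilon\ge 0$, because the second step's profile is evaluated at the shifted point $\varepsilon-L^{(1)}$.
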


Combining with Theorem~\ref{thm:pld-linmix} and Corollary~\ref{cor:pld-linmix-compose}, we know that the PLD of the full mechanism can be upper bounded by convolving the practical surrogate PLDs.

\subsection{Algorithm for Setting Combination Coefficients for LC}

We summarize the accounting procedure for LC that output a set of coefficients $\lambda \in \Delta^{N-1}$ in Algorithm~\ref{algo:LC-combined} below.

\begin{algorithm}[h!]
\caption{Setting Combination Coefficients for Linear Combination (LC)}
\label{algo:LC-combined}
\begin{algorithmic}[1]

\State {\bfseries Input:} target privacy $(\varepsilon', \delta')$, orders grid $\mathcal{A}$ for RDP, 
       hyperparameter sets $\{\mathcal{S}_i\}_{i=1}^N$ of $\{\mathcal{M}_i\}_{i=1}^N$, total step $T$
\State {\bfseries Output:} set of combination coefficients $\lambda \in\Delta^{N-1}$ satisfying the target privacy.
\vspace{-0.07in}
\Statex \rule{\linewidth}{0.4pt}  
\Statex\noindent  
\begin{minipage}[t]{0.51\linewidth}
  \textit{Case 1: RDP accounting}
  \begin{algorithmic}[1]   
    \Function{DP\_Eps}{($\lambda,\delta$)}
    \For{$\alpha \in \mathcal{A}$}
    \For{$t=1$ to $T$}
        \State Compute the per-step RDP parameter $\varepsilon^{(t)}_{\alpha}(\lambda)$ via \eqref{ineq:perstep-LC}.
        \EndFor
    \State $\varepsilon_{\alpha}^{\text{LC}}(\lambda) \gets \sum_{t=1}^T\varepsilon^{(t)}_{\alpha}(\lambda)$
    \State $\varepsilon_{\alpha}(\lambda;\delta) \gets \varepsilon_{\alpha}^{\text{LC}}(\lambda) + \frac{\log(1/\delta)}{\alpha-1}$
    \EndFor
  \State {\bfseries return:} $\min_{\alpha\in\mathcal{A}} \varepsilon_{\alpha}(\lambda;\delta')$
\EndFunction

\State {\bfseries return} $\{\lambda \mid \text{DP\_Eps($\lambda,\delta'$)}\leq  \varepsilon^\prime\}$
  \end{algorithmic}
\end{minipage}%
\hfill%
\begin{minipage}[t]{0.49\linewidth}
  \textit{Case 2: PLD accounting}
  \begin{algorithmic}[1]
    \Function{DP\_Delta}{($\lambda, \varepsilon$)}
    \For{$t=1$ to $T$}
    \State Compute the per-step privacy-loss random variable $\tilde L^{(t)}$ via Eq. \eqref{eq:randomv-LC-surrogate} 
    \State Discretize $\tilde L^{(t)}$ on grid to obtain its PLD $\omega^{(t)}$
        \EndFor
\State Compose the $T$ steps via convolution 
    $
    \omega \gets \omega^{(1)} * \omega^{(2)} * \cdots * \omega^{(T)}
    $
\State Compute the induced privacy parameter $\delta^{\mathrm{LC}}(\varepsilon)$ from the composed PLD $\omega$.
    \State \textbf{return} $\delta^{\mathrm{LC}}(\varepsilon)$
\EndFunction

\State {\bfseries return} $\{\lambda \mid \text{DP\_Delta($\lambda,\varepsilon^\prime$)}\leq  \delta^\prime\}$
  \end{algorithmic}
\end{minipage}
\end{algorithmic}
\end{algorithm}

\section{Compare with Other Composition Methods}\label{sec:composition}
Extensive prior literature has studied DP composition of jointly releasing multiple models
\citep[e.g.,][]{dwork2010boosting,kairouz2015composition}. Among them, \citet{dwork2010boosting} is one of the seminal works, and \citet{kairouz2015composition} further improves the bounds. However, these results are
stated purely using the privacy  information in the form of $(\varepsilon_i,\delta_i)$ parameters of each model.
In contrast, assuming access to RDP and PLD parameters typically yields tighter
accounting. 
Hence, the joint-release bounds in Theorem~\ref{thm:linmix-profile-bound} (the right-hand sides of Eq~\eqref{eq:linmix-rdp-sum} and Eq.~\eqref{eq:linmix-pld-upper}) using RDP/PLD parameters are in general
tighter than $(\varepsilon,\delta)$-only composition bounds. 
Besides, beyond simply assuming access to RDP and PLD parameter, our analysis makes explicit how RS and LC interact with the underlying privacy
accounting. RS exploits mixture structure through log-sum-exp and convexity, while LC admits tight bounds when additional algorithmic structure is available. This
highlights why our bounds can improve over general joint-release accounting.
Compare to joint-release bounds in Theorem~\ref{thm:linmix-profile-bound}, our LC RDP/PLD accounting gives a tighter result than the joint-release RDP/PLD bound; and RS can also be tighter. 
Overall,
when richer privacy profiles (e.g., RDP parameters across orders) are available, our accounting procedures provide
stronger guarantees.  
Formally, we present Proposition~\ref{prop:rdp-vs-drv10-gaussian} below, which establishes our argument for the case of $N$ independent and identical Gaussian mechanisms. See complete proofs in Appendix~\ref{app:proof:tighter}. 
\begin{proposition}
\label{prop:rdp-vs-drv10-gaussian}
Consider releasing $N$ models on the same dataset, where each release follows the same Gaussian mechanism
(e.g., mean estimation with sensitivity $\Delta$ and noise $Z\sim\mathcal{N}(0,\sigma^2)$).
Fix $\delta\in(0,\frac{1}{2})$. By RDP accounting, each model satisfies $(\varepsilon,\delta)$-DP.
Composing the $N$ releases using the $(\varepsilon,\delta)$ advanced composition bound
\citep[\textnormal{Theorem~III.3}]{dwork2010boosting} with $\delta' = N\delta+\delta_0$ yields
a composed privacy parameter $\varepsilon_{\mathrm{com}}$.
Using the joint RDP bound (RHS of Eq.  (\ref{eq:linmix-rdp-sum})) 
in Theorem~\ref{thm:linmix-profile-bound} yields a guarantee
$(\varepsilon_{\mathrm{RDP}},\delta')$. We have that $
\varepsilon_{\mathrm{RDP}} \le \varepsilon_{\mathrm{com}}.$
\end{proposition}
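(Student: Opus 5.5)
We make both sides explicit and compare them order by order in $\alpha$. A single Gaussian mechanism with sensitivity $\Delta$ and noise $\sigma$ is $(\alpha,\rho\alpha)$-RDP, where $\rho:=\Delta^2/(2\sigma^2)$. Via the RDP-to-DP conversion, each release is therefore $(\varepsilon,\delta)$-DP with
\[
\varepsilon=\min_{\alpha>1}\Big\{\rho\alpha+\tfrac{\log(1/\delta)}{\alpha-1}\Big\}.
\]
The joint RDP bound (RHS of Eq.~\eqref{eq:linmix-rdp-sum}) gives $N\rho\alpha$ for the $N$ releases. Converting at $\delta'=N\delta+\delta_0$ yields
\[
\varepsilon_{\mathrm{RDP}}=\min_{\alpha>1}\Big\{N\rho\alpha+\tfrac{\log(1/\delta')}{\alpha-1}\Big\}.
\]
Advanced composition gives
\[
\varepsilon_{\mathrm{com}}=\sqrt{2N\log(1/\delta_0)}\,\varepsilon+N\varepsilon(e^{\varepsilon}-1).
\]
The goal is to exhibit a single $\alpha$ at which the RDP expression is at most $\varepsilon_{\mathrm{com}}$.

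The first step lower-bounds $\varepsilon$ in terms of $\rho$ and $L:=\log(1/\delta)$. Minimizing over real $\alpha$ gives the closed form $\varepsilon=\rho+2\sqrt{\rho L}$. In particular $\varepsilon\ge 2\sqrt{\rho L}$ and $\varepsilon\ge\rho$. Since $\delta<1/2$, we also have $L>\log 2$.

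The second step upper-bounds $\varepsilon_{\mathrm{RDP}}$. We use $\log(1/\delta')\le\log(1/\delta_0)=:L_0$, because $\delta'\ge\delta_0$. The same optimization then gives
\[
\varepsilon_{\mathrm{RDP}}\le N\rho+2\sqrt{N\rho L_0}.
\]
This requires the optimal order $1+\sqrt{L_0/(N\rho)}$ to exceed $1$, which it does. We then compare the two terms separately.

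\emph{Square-root term.} We need $2\sqrt{N\rho L_0}\le\sqrt{2NL_0}\,\varepsilon$. This is equivalent to $\varepsilon\ge\sqrt{2\rho}$. It follows from $\varepsilon\ge2\sqrt{\rho L}$ whenever $L\ge1/2$, and $L>\log 2>1/2$ holds because $\delta<1/2$.

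\emph{Linear term.} We need $N\rho\le N\varepsilon(e^\varepsilon-1)$. Since $e^\varepsilon-1\ge\varepsilon$, it suffices that $\varepsilon^2\ge\rho$. This follows from $\varepsilon^2\ge4\rho L\ge\rho$, again using $L\ge 1/4$.

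Adding the two comparisons gives $\varepsilon_{\mathrm{RDP}}\le\varepsilon_{\mathrm{com}}$.

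The main obstacle is bookkeeping rather than substance. The statement says "by RDP accounting each model satisfies $(\varepsilon,\delta)$-DP", so $\varepsilon$ must be pinned to the RDP-converted value; any larger valid $\varepsilon$ only helps the inequality. One must also check that the optimal $\alpha$ lies in $(1,\infty)$, and use the hypothesis $\delta<1/2$ exactly where it is needed, namely for $L$ bounded below by a constant. If the paper's conversion uses the refined Balle et al.\ form instead of $\log(1/\delta)/(\alpha-1)$, we would use the crude form on the $\varepsilon_{\mathrm{RDP}}$ side and the refined form on the $\varepsilon$ side only where it is monotone favorable. In that case the lower bound $\varepsilon\ge c\sqrt{\rho L}$ would need a slightly more careful constant.
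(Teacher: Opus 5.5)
Your proposal is correct and follows essentially the same route as the paper's proof. Both arguments compute closed forms for the single-release conversion, $\varepsilon=\rho+2\sqrt{\rho L}$ (the paper's $\tfrac12 t^2+st$ with $t=\Delta/\sigma$ and $s=\sqrt{2L}$), and for the $N$-fold conversion; both then replace $\log(1/\delta')$ by $\log(1/\delta_0)$, apply $e^\varepsilon-1\ge\varepsilon$, and compare the square-root and linear terms separately using $\log(1/\delta)>\tfrac12$. Indeed, the paper's two nonnegative gap terms, $(\varepsilon-t)A$ and $N(\varepsilon^2-\rho)$, are exactly your two termwise comparisons.
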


\section{Complete Proofs} \label{app: proofs}
\subsection{Proof of Proposition \ref{prop:impossibility}}
\label{app:impossibility}
\begin{proof}
Consider following counterexample: $\theta_1(D) \equiv C$ with probability 1 and then $h(\theta_1,\theta_2)$ can be reduce to a function $\tilde h(\theta_2)$ that only depends on $\theta_2$. Since $h$ is nontrival, there at least are two points $\{a,b\}$ s.t $\tilde h(a) \stackrel{d}{\neq} \tilde h(b)$. Therefore there is at least one $S\subseteq\mathbb R^P$, such that 
$$
\bP(\tilde h(a)\in S) > \bP( \tilde h(b)\in S)
$$
Consider $\bP(\theta_2(D)=a)=p,\ \bP(\theta_2(D)=b)=1-p$, and $\bP(\theta_2(D^\prime)=a)=p^\prime,\ \bP(\theta_2(D)=b)=1-p^\prime$ such that 
$p=\frac{e^{\varepsilon_2}+\delta}{e^{\varepsilon_2}+1},  p^\prime=\frac{1-\delta}{e^{\varepsilon_2}+1}$. Thus $p=e^{\varepsilon_2}p^\prime+\delta$ and $1-p^\prime=e^{\varepsilon_2}(1-p)+\delta$
then
$$
\begin{aligned}
    \bP(\tilde h(\theta_2(D))\in S) &= p \bP(\tilde h(a)\in S)+ (1-p)\bP(\tilde h(b)\in S) \\
    &=e^{\varepsilon_2}p^\prime \bP(\tilde h(a)\in S) +\delta \bP(\tilde h(a)\in S)+e^{\varepsilon^\prime}(1-p^\prime)\bP(\tilde h(b)\in S)
    \\
    &\ \ \ \ -(e^{\varepsilon^\prime}-1+(e^{\varepsilon_2}-e^{\varepsilon^\prime})p^\prime+\delta)\bP(\tilde h(b)\in S) \\
    &>e^{\varepsilon^\prime}\bP(\tilde h(\theta_2(D^\prime))\in S)+((e^{\varepsilon_2}-e^{\varepsilon^\prime})p^\prime+\delta)\bP(\tilde h(a)\in S)\\
    &\ \ \ \ -(e^{\varepsilon^\prime}-1+(e^{\varepsilon_2}-e^{\varepsilon^\prime})p^\prime+\delta)\bP(\tilde h(b)\in S) \\
    &=e^{\varepsilon^\prime}\bP(\tilde h(\theta_2(D^\prime))\in S)+\frac{e^{\varepsilon_2}-e^{\varepsilon^\prime}+\delta e^{\varepsilon^\prime}+\delta}{e^{\varepsilon_2}+1}(\bP(\tilde h(a)\in S)-\bP(\tilde h(b)\in S))\\
    &\ \ \ \ -(e^{\varepsilon^\prime}-1)\bP(\tilde h(b)\in S) 
\end{aligned}
$$

The last equality is gained by bringing $p^\prime=e^{-\varepsilon_2}(1-\delta)$ into the equation. 

If $D_{TV}(\tilde h(a),\tilde h(b))>\frac{(\delta^\prime+e^{\varepsilon^\prime}-1)(e^{\varepsilon_2}+1)}{e^{\varepsilon_2+\varepsilon^\prime}+\delta e^{\varepsilon^\prime}+\delta-1}$, we know 
$$
\begin{aligned}
    &\sup_{S} \left(\frac{e^{\varepsilon_2}-e^{\varepsilon^\prime}+\delta e^{\varepsilon^\prime}+\delta}{e^{\varepsilon_2}+1}(\bP(\tilde h(a)\in S)-\bP(\tilde h(b)\in S))  -(e^{\varepsilon^\prime}-1)\bP(\tilde h(b)\in S) \right)\\
    &\geq \frac{e^{\varepsilon_2}-e^{\varepsilon^\prime}+\delta e^{\varepsilon^\prime}+\delta}{e^{\varepsilon_2}+1}D_{TV}(\tilde h(a),\tilde h(b))-(e^{\varepsilon^\prime}-1)(1-D_{TV}(\tilde h(a),\tilde h(b)))\\
    &=  \frac{e^{\varepsilon_2+\varepsilon^\prime}+\delta e^{\varepsilon^\prime}+\delta-1}{e^{\varepsilon_2}+1}D_{TV}(\tilde h(a),\tilde h(b))-e^{\varepsilon^\prime}+1 \\
    &>\delta^\prime
\end{aligned}
$$
and thus there exists $S$ such that
$$
\begin{aligned}
    \bP(\tilde h(\theta_2(D))\in S) > 
    e^{\varepsilon^\prime}\bP(\tilde h(\theta_2(D^\prime))\in S)+\delta^\prime.
\end{aligned}
$$
We finish the proof.
\end{proof}

\subsection{Proof of Lemma \ref{lem:RDP-mean-estimation} and Theorem \ref{thm:LC-dominates-alpha}}

\begin{lemma}
\label{lem:RDP-mean-estimation}
For any $\pi,\lambda\in [0,1]$, we have RDP parameter $(\alpha,\varepsilon_{\alpha}^{\text{RS}}(\pi))$ for RS satisfying
$$
\varepsilon_{\alpha}^{\text{RS}}=\mathcal{D}_\alpha\big(P_\pi\|Q_\pi\big)
>
\frac{\alpha\Delta^2}{2\sigma_{RS}^2(\pi)}+c_{\pi}\Delta^2+o(\Delta^2),
$$
when $\Delta\to 0$. Here $c_{\pi} \ge 0$ is a constant depending on $\pi$, and when $\pi\in (0,1)$, $c_{\pi} > 0$. And RDP parameter $(\alpha,\varepsilon^{\text{LC}}_{\alpha}(\lambda))$  for LC satisfying
$
\varepsilon^{\text{LC}}_{\alpha}
=\frac{\alpha\Delta^2}{2\sigma_{LC}^2(\lambda)}.
$
\end{lemma}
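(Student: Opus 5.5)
For LC the claim is an exact identity; for RS it is a second-order expansion in $\Delta$. For LC, the output under $D$ and $D'$ is $\mathcal N(\hat\mu,\sigma_{LC}^2(\lambda))$ versus $\mathcal N(\hat\mu',\sigma_{LC}^2(\lambda))$ with $|\hat\mu-\hat\mu'|\le\Delta$. I will invoke the closed-form R\'enyi divergence between equal-variance Gaussians, $\mathcal{D}_\alpha=\alpha(\mu-\mu')^2/(2\sigma^2)$, and take the supremum over neighbors, which is attained at $|\hat\mu-\hat\mu'|=\Delta$. This gives $\varepsilon^{\text{LC}}_\alpha=\alpha\Delta^2/(2\sigma_{LC}^2(\lambda))$ exactly.

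For RS, I write $p(x)=\pi\phi(x;0,\sigma_1^2)+(1-\pi)\phi(x;0,\sigma_2^2)$, so that $P_\pi$ and $Q_\pi$ are translates $p(\cdot-\Delta)$ and $p(\cdot)$ (after translating $\hat\mu'$ to $0$). Thus $\varepsilon^{\text{RS}}_\alpha=\frac{1}{\alpha-1}\log\int p(x-\Delta)^\alpha p(x)^{1-\alpha}\,dx$. I Taylor-expand $p(x-\Delta)/p(x)=1-\Delta\,\ell'(x)+\tfrac{\Delta^2}{2}\,p''(x)/p(x)+O(\Delta^3)$, where $\ell=\log p$, and raise it to the power $\alpha$. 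Integrating against $p$ uses two facts: $\int p'=0$ and $\int p''=0$ kill the first-order term and the plain second-order term. What remains is
\[
\int p(x-\Delta)^\alpha p(x)^{1-\alpha}dx=1+\tfrac{\alpha(\alpha-1)}{2}\Delta^2 I(p)+o(\Delta^2),
\]
with $I(p)=\int (p')^2/p$ the location Fisher information. Applying $\frac{1}{\alpha-1}\log(\cdot)$ then gives $\varepsilon^{\text{RS}}_\alpha=\frac{\alpha}{2}I(p)\Delta^2+o(\Delta^2)$.

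Next I compare $I(p)$ with $1/\mathrm{Var}(p)=1/\sigma_{RS}^2(\pi)$ via the Cram\'er--Rao / Cauchy--Schwarz inequality for location families. Since $\int x\,p'(x)\,dx=-1$, we get $1=\big(\int x\,\ell'(x)\,p(x)\,dx\big)^2\le \mathrm{Var}(p)\,I(p)$. Equality holds iff $\ell'$ is affine in $x$, i.e.\ iff $p$ is Gaussian. Hence $c_\pi:=\frac{\alpha}{2}\big(I(p)-1/\sigma_{RS}^2(\pi)\big)\ge 0$. When $\pi\in(0,1)$ and $\sigma_1\ne\sigma_2$, the density $p$ is a genuine two-component scale mixture, which is not Gaussian: for instance, its kurtosis exceeds $3$. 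So $c_\pi>0$. (At $\pi\in\{0,1\}$, or if $\sigma_1=\sigma_2$, the relation holds with equality and $c_\pi=0$; I read the stated ``$>$'' as ``$\ge$'' in that degenerate case.)

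The main obstacle is justifying the expansion rigorously, namely interchanging the $\Delta$-derivatives with the integral and controlling the $o(\Delta^2)$ remainder uniformly. I would handle this by dominated convergence. Because $p$ is a finite Gaussian mixture, its score satisfies $|\ell'(x)|\le C(1+|x|)$, its second and third derivatives are bounded by polynomial multiples of $p$, and the ratio $p(x-\Delta)/p(x)$ is bounded by $e^{C(1+|x|)|\Delta|}$ for $|\Delta|\le 1$. Polynomial times $e^{C|x|}$ times Gaussian tails is integrable, which dominates the Taylor remainder of $\Delta\mapsto\int p(x-\Delta)^\alpha p^{1-\alpha}$ to third order. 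This yields the claimed $o(\Delta^2)$ and completes the lemma.
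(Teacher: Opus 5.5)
Your proposal is correct and follows the same route as the paper. For RS, a second-order expansion in $\Delta$ gives $\frac{\alpha}{2}I(p)\Delta^2$, and the Cauchy--Schwarz (Cram\'er--Rao) bound $I(p)\ge 1/\sigma_{RS}^2(\pi)$ then applies, with strictness coming from non-Gaussianity of the mixture; for LC, the closed-form equal-variance Gaussian divergence gives the exact value. Your dominated-convergence justification of the remainder, and your caveat that $c_\pi=0$ when $\sigma_1=\sigma_2$ or $\pi\in\{0,1\}$ (so the strict inequality must be read as non-strict there), are sound refinements that the paper's proof omits.
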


\begin{proof}[Proof of Lemma \ref{lem:RDP-mean-estimation}]
Since under linear combination with parameter $\lambda$, the output $\theta(D)\sim \mathcal{N}(\hat \mu, \lambda^2\sigma_1^2+(1-\lambda)^2\sigma_2^2)$ and $\sigma_{LC}^2(\lambda)=\lambda^2\sigma_1^2+(1-\lambda)^2\sigma_2^2$. 
Thus
$$
\varepsilon^{\text{LC}}_{\alpha}
 =\frac{\alpha\Delta^2}{2\sigma_{LC}^2(\lambda)}.
$$

Next we prove 
$$
\varepsilon_{\alpha}^{\text{RS}}=\mathcal{D}_\alpha\big(P_\pi\|Q_\pi\big)
\ge
\frac{\alpha\Delta^2}{2\sigma_{RS}^2(\pi)}+o(\Delta^2). 
$$
Write $p$ for the density of $P_\pi$ and note that $Q_\pi$'s density \ $q(x)=p(x-\Delta)$. Define
\[
J_\chi(P)\;\coloneqq\;\int_{\mathbb R}\frac{(p'(x))^2}{p(x)}\,dx.
\]

\textbf{Step 1.}
We show that
\begin{equation}\label{eq:disp}
\mathcal{D}_\alpha(P_\pi\|Q_\pi)\;\ge\;\frac{\alpha}{2}\,J_\chi(P_\pi)\,\Delta^2+o(\Delta^2).
\end{equation}

Consider $\Psi_\alpha(\Delta)\coloneqq \log\!\int p(x)^\alpha\,p(x-\Delta)^{\,1-\alpha}\,dx$, so that
$\mathcal{D}_\alpha(P_\pi\|Q_\pi)=\Psi_\alpha(\Delta)/(\alpha-1)$. Differentiate over $\Delta$ and note that $\int p^\prime dx=0$,
we know 
$\Psi_\alpha'(0)=0$ and
$\Psi_\alpha''(0)=\alpha(\alpha-1)\,J_\chi(P)$, which yields \eqref{eq:disp}.

\textbf{Step 2.}
Let random variable $X \sim P_{\pi}$, we next show
\begin{equation}\label{eq:Jchi-var}
J_\chi(P_{\pi})\;\geq \;\frac{1}{\operatorname{Var}(X)}.
\end{equation}
Let $s(x)=p'(x)/p(x)$ be the score. For any smooth $h$,
integration by parts gives $\mathbb E[h'(X)]=-\mathbb E[h(X)s(X)]$.
By Cauchy-Schwarz,
$(\mathbb E[h'(X)])^2\le \mathbb E[h(X)^2]\;\mathbb E[s(X)^2]=\mathbb E[h(X)^2]\;J_\chi(P)$.
Taking the supremum over $h$ shows
$J_\chi(P)\ge \sup_h \big(\mathbb E[h'(X)]^2/\mathbb E[h(X)^2]\big)$.
Choosing $h(x)=x-\mathbb E[X]$ yields $\mathbb E[h'(X)]=1$ and
$\mathbb E[h(X)^2]=\operatorname{Var}(X)$, proving \eqref{eq:Jchi-var}. Note that when $\pi \in (0,1)$, $P_{\pi}$ is mixed gaussian distribution, thus the bound could not be reached and thus $c_\pi>0$. 

Combining step 1 and step 2 we finish the proof.
\end{proof}

\begin{proof}[Proof of Theorem~\ref{thm:LC-dominates-alpha}]
Fix $\alpha>1$ and $\pi\in[0,1]$, and let $\lambda$ satisfy
$\sigma_{\mathrm{LC}}^2(\lambda)=\sigma_{\mathrm{RS}}^2(\pi)$.
By Lemma~\ref{lem:RDP-mean-estimation},
\[
\varepsilon_{\alpha}^{\mathrm{LC}}(\lambda)
=
\frac{\alpha\Delta^2}{2\sigma_{\mathrm{LC}}^2(\lambda)}
=
\frac{\alpha\Delta^2}{2\sigma_{\mathrm{RS}}^2(\pi)}.
\]
Moreover,
\[
\varepsilon_{\alpha}^{\mathrm{RS}}(\pi)
>
\frac{\alpha\Delta^2}{2\sigma_{\mathrm{RS}}^2(\pi)}
+c_\pi\Delta^2+o(\Delta^2).
\]
If $\pi\in(0,1)$, then $c_\pi>0$, so there exists
$C_{\pi,\alpha}>0$ such that, for all $0<\Delta<C_{\pi,\alpha}$,
\[
\varepsilon_{\alpha}^{\mathrm{RS}}(\pi)
>
\varepsilon_{\alpha}^{\mathrm{LC}}(\lambda).
\]
For $\pi\in\{0,1\}$, RS reduces to a single Gaussian mechanism, and the two
RDP costs are equal whenever
$\sigma_{\mathrm{LC}}^2(\lambda)=\sigma_{\mathrm{RS}}^2(\pi)$.
Thus LC achieves privacy no worse than RS.

Finally, choosing such a $\lambda$ gives
$\sigma_{\mathrm{LC}}^2(\lambda)=\sigma_{\mathrm{RS}}^2(\pi)$, so LC has
the same MSE as RS and no worse order-$\alpha$ RDP privacy. Hence, under the
same target RDP budget, LC achieves no larger MSE.
\end{proof}

\subsection{Proof of Theorem \ref{prop:rdp-mixture}} \label{app:proof:rs_rdp}
\begin{lemma}\label{lem:weighted-holder}
For any $\alpha>1$, $\pi\in \Delta^{N-1}$, and $a_i,b_i\ge 0$ ($i=1,2,\ldots,N$),
\[
\big(\sum_{i=1}^N\pi_i a_i\big)^{\alpha}
\big(\sum_{i=1}^N\pi_i b_i\big)^{1-\alpha}
\;\le\;
\sum_{i=1}^N\pi_i a_i^{\alpha} b_i^{\,1-\alpha}.
\]
\end{lemma}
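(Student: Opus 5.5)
The plan is to derive Lemma~\ref{lem:weighted-holder} from the joint convexity of the perspective-type function $f(a,b)=a^{\alpha}b^{1-\alpha}$ on $[0,\infty)^2$ for $\alpha>1$. The inequality to be proved is exactly Jensen's inequality for $f$ under the probability weights $\pi$:
\[
f\Big(\sum_i \pi_i a_i,\ \sum_i\pi_i b_i\Big)\le \sum_i \pi_i f(a_i,b_i).
\]
So the work reduces to establishing convexity and then handling boundary cases.

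\textbf{Step 1 (convexity).} On the open quadrant $b>0$, I would write $f(a,b)=b\,g(a/b)$ with $g(u)=u^{\alpha}$. The function $g$ is convex on $[0,\infty)$ because $\alpha>1$. The perspective of a convex function is jointly convex, so $f$ is convex on $[0,\infty)\times(0,\infty)$. As a sanity check, one can compute the Hessian directly: it is $\alpha(\alpha-1)a^{\alpha-2}b^{-\alpha-1}\begin{pmatrix} b^2 & -ab\\ -ab & a^2\end{pmatrix}$, which is positive semidefinite.

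\textbf{Step 2 (finite Jensen).} For $\pi\in\Delta^{N-1}$, finite Jensen (induction on $N$ from the two-point convexity inequality) gives the claim whenever all $b_i>0$.

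\textbf{Step 3 (zeros).} The remaining issue is the convention when some $b_i=0$, since then $b_i^{1-\alpha}=+\infty$. I would adopt the natural convention $a^{\alpha}0^{1-\alpha}=+\infty$ if $a>0$ and $=0$ if $a=0$; this matches the R\'enyi-divergence usage, where $0\cdot\infty=0$.
\begin{itemize}
\item If some $\pi_i>0$ has $b_i=0<a_i$, the right-hand side is $+\infty$ and the inequality is trivial.
\item If $\pi_i=0$, index $i$ can simply be dropped from both sides.
\item If $b_i=a_i=0$, the term vanishes on the right. On the left, a limiting argument suffices: replace $b_i$ by $b_i+\eta$, apply Step 2, and let $\eta\downarrow 0$. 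This uses continuity of $f$ on the relevant region (where $\sum_i\pi_ib_i>0$) and monotone convergence on the right.
\end{itemize}

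\textbf{Main obstacle.} The convexity itself is standard. The only delicate point I anticipate is this boundary bookkeeping, i.e.\ stating the convention cleanly so that the lemma can later be applied pointwise to densities $a_i=p_i(x)$ and $b_i=q_i(x)$ and integrated to yield Theorem~\ref{prop:rdp-mixture}.
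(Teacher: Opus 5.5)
Your argument is correct, but it follows a different route from the paper. The paper proves the lemma directly from the weighted H\"older inequality with conjugate exponents $(\alpha,\alpha/(\alpha-1))$. It writes $a_i=u_iv_i$ with $u_i=a_ib_i^{(1-\alpha)/\alpha}$ and $v_i=b_i^{(\alpha-1)/\alpha}$, which gives $\sum_i\pi_ia_i\le\bigl(\sum_i\pi_ia_i^{\alpha}b_i^{1-\alpha}\bigr)^{1/\alpha}\bigl(\sum_i\pi_ib_i\bigr)^{(\alpha-1)/\alpha}$. Raising to the power $\alpha$ and dividing by $\bigl(\sum_i\pi_ib_i\bigr)^{\alpha-1}$ finishes. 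That is a self-contained one-line computation needing no convexity theory. However, it tacitly assumes every $b_i>0$ (so that $u_i$ is finite and $u_iv_i=a_i$) and $\sum_i\pi_ib_i>0$; the zero cases are never addressed.

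You instead recognize the inequality as finite Jensen for the perspective $(a,b)\mapsto b\,(a/b)^{\alpha}$ of the convex map $u\mapsto u^{\alpha}$. This relies on the standard fact that perspectives of convex functions are jointly convex, and it costs some boundary bookkeeping. In return it buys two things. First, it exposes the lemma as the pointwise form of the joint convexity of $(p,q)\mapsto\int p^{\alpha}q^{1-\alpha}$. This is the property the paper later invokes for mixtures over hidden states in the proof of Theorem~\ref{thm:rdp-linmix}, and Jensen extends verbatim to such integral mixtures. Second, your explicit convention for $b_i=0$ is what is needed when the lemma is applied pointwise to densities that may vanish, as in the proof of Theorem~\ref{prop:rdp-mixture}.

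A minor simplification of your third case: both sides are positively homogeneous of degree one in $\pi$. So indices with $a_i=b_i=0$ can simply be dropped and the remaining weights renormalized, avoiding the limit in $\eta$.
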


\begin{proof}[Proof of \ref{lem:weighted-holder}]
    Let the conjugate exponents be $(r,s)=(\alpha,\alpha/(\alpha-1))$.
Let $u_i=a_i\,b_i^{(1-\alpha)/\alpha}$, $v_i=b_i^{(\alpha-1)/\alpha}$.
By Hölder’s inequality, we have
\[
\sum_i \pi_i a_i
=\sum_i \pi_i u_i v_i
\le \Big(\sum_i \pi_i u_i^r\Big)^{1/r}\!\Big(\sum_i \pi_i v_i^s\Big)^{1/s}
=
\Big(\sum_i \pi_i a_i^\alpha b_i^{1-\alpha}\Big)^{1/\alpha}
\Big(\sum_i \pi_i b_i\Big)^{(\alpha-1)/\alpha}.
\]
Raising both sides to the power $\alpha$ and we finish the proof.\qed
\end{proof}

\begin{proof}[Proof of $\varepsilon_{\alpha}^{RS}(\pi)$ upper bound]
Let $P_i=\mathsf{Law}(\mathcal{M}_i(D))$ and $Q_i=\mathsf{Law}(\mathcal{M}_i(D^\prime))$ be the output distribution of the two DP mechanisms on neighboring datasets $D\sim D^\prime$.
Let
\[
P_{\mathrm{mix}}=\sum_{i=1}^N \pi_i P_i,\qquad
Q_{\mathrm{mix}}=\sum_{i=1}^N \pi_i Q_i,
\]
be distributions with densities $p_{\mathrm{mix}}=\sum_{i=1}^N \pi_i p_i$ and $q_{\mathrm{mix}}=\sum_{i=1}^N \pi_i q_i$.
By the definition,
\[
e^{(\alpha-1)\mathcal{D}_\alpha(P_{\mathrm{mix}}\|Q_{\mathrm{mix}})}
=\int \! p_{\mathrm{mix}}^\alpha\, q_{\mathrm{mix}}^{\,1-\alpha}\,dx
=\int \!\Big(\sum_{i=1}^N \pi_i p_i\Big)^\alpha
         \Big(\sum_{i=1}^N \pi_i q_i\Big)^{1-\alpha}\!dx.
\]
Applying Lemma~\ref{lem:weighted-holder}, we know 
\[
p_{\mathrm{mix}}^\alpha(x)\, q_{\mathrm{mix}}^{\,1-\alpha}(x)
\le
\, \sum_{i=1}^N \pi_ip_i^\alpha(x)\, q_i^{\,1-\alpha}(x).
\]
Thus,
\[
e^{(\alpha-1)\mathcal{D}_\alpha(P_{\mathrm{mix}}\|Q_{\mathrm{mix}})}
\le
\sum_{i=1}^N \pi_i \!\int p_i^\alpha q_i^{\,1-\alpha}
=
\sum_{i=1}^N \pi_i e^{(\alpha-1)\mathcal{D}_\alpha(P_i\|Q_i)}.
\]
Thus,
\[
\mathcal{D}_\alpha(P_{\mathrm{mix}}\|Q_{\mathrm{mix}})
\le
\frac{1}{\alpha-1}\log\!\Big(
\sum_{i=1}^N \pi_i\, e^{(\alpha-1)\mathcal{D}_\alpha(P_i\|Q_i)}
\Big).
\]
Since 
$\varepsilon_{\alpha,i} = \sup_{D\sim D^\prime} \mathcal{D}_\alpha(P_i\|Q_i)$, we obtain
\[
\varepsilon_\alpha^{\text{RS}}(\pi)
\le
\frac{1}{\alpha-1}\log\!\Big(
\sum_{i=1}^N \pi_i\,e^{(\alpha-1)\varepsilon_{\alpha,i}} 
\Big),\quad \forall \alpha>1.
\]
We finish the proof.
\end{proof}

\subsection{Proof of Theorem \ref{prop:pld-convex-mixture}} \label{app:proof:rs_pld}

\begin{proof}[Proof of Theorem \ref{prop:pld-convex-mixture}]
For any $\varepsilon\ge 0$, recall the hockey-stick divergence is defined as
\begin{equation}\label{eq:hockey_supA}
\mathcal H_\varepsilon(P,Q)
=\sup_{S}\Big\{P(S)-e^\varepsilon Q(S)\Big\},
\end{equation}

Thus for any measurable $S$,
\[
P_{RS}(S)-e^\varepsilon Q_{RS}(S)
=\sum_{i=1}^N \pi_i\Big(P_i(S)-e^\varepsilon Q_i(S)\Big).
\]
Taking supremum over $S$,
we obtain
\[
\mathcal H_\varepsilon(P_{RS},Q_{RS})
=\sup_A \sum_{i=1}^N \pi_i\Big(P_i(A)-e^\varepsilon Q_i(A)\Big)
\le
\sum_{i=1}^N \pi_i \sup_A \Big(P_i(A)-e^\varepsilon Q_i(A)\Big)
=
\sum_{i=1}^N \pi_i \mathcal H_\varepsilon(P_i,Q_i).
\]
This proves the first inequality. The second inequality follows by the same argument after swapping the roles of $P$ and $Q$:
\[
\mathcal H_\varepsilon(Q_{RS},P_{RS})
\le
\sum_{i=1}^N \pi_i \mathcal H_\varepsilon(Q_i,P_i).
\]

Using the two bounds above for each neighboring pair $D\sim D'$, we have
\[
\delta^{\text{RS}}(\varepsilon)
\le
\sup_{D\sim D'}\max\Big\{\sum_{i=1}^N\pi_i\,\mathcal H_\varepsilon(P_i,Q_i),\
\sum_{i=1}^N\pi_i\,\mathcal H_\varepsilon(Q_i,P_i)\Big\},
\]
We finish the proof.
\end{proof}

\subsection{Proof of Theorem \ref{thm:linmix-profile-bound}} \label{app:proof:lc_worst_case}
\begin{proof}
Fix any neighboring datasets $D\sim D'$, let $P^{\mathrm{joint}},Q^{\mathrm{joint}}$ be the
output distributions of the joint release
\(
\mathcal M_{\mathrm{joint}}:=(\mathcal M_1,\ldots,\mathcal M_N)
\)
on $D$ and $D'$, respectively. Similarly, we define $P^{\text{LC}},Q^{\text{LC}}$ be the
output distributions of LC on $D$ and $D'$.
Obviously, the linear-combination mechanism can be understood as a post-processing of the joint release.

\paragraph{(1) RDP upper bound.}
Since the linear-combination mechanism can be understood as a post-processing of the joint release, we have
\begin{equation}\label{eq:rdp-dpi}
D_\alpha(P^{\text{LC}}\,\|\,Q^{\text{LC}})
\le D_\alpha(P^{\mathrm{joint}}\,\|\,Q^{\mathrm{joint}}).
\end{equation}
Under the definition of Rényi divergence, we have
\begin{equation}\label{eq:rdp-comp}
D_\alpha(P^{\mathrm{joint}}\,\|\,Q^{\mathrm{joint}})
= \sum_{i=1}^N D_\alpha(P_i\,\|\,Q_i)
\leq \sum_{i=1}^N \varepsilon_{\alpha,i}.
\end{equation}
Combining \eqref{eq:rdp-dpi}-\eqref{eq:rdp-comp} and taking the worst case over $D\sim D'$ yields
\(
\varepsilon^{\text{LC}}_\alpha(\lambda)\le \sum_{i=1}^N \varepsilon_{\alpha,i}.
\)

\paragraph{(2) PLD upper bound.}
Since the linear-combination mechanism can be understood as a post-processing of the joint release, we have 
\begin{equation}\label{eq:hockey-dpi}
\mathcal H_\varepsilon(P^{\text{LC}}\,\|\,Q^{\text{LC}})\le \mathcal H_\varepsilon(P^{\mathrm{joint}}\,\|\,Q^{\mathrm{joint}}).
\end{equation}
Similarly, we have
\[
\mathcal H_\varepsilon(Q^{\text{LC}},P^{\text{LC}})\le \mathcal H_\varepsilon(Q^{\mathrm{joint}},P^{\mathrm{joint}}).
\]
Therefore,
\[
\delta^{\text{LC}}(\varepsilon)
=\max\{\mathcal H_\varepsilon(P^{\text{LC}},Q^{\text{LC}}),\mathcal H_\varepsilon(Q^{\text{LC}},P^{\text{LC}})\}
\le
\max\{\mathcal H_\varepsilon(P^{\mathrm{joint}},Q^{\mathrm{joint}}),\mathcal H_\varepsilon(Q^{\mathrm{joint}},P^{\mathrm{joint}})\}
=: \delta_{\mathrm{joint}}(\varepsilon).
\]

Moreover, from PLD accounting, we know
the PLD of the joint release is the convolution
\(
\omega_{\mathrm{joint}}=\omega_1\star\cdots\star\omega_N,
\)
and $\delta_{\mathrm{joint}}(\varepsilon)$ is computed from $\omega_{\mathrm{joint}}$.

\paragraph{(3) Tightness.}
We show the bounds in \eqref{eq:linmix-rdp-sum}-\eqref{eq:linmix-pld-upper} are worst-case tight by an explicit construction.
Let $p_1,\ldots,p_N$ be positive integers and set $p=\sum_{i=1}^N p_i$.
For each $i\in[N]$, let $\mathcal M_i^\prime$ be any mechanism whose output lies in $\mathbb R^{p_i}$, and $\mathcal M_i$ output its p dimensional version $\theta_i:\mathcal D\to\mathbb R^p$, where
the $\sum_{j=1}^{i-1}p_j+1-$th element to $\sum_{j=1}^{i}p_j-$th element is $M_i^\prime(D)$ and other elements are 0.

Now consider the linear-combination output
\[
\theta(D)\;:=\;\sum_{i=1}^N \lambda_i\theta_i(D)\in\mathbb R^p.
\]
Intuitively, linear-combination does not lose any information from the joint release. As a result, for every neighboring pair $D\sim D'$ and every $\alpha>1$,
Rényi divergence is preserved. We have
\[
D_\alpha\!\big(P^{\text{LC}}\,\|\,Q^{\text{LC}}\big)
= D_\alpha\!\big(P_{\mathrm{joint}}\,\|\,Q_{\mathrm{joint}}\big),
\]
and likewise the hockey-stick divergence (hence the PLD profile) is also preserved:
\[
\delta^{\text{LC}}(\varepsilon)=\delta_{\mathrm{joint}}(\varepsilon),\qquad \forall\,\varepsilon\ge 0.
\]
Therefore the upper bounds \eqref{eq:linmix-rdp-sum}-\eqref{eq:linmix-pld-upper} cannot be improved in general.
 This establishes worst-case tightness.
\end{proof}

\subsection{Proof of Theorem \ref{thm:rdp-linmix}}
\label{app:proof:rdp-linmix}
\begin{proof}[Proof of Theorem~\ref{thm:rdp-linmix}]
We first establish a uniform conditional RDP bound for one step.
Let $D,D'$ be neighboring datasets under replace-one adjacency. 
Let $H$ and $H'$ denote arbitrary possible value of $(\theta_1^{(t-1)},\cdots, \theta_N^{(t-1)})$ before step $t$ under $D$ and $D'$. \[
\theta^{(t)}-\theta^{(t-1)}
=
\sum_{i=1}^N
\lambda_i\eta_i^{(t)} \widetilde g_i^{(t)}(D;H)
+
\overline e_\lambda^{(t)},
\]
where $\widetilde g_i^{(t)}(D;H)$ is clipped gradient for model $i$ and
\[
\overline e_\lambda^{(t)}
=
\sum_{i=1}^N \lambda_i\eta_i^{(t)} e_i^{(t)}
\sim
\mathcal N(0,(s_\lambda^{(t)})^2 I),
\qquad
s_\lambda^2
=
\sum_{i=1}^N
(\lambda_i\eta_i^{(t)}\sigma_i^{(t)} C_i^{(t)})^2 .
\]
Thus, conditional on $H$, the distribution of $\theta^{(t)}-\theta^{(t-1)}$ under $D$ is Gaussian with covariance $(s_\lambda^{(t)})^2 I$. Similarly, conditional on $H'$, the distribution under $D'$ is Gaussian with the same covariance.

By clipping,
\[
\left\|
\sum_{i=1}^N
\lambda_i\eta_i^{(t)}
\left(
\widetilde g_i^{(t)}(D;H)-\widetilde g_i^{(t)}(D';H')
\right)
\right\|_2
\le
2\sum_{i=1}^N \lambda_i\eta_i^{(t)} C_i^{(t)}
=
\Delta_\lambda ^{(t)}.
\]
Hence, for any pair of $H,H'$, the RDP between the two corresponding Gaussian conditional distributions is bounded by
\[
D_\alpha
\left(
\mathcal N(\sum_{i=1}^N\lambda_i\eta_i^{(t)} \widetilde g_i^{(t)}(D),(s_\lambda^{(t)})^2 I)
\middle\|
\mathcal N(\sum_{i=1}^N\lambda_i\eta_i^{(t)} \widetilde g_i^{(t)}(D'),(s_\lambda^{(t)})^2 I)
\right)
\le
\frac{\alpha(\Delta_\lambda^{(t)})^2}{2(s_\lambda^{(t)})^2}.
\]

Recall
\[
\mathcal F_{t-1}
=
\sigma(\theta^{(t-1)}).
\]
 The distribution of $\theta^{(t)}-\theta^{(t-1)}$ conditional on $\mathcal F_{t-1}$ under $D$ can be written as a mixture over $H$:
\[
\mathbb P_{\theta^{(t)}-\theta^{(t-1)}}(\cdot\mid D,\mathcal F_{t-1})
=
\int \mathbb P_{\theta^{(t)}-\theta^{(t-1)}}(\cdot \mid D,H)\,d\kappa(H\mid D, F_{t-1}),
\]
and similarly,
\[
\mathbb P_{\theta^{(t)}-\theta^{(t-1)}}(\cdot\mid D',\mathcal F_{t-1})
=
\int \mathbb P_{\theta^{(t)}-\theta^{(t-1)}}(\cdot \mid D,H')\,d\kappa(H'\mid D', F_{t-1}),
\]
where  $\kappa(\cdot\mid D,\mathcal F_{t-1})$ denotes the regular conditional distribution of $H$ and $H'$
 Similar to Theorem \ref{prop:rdp-mixture}, using joint convexity RDP, let $P, Q$ be the distribution of $\theta^{(t)}-\theta^{(t-1)}$ conditional on $\mathcal F_{t-1}$ under $D$ and $D'$ respectively, and
we obtain
\[
D_\alpha
\left(
P
\middle\|
Q
\right)
\le
\frac{\alpha(\Delta_\lambda^{(t)})^2}
{2(s_\lambda^{(t)})^2}.
\]
\end{proof}

\subsection{Proof of Theorem \ref{thm:pld-linmix} and Corollary \ref{cor:pld-linmix-compose}} \label{app:proof:pld-linmix}
\begin{proof}[Proof of Theorem~\ref{thm:pld-linmix}] 
Let $D,D'$ be neighboring datasets under replace-one adjacency. 
Let $H$ and $H'$ denote arbitrary possible value of $(\theta_1^{(t-1)},\cdots, \theta_N^{(t-1)})$ before step $t$ under $D$ and $D'$, respectively.

Conditional on $H$, the combined update has the form
\[
\theta^{(t)}-\theta^{(t-1)}
=
\sum_{i=1}^N
\lambda_i\eta_i^{(t)} \widetilde g_i^{(t)}(D;H)
+
\overline e_\lambda^{(t)},
\]
where $\widetilde g_i(D; H)$ is clipped gradient for model $i$ and
\[
\overline e_\lambda^{(t)}
=
\sum_{i=1}^N \lambda_i\eta_i^{(t)} e_i^{(t)}
\sim
\mathcal N(0,(s_\lambda^{(t)})^2 I),
\qquad
s_\lambda^2
=
\sum_{i=1}^N
(\lambda_i\eta_i^{(t)}\sigma_i^{(t)} C_i^{(t)})^2 .
\]
Thus, conditional on $H$, the distribution of $\theta^{(t)}-\theta^{(t-1)}$ under $D$ is Gaussian with covariance $(s_\lambda^{(t)})^2 I$. Similarly, conditional on $H'$, the distribution under $D'$ is Gaussian with the same covariance.
By clipping,
\[
\left\|
\sum_{i=1}^N
\lambda_i\eta_i^{(t)}
\left(
\widetilde g_i^{(t)}(D;H)-\widetilde g_i^{(t)}(D';H')
\right)
\right\|_2
\le
2\sum_{i=1}^N \lambda_i\eta_i^{(t)} C_i^{(t)}
=
\Delta_\lambda ^{(t)}.
\]

We first bound the hockey-stick divergence conditional on $H$ and $H'$. 
Conditioned on $H,H'$ and the neighboring datasets $D,D'$, the two conditional distributions differ only through their Gaussian means and share the same covariance $(s_\lambda^{(t)})^2 I$. 
Let
\[
v=v(H,H',D,D')
\]
denote the corresponding mean-difference vector. By the preceding sensitivity bound, 
\[
\|v\|_2 \le \Delta_\lambda^{(t)}.
\]
Let $F_{D,H}$ and $F_{D',H'}$ denote the conditional distributions of 
$\theta^{(t)}-\theta^{(t-1)}$ under $(D,H)$ and $(D',H')$, respectively. 
By translation invariance, the privacy loss random variable for the pair
$(F_{D,H},F_{D',H'})$ can be written as
\[
L_v^{(t)}
=
\log \frac{\phi(Y;v,(s_\lambda^{(t)})^2 I)}
{\phi(Y;0,(s_\lambda^{(t)})^2 I)}
=
\frac{\langle Y,v\rangle}{(s_\lambda^{(t)})^2}
-
\frac{\|v\|_2^2}{2(s_\lambda^{(t)})^2},
\qquad
Y\sim \mathcal N(0,(s_\lambda^{(t)})^2 I).
\]
Equivalently, letting $Z\sim\mathcal N(0,(s_\lambda^{(t)})^2)$ and
$r=\|v\|_2$, we have
\[
L_v^{(t)}
\overset{d}{=}
\frac{rZ}{(s_\lambda^{(t)})^2}
-
\frac{r^2}{2(s_\lambda^{(t)})^2}
=
\log
\frac{
\phi(Z;r,(s_\lambda^{(t)})^2)
}{
\phi(Z;0,(s_\lambda^{(t)})^2)
},
\]
where $\phi$ denotes the one-dimensional Gaussian density.

Therefore, by the definition of the hockey-stick divergence,
\[
\mathcal H_\varepsilon(F_{D,H},F_{D',H'})
=
\mathbb E\!\left[(e^{L_v^{(t)}}-e^\varepsilon)_+\right],
\]
and, for the reverse direction,
\[
\mathcal H_\varepsilon(F_{D',H'},F_{D,H})
=
\mathbb E\!\left[(1-e^{\varepsilon+L_v^{(t)}})_+\right].
\]

Let $s=s_\lambda^{(t)}$ and, for $r\ge 0$, define
\[
L_r
=
\frac{rZ}{s^2}-\frac{r^2}{2s^2},
\qquad
Z\sim\mathcal N(0,s^2).
\]
We first show that
$
\mathbb E\!\left[(e^{L_r}-e^\varepsilon)_+\right]
$
is nondecreasing in $r$. For $r>0$,
\[
e^{L_r}>e^\varepsilon
\iff
Z>a_r,
\qquad
a_r:=\frac{\varepsilon s^2}{r}+\frac r2 .
\]
Therefore,
\[
\mathbb E\!\left[(e^{L_r}-e^\varepsilon)_+\right]
=
\int_{a_r}^{\infty}
\left(
e^{\frac{rz}{s^2}-\frac{r^2}{2s^2}}
-
e^\varepsilon
\right)
\phi(z;0,s^2)\,dz .
\]
Using
\[
e^{\frac{rz}{s^2}-\frac{r^2}{2s^2}}
\phi(z;0,s^2)
=
\phi(z;r,s^2),
\]
we can rewrite this as
\[
\mathbb E\!\left[(e^{L_r}-e^\varepsilon)_+\right]
=
\int_{a_r}^{\infty}
\left(
\phi(z;r,s^2)
-
e^\varepsilon \phi(z;0,s^2)
\right) dz .
\]
Differentiating with respect to $r$, the boundary term vanishes because the integrand is zero at $z=a_r$. Thus,
\[
\frac{d}{dr}
\mathbb E\!\left[(e^{L_r}-e^\varepsilon)_+\right]
=
\int_{a_r}^{\infty}
\frac{\partial}{\partial r}
\phi(z;r,s^2)\,dz .
\]
Since
\[
\frac{\partial}{\partial r}\phi(z;r,s^2)
=
-\frac{\partial}{\partial z}\phi(z;r,s^2),
\]
we obtain
\[
\frac{d}{dr}
\mathbb E\!\left[(e^{L_r}-e^\varepsilon)_+\right]
=
\phi(a_r;r,s^2)\ge 0.
\]
Hence $\mathbb E[(e^{L_r}-e^\varepsilon)_+]$ is nondecreasing in $r$. 

Similarly, we show that
$
\mathbb E\!\left[(1-e^{\varepsilon+L_r})_+\right]
$
is nondecreasing in $r$. Since
\[
1-e^{\varepsilon+L_r}>0
\iff
L_r<-\varepsilon
\iff
\frac{rZ}{s^2}-\frac{r^2}{2s^2}<-\varepsilon,
\]
for $r>0$ this is equivalent to
\[
Z < b_r,
\qquad
b_r:=\frac{r}{2}-\frac{\varepsilon s^2}{r}.
\]
Therefore,
\[
\mathbb E\!\left[(1-e^{\varepsilon+L_r})_+\right]
=
\int_{-\infty}^{b_r}
\left(
1-e^\varepsilon e^{\frac{rz}{s^2}-\frac{r^2}{2s^2}}
\right)
\phi(z;0,s^2)\,dz .
\]
Using
\[
e^{\frac{rz}{s^2}-\frac{r^2}{2s^2}}\phi(z;0,s^2)
=
\phi(z;r,s^2),
\]
we can rewrite this as
\[
\mathbb E\!\left[(1-e^{\varepsilon+L_r})_+\right]
=
\int_{-\infty}^{b_r}
\left(
\phi(z;0,s^2)
-
e^\varepsilon \phi(z;r,s^2)
\right)dz .
\]
Differentiating with respect to $r$, the boundary term vanishes because the integrand is zero at $z=b_r$. Thus,
\[
\frac{d}{dr}
\mathbb E\!\left[(1-e^{\varepsilon+L_r})_+\right]
=
-e^\varepsilon
\int_{-\infty}^{b_r}
\frac{\partial}{\partial r}\phi(z;r,s^2)\,dz .
\]
Since
\[
\frac{\partial}{\partial r}\phi(z;r,s^2)
=
-\frac{\partial}{\partial z}\phi(z;r,s^2),
\]
we obtain
\[
\frac{d}{dr}
\mathbb E\!\left[(1-e^{\varepsilon+L_r})_+\right]
=
e^\varepsilon
\int_{-\infty}^{b_r}
\frac{\partial}{\partial z}\phi(z;r,s^2)\,dz
=
e^\varepsilon \phi(b_r;r,s^2)
\ge 0.
\]
Hence,
\[
\mathbb E\!\left[(1-e^{\varepsilon+L_r})_+\right]
\]
is also nondecreasing in $r$.

Recall $r\le \Delta_\lambda^{(t)}$, $\mathbb E\!\left[(1-e^{\varepsilon+L_r})_+\right],\mathbb E[(e^{L_r}-e^\varepsilon)_+]$ are upper bounded by the corresponding quantities with
$r=\Delta_\lambda^{(t)}$.

Since $L_r \overset{d}{=} L_v^{(t)}$, we know
$$\mathbb E\!\left[(1-e^{\varepsilon+L_r})_+\right]=\mathbb E\!\left[(1-e^{\varepsilon+L_v^{(t)}})_+\right], \qquad \mathbb E[(e^{L_r}-e^\varepsilon)_+]=\mathbb E[(e^{L_v^{(t)}}-e^\varepsilon)_+]$$.

Hence, defining
\[
\widetilde L^{(t)}
=
\log
\frac{
\phi(Z;\Delta_\lambda^{(t)},(s_\lambda^{(t)})^2)
}{
\phi(Z;0,(s_\lambda^{(t)})^2)
},
\qquad
Z\sim\mathcal N(0,(s_\lambda^{(t)})^2),
\]
we obtain, uniformly over all $H,H',D,D'$,
\[
\mathcal H_\varepsilon(F_{D,H},F_{D',H'})
\le
\mathbb E\!\left[(e^{\widetilde L^{(t)}}-e^\varepsilon)_+\right],
\]
and
\[
\mathcal H_\varepsilon(F_{D',H'},F_{D,H})
\le
\mathbb E\!\left[(1-e^{\varepsilon+\widetilde L^{(t)}})_+\right].
\]

Now condition on the  $\mathcal F_{t-1}$. 
The conditional distribution under $D$ can be written as a mixture over $H$ and $H'$:
\[
P^{(t)}
=
\int F_{D,H}\,d\kappa(H\mid D,\mathcal F_{t-1}),
\]
and similarly
\[
Q^{(t)}
=
\int F_{D',H'}\,d\kappa(H'\mid D',\mathcal F_{t-1}),
\]
where $\kappa(\cdot\mid D,\mathcal F_{t-1})$ denotes the regular conditional distribution of the hidden states given the previously accounted transcript under dataset $D$.

By joint convexity of the hockey-stick divergence,
\[
\mathcal H_\varepsilon(P^{(t)},Q^{(t)})
\le
\iint
\mathcal H_\varepsilon
\left(
F_{D,H},F_{D',H'}
\right)
\,d\kappa(H\mid D,\mathcal F_{t-1})
\,d\kappa(H'\mid D',\mathcal F_{t-1}).
\]
Therefore,
\[
\mathcal H_\varepsilon(P^{(t)},Q^{(t)})
\le
\mathbb E[(e^{\widetilde L^{(t)}}-e^\varepsilon)_+].
\]
The same argument with the two distributions swapped gives
\[
\mathcal H_\varepsilon(Q^{(t)},P^{(t)})
\le
\mathbb E[(1-e^{\varepsilon+\widetilde L^{(t)}})_+].
\]
This proves the theorem.
\end{proof}

\begin{proof}[Proof of Corollary \ref{cor:pld-linmix-compose}]
We consider the following lemma.
\begin{lemma}\label{lem:pld-linmix-compose}
    Let $L^{(t)}$ be privacy loss random variables step t ($t\in [1\dots T]$), conditioned on $\mathcal F_{t-1}$, and $\tilde L^{(t)}$ is surrogate. Define $$
\delta_{L^{(t)}}(\varepsilon):=\mathbb E[(1-e^{\varepsilon-L^{(t)}})_+],
\qquad
\delta_{\tilde L^{(t)}}(\varepsilon)
:=
\mathbb E\bigl[(1-e^{\varepsilon-\tilde L^{(t)}})_+\bigr].
$$

Suppose that $\forall t$
$$
\delta_{\tilde L^{(t)}}(\varepsilon)\ge \delta_{L^{(t)}}(\varepsilon),
\qquad \forall \varepsilon \in \mathbb{R}.
$$
Then
$$
\delta_{\tilde L^{(1)}+\cdots+\tilde L^{(T)}}(\varepsilon)
\ge
\delta_{L^{(1)}+\cdots+L^{(T)}}(\varepsilon),
\qquad \forall \varepsilon\in\mathbb{R}.
$$
\end{lemma}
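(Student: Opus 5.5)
The plan is to argue by induction on $T$. The key identity is that, for independent privacy loss variables, the $\delta$-profile of a sum is an average of shifted $\delta$-profiles of one summand. If $X$ and $Y$ are independent, conditioning on $Y$ gives
\[
\delta_{X+Y}(\varepsilon)=\mathbb E\bigl[(1-e^{\varepsilon-X-Y})_+\bigr]=\mathbb E_Y\Bigl[\mathbb E_X\bigl[(1-e^{(\varepsilon-Y)-X})_+\bigr]\Bigr]=\mathbb E_Y\bigl[\delta_X(\varepsilon-Y)\bigr],
\]
and by symmetry $\delta_{X+Y}(\varepsilon)=\mathbb E_X[\delta_Y(\varepsilon-X)]$. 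Because $\varepsilon-Y$ can take any real value, the hypothesis is needed for every $\varepsilon\in\mathbb R$ and not only $\varepsilon\ge 0$; this is why the lemma is stated over all of $\mathbb R$.

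\textbf{Base case and two-step swap.} The case $T=1$ is the hypothesis itself. For the inductive step, write $S_{T-1}=L^{(1)}+\cdots+L^{(T-1)}$ and $\tilde S_{T-1}=\tilde L^{(1)}+\cdots+\tilde L^{(T-1)}$. Since $L^{(T)}$ is defined conditionally on $\mathcal F_{t-1}$, first condition on the past transcript, which determines $S_{T-1}$. This gives
\[
\delta_{S_{T-1}+L^{(T)}}(\varepsilon)=\mathbb E\bigl[\delta_{L^{(T)}\mid\mathcal F_{T-1}}(\varepsilon-S_{T-1})\bigr]\le \mathbb E\bigl[\delta_{\tilde L^{(T)}}(\varepsilon-S_{T-1})\bigr].
\]
The inequality uses the hypothesis for step $T$, which holds uniformly over the conditioning. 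This uniformity is exactly what Theorem~\ref{thm:pld-linmix} supplies, since its surrogate does not depend on $\mathcal F_{T-1}$. Because $\tilde L^{(T)}$ is a fixed distribution that we may take independent of everything else, the right-hand side equals $\delta_{S_{T-1}+\tilde L^{(T)}}(\varepsilon)$. Applying the identity in its other form, this is $\mathbb E_{\tilde L^{(T)}}[\delta_{S_{T-1}}(\varepsilon-\tilde L^{(T)})]$. The induction hypothesis, valid at every real shift, then gives the bound $\mathbb E_{\tilde L^{(T)}}[\delta_{\tilde S_{T-1}}(\varepsilon-\tilde L^{(T)})]=\delta_{\tilde S_T}(\varepsilon)$.

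\textbf{Main obstacle.} The delicate step is justifying the first equality in the adaptive setting. $S_{T-1}$ and $L^{(T)}$ are not independent: only the conditional law of $L^{(T)}$ given $\mathcal F_{T-1}$ is controlled. I would handle this with the tower property, using the convention that the privacy loss variables are evaluated under the same reference distribution at every step, as in the composition results of Appendix~\ref{app:composition theory}. Two points need checking. First, the hypothesis must be read as holding conditionally, uniformly over $\mathcal F_{T-1}$. Second, the replacement must be carried out from the last step backward. This order lets the already-replaced surrogates be treated as independent, which is what licenses the swap to the form $\mathbb E_{\tilde L^{(T)}}[\cdot]$. Finally, the second inequality of Corollary~\ref{cor:pld-linmix-compose} follows by the same argument in the reverse direction, with the roles of the two neighboring distributions exchanged.
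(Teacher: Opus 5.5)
Your proposal is correct and follows essentially the same route as the paper. Both arguments condition on the past and replace the last-step loss by its surrogate, using the hypothesis at the shifted level $\varepsilon - S_{T-1}$. They then use independence of the surrogate to swap the order and apply the induction hypothesis; the paper does this explicitly for $T=2$ and then groups $O_1 = L^{(1)}+\cdots+L^{(k-1)}$, $O_2 = L^{(k)}$. The only cosmetic difference is that the paper writes the outer expectation with the weight $e^{L^{(1)}}$, i.e., under the other neighboring distribution, whereas you take it directly under the law used to define $\delta_L$.
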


\begin{proof}
    For $T=2$, note
    $$\delta_{L^{(1)}+L^{(2)}}(\varepsilon)=\mathbb E[e^{L^{(1)}}\mathbb E[(e^{L^{(2)}}-e^{\varepsilon-L^{(1)}})^+]\mid F_1]=\mathbb E[e^{L^{(1)}}\delta_{L^{(2)}}(\varepsilon -L^{(1)})]\le\mathbb E[e^{L^{(1)}}\delta_{\tilde L^{(2)}}(\varepsilon -L^{(1)})]=\delta_{L^{(1)}+\tilde L^{(2)}}(\varepsilon).$$
By the construction, $\tilde L^{(2)}$ only depend on randomness in step 2 and thus is independent to $L^{(1)}$. So similarly, $\delta_{L^{(1)}+\tilde L^{(2)}}(\varepsilon)\le\delta_{\tilde L^{(1)}+\tilde L^{(2)}}(\varepsilon)$.
Thus
$$\delta_{L^{(1)}+L^{(2)}}(\varepsilon)\le\delta_{\tilde L^{(1)}+\tilde L^{(2)}}(\varepsilon).$$
This proves the result for $T=2$. 

Assume now that the statement holds for $T=k-1$. For $T=k$, let
$$
O_1=L^{(1)}+\cdots+L^{(k-1)},\qquad O_2=L^{(k)},
$$
and define $O_1',O_2'$ analogously. By the induction hypothesis, $\delta_{O_1'}(\varepsilon)\ge \delta_{O_1}(\varepsilon)$ for all $\varepsilon$. Applying the two-term result to $(O_1,O_2)$ and $(O_1',O_2')$ yields the claim for $T=k$. By induction, we finish the proof.
\end{proof}

Corollary \ref{cor:pld-linmix-compose} is a direct application of Lemma \ref{lem:pld-linmix-compose}. We finish the proof.
\end{proof}

\subsection{Proof of Proposition \ref{prop:rdp-vs-drv10-gaussian}} \label{app:proof:tighter}
\begin{proof}
WLOG, we assume $\frac{\Delta}{\sigma}=1$

\textbf{Step 1 Per-release RDP parameter.}
For the Gaussian mechanism (mean estimation with $\ell_2$-sensitivity $\Delta$ and noise
$Z\sim\mathcal{N}(0,\sigma^2)$), the per-release Rényi DP parameter at order $\alpha>1$ is
\[
\varepsilon_{\alpha} \;=\; \frac{\alpha\,\Delta^2}{2\sigma^2}.
\]

\textbf{Step 2 Per-release $(\varepsilon,\delta)$ from RDP.}
Since
\[
\varepsilon \;=\; \inf_{\alpha>1}\left\{\varepsilon_{\alpha}+\frac{\log(1/\delta)}{\alpha-1}\right\},
\]
We know
$$
\varepsilon =\frac{\Delta^2}{2\sigma^2}+\frac{\,\Delta}{\sigma}\sqrt{2\log(1/\delta)}
$$

\textbf{Step 3 composition in $(\varepsilon,\delta)$.}
Applying \citep[\textnormal{Theorem~III.3}]{dwork2010boosting} to $N$ releases, each satisfying
$(\varepsilon,\delta)$-DP, yields a composed guarantee $(\varepsilon_{\mathrm{com}},\delta')$ with
\[
\delta' \;=\; N\delta+\delta_0,
\]
and
$$\varepsilon_{\mathrm{com}}=\varepsilon\sqrt{2N\log(1/\delta_0)}+N\varepsilon (e^{\varepsilon}-1)$$.

\textbf{Step 4 Joint RDP accounting.}
By additivity of RDP under composition, we know
\[
\varepsilon_{\mathrm{RDP}}
\;=\;
\inf_{\alpha>1}\left\{N\varepsilon_{\alpha}+\frac{\log(1/\delta')}{\alpha-1}\right\}=\inf_{\alpha>1}\left\{\frac{N\alpha\,\Delta^2}{2\sigma^2}+\frac{\log(1/\delta')}{\alpha-1}\right\}
\]
Thus
$$
\varepsilon_{\mathrm{RDP}}=\frac{N\Delta^2}{2\sigma^2}+\frac{\Delta}{\sigma}\sqrt{2N\log(1/\delta')}.
$$
We only need to prove 
$$
 \frac{N\Delta^2}{2\sigma^2}+\frac{\Delta}{\sigma}\sqrt{2N\log(1/\delta')} \leq \varepsilon\sqrt{2N\log(1/\delta_0)}+N\varepsilon (e^{\varepsilon}-1).
$$
Let $t=\frac{\Delta}{\sigma}, s=\sqrt{2\log(1/\delta)}$.
Bringing
$$
\varepsilon =\frac{1}{2}t^2+st
$$
into the inequality, and use $e^\varepsilon>1+\varepsilon$, since $e^\varepsilon>1+\varepsilon$, we only need to prove
$$
\frac{N}{2}t^2+t\sqrt{2N\log(1/\delta')} \leq (\frac{1}{2}t^2+st)\sqrt{2N\log(1/\delta_0)}+N(\frac{1}{2}t^2+st)^2
$$

Let $A\triangleq \sqrt{2N\log(1/\delta_0)}$ and $C\triangleq \sqrt{2N\log(1/\delta')}$.
Since $\delta' = N\delta+\delta_0\ge \delta_0$, we have $C\le A$. Therefore the left-hand side is at most $\frac{N}{2}t^2+tA$, and it is enough to show
\begin{equation}\label{eq:stronger}
\frac{N}{2}t^2+tA
\;\le\;
\Big(\frac12 t^2+st\Big)A
\;+\;
N\Big(\frac12 t^2+st\Big)^2 .
\end{equation}

Rearranging \eqref{eq:stronger}, the gap equals
\begin{align*}
&\Big(\frac12 t^2+st-t\Big)A
\;+\;
N\Big[\Big(\frac12 t^2+st\Big)^2-\frac12 t^2\Big] \\
&=
\Big(\frac12 t^2+(s-1)t\Big)A
\;+\;
N\Big(\frac14 t^4+s t^3+(s^2-\frac12)t^2\Big).
\end{align*}
Since $\delta\le 1/2$, we have $s=\sqrt{2\log(1/\delta)}\ge \sqrt{2\log 2}>1$,
hence both $s-1\ge 0$ and $s^2-\frac12\ge 0$. Since $t\ge 0$ and $A\ge 0$, every term on the
right-hand side is nonnegative, so the gap is $\ge 0$. We finish the proof.
\end{proof}

\section{Compare to Other Model Merging Methods}\label{app:model-merging}
Model merging has emerged as a widely used technique for combining multiple trained models, but focuses on non-private settings. In our paper, model merging refers to merging multiple trained models in parameter space.
Existing model merging methods can be broadly categorized into: (i) simple averaging methods such as model soups \citep{wortsman2022model}, which directly average model parameters with data-independent weights; our LC method is closest to this family, and privacy analysis is relatively straightforward in this case; (ii) data-dependent weighted averaging methods such as Fisher-weighted averaging \citep{matena2022merging}, which are similar to LC but typically require access to data to determine the weights, making privacy accounting more complex; (iii) parameter-dependent merging methods such as TIES ~\citep{yadav2023ties}, which determine the merge at the parameter level based on properties of the parameters themselves (e.g., sparsity or sign agreement); these methods often involve nonlinear operations, which make the merged model harder to analyze from a privacy perspective; and (iv) checkpoint merging methods such as SWA \citep{izmailov2018averaging}. 
Intuitively, in the DP-SGD setting, checkpoint merging usually does not provide a privacy guarantee better than the least private checkpoint, since later checkpoints already accumulate all privacy loss from earlier training steps, and merging checkpoints mainly changes how these updates are combined rather than reducing the underlying privacy cost, as discussed in Appendix~\ref{subsec:rs-same-run}. As a result, methods such as SWA can be useful for improving model utility, but are less suitable for our setting, where the goal is to achieve a better privacy-utility tradeoff. 

\section{Experiment Details}\label{app:experiment details}
We use 42 for random seed in all experiments. The main experiments were run on a single NVIDIA RTX 4090 GPU, and the total runtime varied from a few hours to one day across different datasets and model architectures.
\subsection{MNIST}
\paragraph{Dataset and Model.}
We use MNIST with the standard train/test split ($60{,}000/10{,}000$). Images are normalized. We train a logistic regression classifier:
\[
\texttt{Flatten} \rightarrow \texttt{Linear}(28\!\times\!28  \rightarrow 10),
\]
optimized with cross-entropy loss.
We report test-set classification accuracy (and loss) after each epoch and use the final test accuracy
for Pareto-frontier plots.
\paragraph{DP-SGD Implementation.}
We use the Opacus package \citep{yousefpour2021opacus} to train with DP-SGD.
The optimizer is SGD. In our experiments, we train three base models with the following hyperparameters:
\[
\text{max learning rate } 4,\quad C=2,\quad \sigma=32,\quad T=20\ \text{epochs};
\]
\[
\text{max learning rate } 4,\quad C=4,\quad \sigma=32,\quad T=20\ \text{epochs};
\]
\[
\text{max learning rate } 4,\quad C=2,\quad \sigma=64,\quad T=20\ \text{epochs}.
\]
We use $N=60{,}000$ training samples and full batch.
We use a linear learning-rate schedule with a $10\%$-epoch warmup followed by linear decay; see the code for implementation details.

\subsection{CIFAR-10}
\paragraph{Dataset and Model.}
We use CIFAR-10 with the standard split ($50{,}000/10{,}000$). We apply standard data preprocessing.
We use ResNet18 with CIFAR-10 adaptations:
the first convolution is replaced by a $3\times 3$ stride-$1$ convolution and the max-pooling layer is
removed. The final fully-connected layer is set to output $10$ classes.
We report test-set accuracy (and loss) after each epoch; the final test accuracy is used for the
accuracy--privacy Pareto-frontier plots.


\paragraph{DP-SGD from a Pretrained Model Initialization.}
In addition to training from scratch, we also consider a pretraining-based setup on CIFAR-10 to better reflect practical workflows. We randomly split the original training set into two disjoint subsets of equal size, denoted by $D_{\mathrm{pre}}$ and $D_{\mathrm{priv}}$. We first train a non-private ResNet18 model on $D_{\mathrm{pre}}$ using standard SGD, and then use the resulting checkpoint as initialization for DP-SGD on $D_{\mathrm{priv}}$. All merged candidate models are produced from this common pretrained initialization. The privacy guarantee in this setting is with respect to $D_{\mathrm{priv}}$ only, since $D_{\mathrm{pre}}$ is disjoint from the private training subset and is not included in the DP accounting. We fix the randomness
of data split using a generator seed of $42$. We train the pretrained model using standard SGD with batch size $B=128$ and learning rate $\eta=0.1$, and train for $T=100$ epochs.

Starting from the pretrained checkpoint, we fine-tune the model on $D_{\mathrm{priv}}$ using DP-SGD in Opacus with the same implementation adjustments as above, including \texttt{ModuleValidator.fix(model)} and setting all ReLU layers to \texttt{inplace=False}. We use the SGD optimizer with per-sample gradient clipping norm $C$, noise multiplier $\sigma$, full batch, learning rate $0.1$, and train for $T=20$ epochs. As in the from-scratch experiments, we fix the randomness of data shuffling using a DataLoader generator seed of $42$. We generate candidate models by sweeping the learning rate $C \in \{2,4\}$, and $\sigma \in \{16,32\}$. 

\subsection{RS for Merging Checkpoints from the Same Run}
\label{subsec:rs-same-run}

We note that the limitation of applying LC to recycled checkpoints is that under DP-SGD and standard privacy
accounting, it cannot yield a privacy guarantee better than the least private checkpoint (typically the checkpoint
taken at the largest training step). To see why, consider two checkpoints $\theta_1$ and $\theta_2$ taken from the
same DP-SGD run at steps $t_1<t_2$. A linear merge can be written as
\[
\theta \;=\; \lambda\theta_1+(1-\lambda)\theta_2
\;=\;
\theta_1+(1-\lambda)(\theta_2-\theta_1).
\]
Since $\theta_2-\theta_1$ is exactly the cumulative update from step $t_1$ to $t_2$, this merge is essentially
equivalent to re-scaling the updates on the segment $t\in\{t_1+1,\ldots,t_2\}$ by a factor $(1-\lambda)$. However, standard DP accounting for DP-SGD is driven by the sampling
rate, clipping bound, noise multiplier, and number of steps, and is typically independent of the learning rate.
Therefore, such a post-processing does not reduce the privacy cost attributed to the steps up to
$t_2$, and the resulting guarantee cannot be tighter than that of the checkpoint $\theta_2$ itself.
In other words, while checkpoint recycling can improve utilities, linear combination over correlated checkpoints is not able to achieve tighter privacy  relative to the least private model, without additional assumptions.

We also consider the checkpoint-merging setting studied in \citet{indri2023can,shejwalkar2022recycling}, where the
candidate models are checkpoints from a single DP-SGD training run and therefore are generally not
independent. This distinction matters: our LC accounting relies on independence noises across the input mechanisms, whereas RS does not. Since RS only requires access to the individual privacy profiles, it remains applicable even when the inputs are correlated checkpoints.

We run experiments under DP-SGD on both an MNIST and CIFAR. 
We construct three candidate models by taking checkpoints at training steps $T\in\{1,2,3\}$ for MNIST and at training steps $T\in\{12,16,20\}$ for CIFAR-10 with batch size $256$, learning rate $0.1$ for MNIST and $0.01$ for CIFAR, $\sigma=0.5$ for MNIST and $\sigma=0.6$ for CIFAR. The results are shown in
Figure~\ref{fig:pare-same-run}. We observe that RS still achieves a favorable privacy/utility tradeoff in this
correlated-checkpoint regime.

\begin{figure}[h]
    \centering
    \begin{subfigure}[b]{0.4\textwidth}
        \centering
        \includegraphics[width=0.95\linewidth]{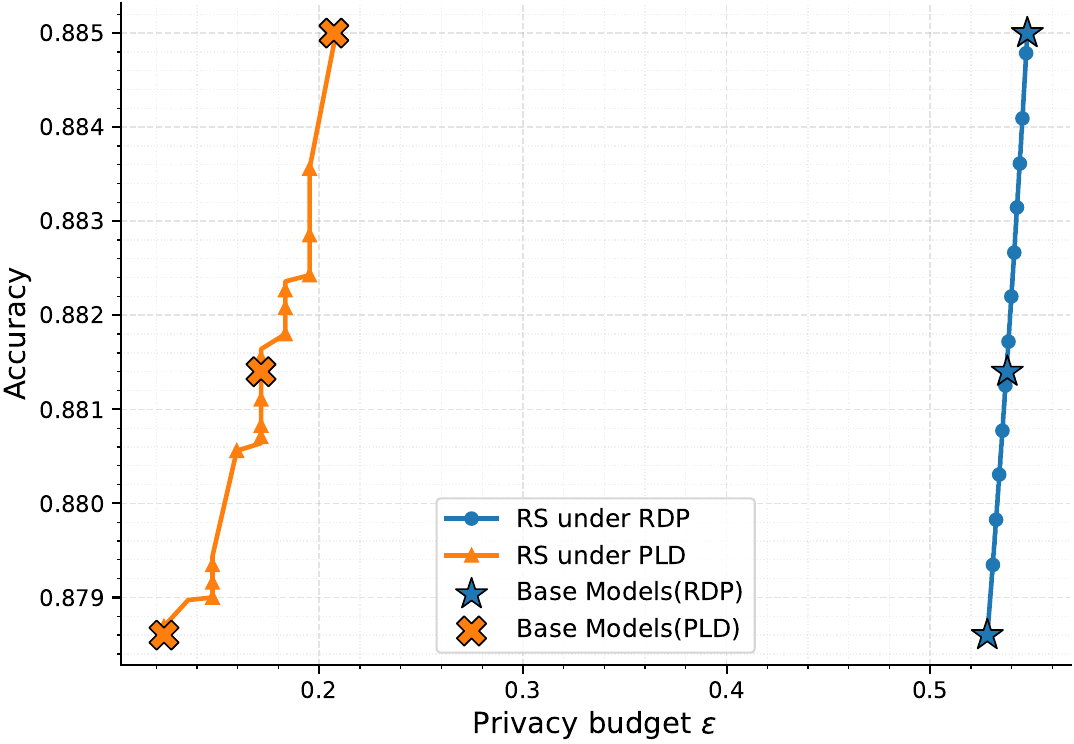}
    \caption{Privacy/utility tradeoffs on MNIST within same run ($\delta=10^{-5}$)}\label{fig:pare-MNIST-same-run}

    \end{subfigure}
    \hfill
    \begin{subfigure}[b]{0.4\textwidth}
        \centering
        \includegraphics[width=0.95\linewidth]{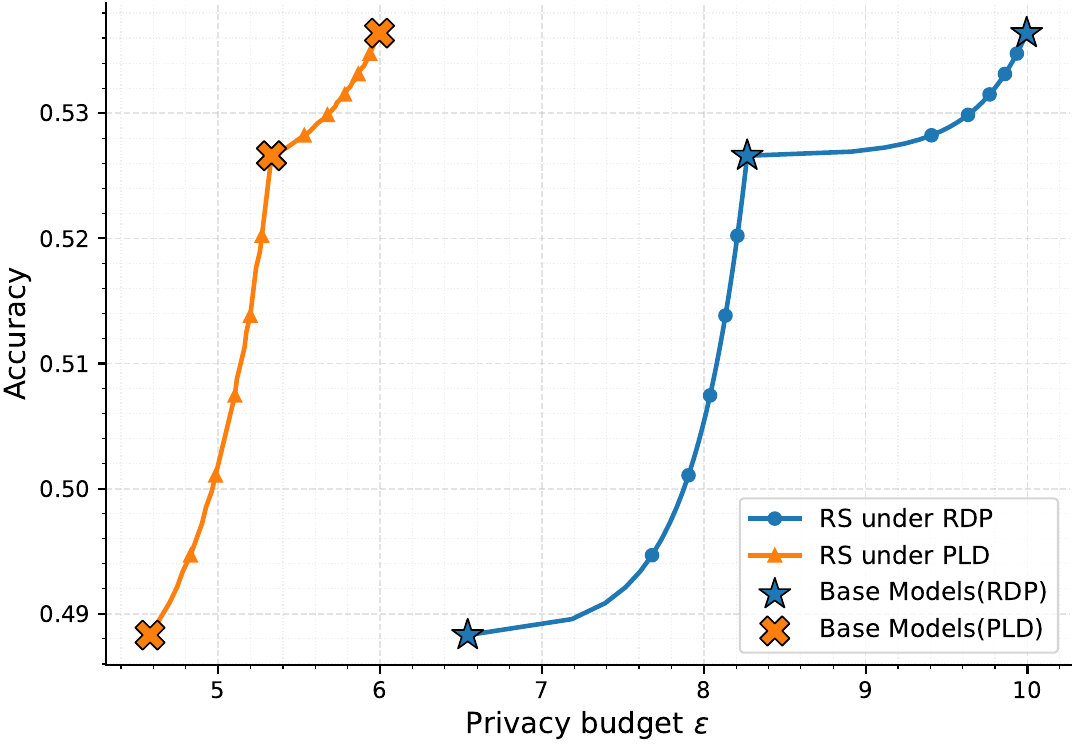}
    \caption{Privacy/utility tradeoffs on CIFAR-10 within same run ($\delta=10^{-5}$)}\label{fig:pare-CIFAR-same-run}
    \end{subfigure} 
    \caption{Privacy/utility tradeoffs of merging checkpoints from the same run.}\label{fig:pare-same-run}
\end{figure}

\subsection{More Experiment Results}
\begin{figure}[h]
    \centering
    \begin{subfigure}[b]{0.4\textwidth}
        \centering
        \includegraphics[width=\linewidth]{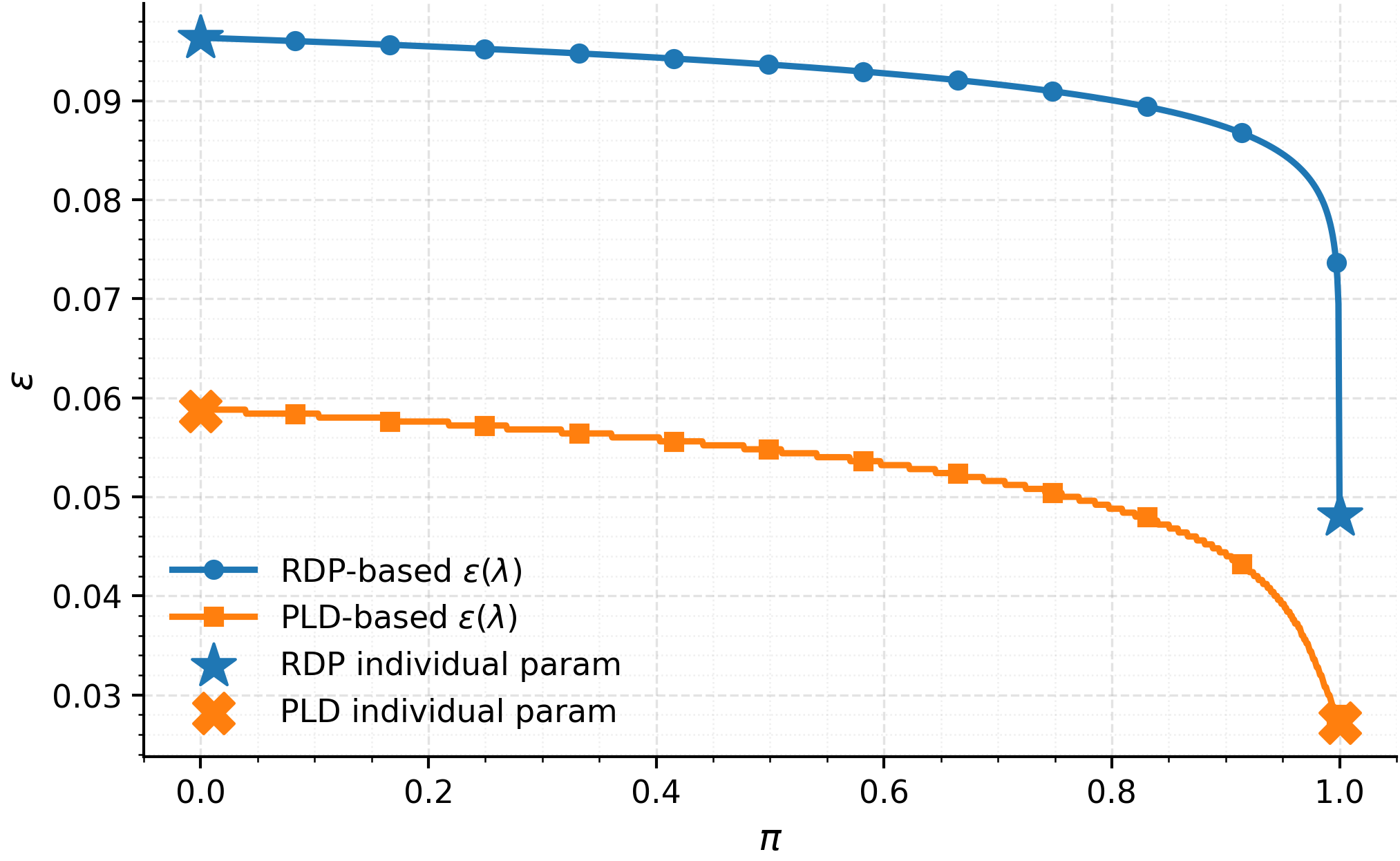}
        \caption{DP parameter as $\pi$ changes}

    \end{subfigure}
    \hfill
    \begin{subfigure}[b]{0.4\textwidth}
        \centering
        \includegraphics[width=\linewidth]{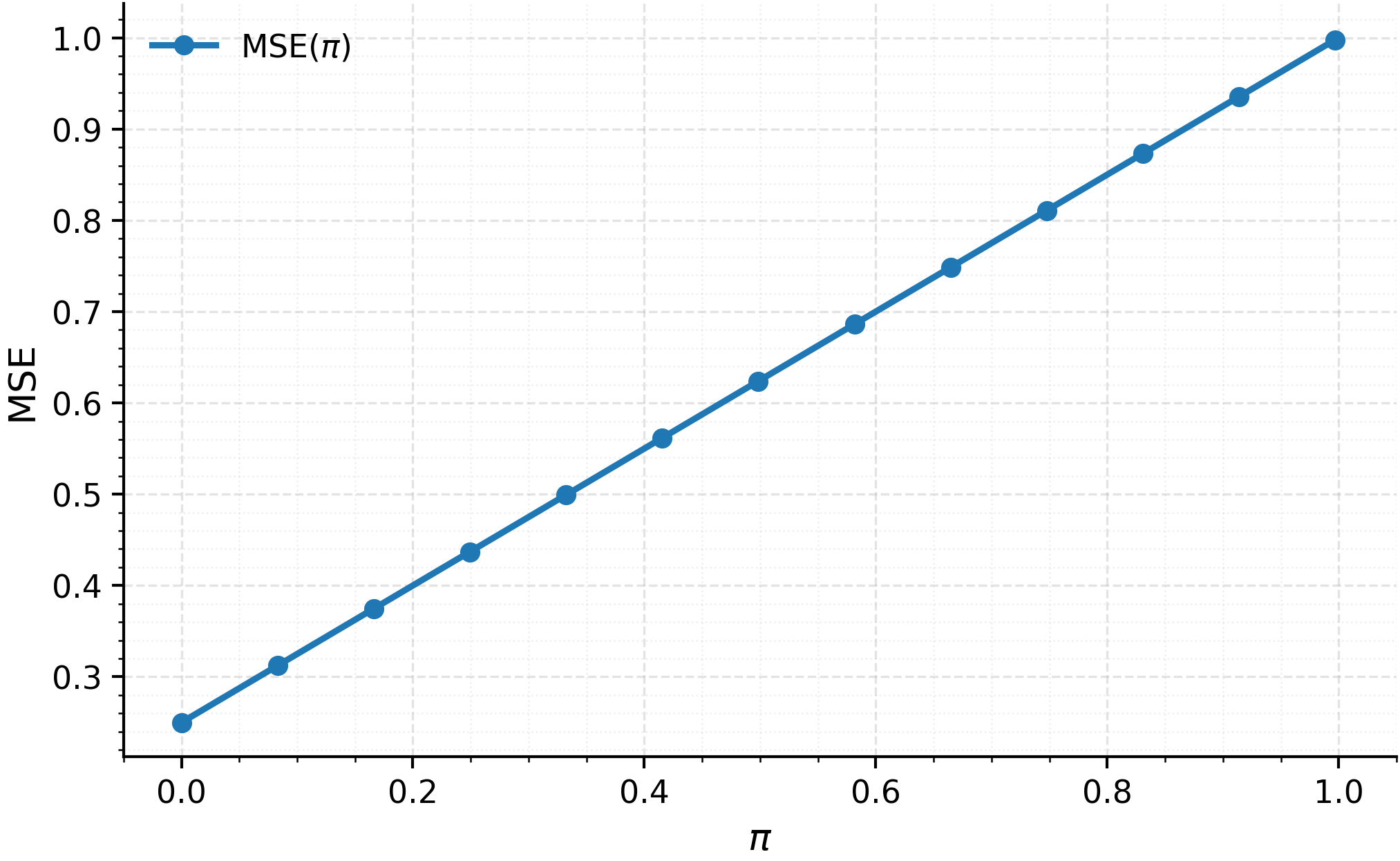}
        \caption{MSE as $\pi$ changes}
    \end{subfigure} 
    \caption{Random selection results for mean estimation with $\delta=10^{-5}$.} 
\end{figure}

\begin{figure}[h]
    \centering
    \begin{subfigure}[b]{0.4\textwidth}
        \centering
        \includegraphics[width=\linewidth]{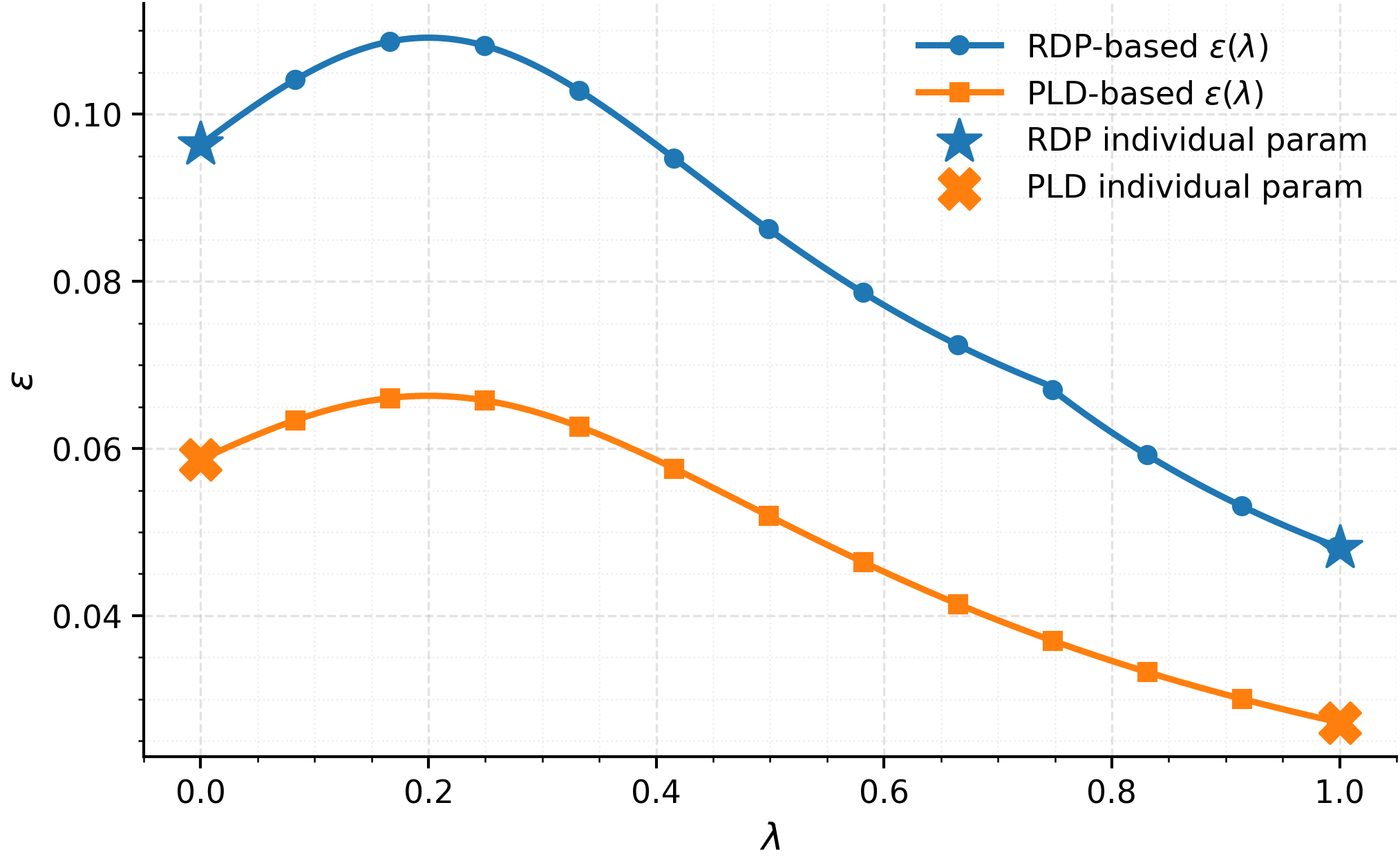}
        \caption{DP parameter as $\lambda$ changes}

    \end{subfigure}
    \hfill
    \begin{subfigure}[b]{0.4\textwidth}
        \centering
        \includegraphics[width=\linewidth]{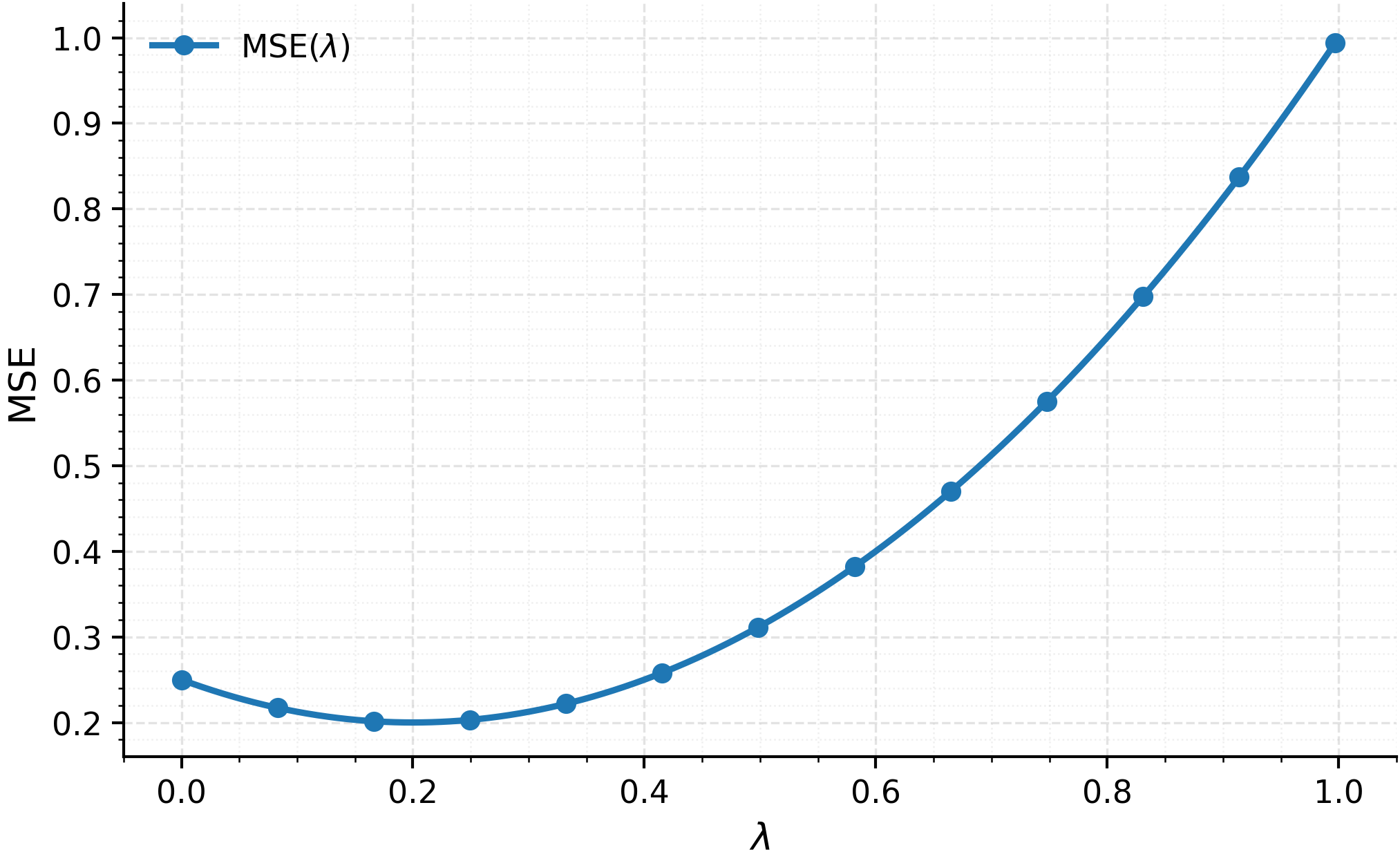}
        \caption{MSE as $\lambda$ changes}
    \end{subfigure} 
    \caption{Linear combination results for mean estimation with $\delta=10^{-5}$.}
\end{figure}


\end{document}